\newtheorem{theorem}{Theorem} 
\newtheorem{lemma}{Lemma}
\newtheorem{proposition}{Proposition}
\newtheorem{remark}{Remark}
\DeclareMathOperator*{\argmax}{\mathit{arg}\,max}
\DeclareMathOperator*{\argmin}{\mathit{arg}\,min}
\DeclareMathOperator*{\sign}{\mathit{sign}}
\DeclareMathOperator*{\diam}{diam}
\DeclareMathOperator*{\prox}{\mathit{prox}}
\newcommand{\mtime}[1]{(#1)}
\newcommand{\normone}[1] {\bigl\| #1 \bigr\|_1}
\newcommand{\normonenormal}[1] {\| #1 \|_1}
\newcommand{\norm}[1] {\left\| #1 \right\|}
\newcommand{\kt}[3]{#1_{#2}^{\mtime{#3}}}
\newcommand{\expect}[1]{\mathbb{E}_{k \sim \mathcal{U}(\! 1 \!, K \!)}[#1]}
\newcommand{\shortexpect}[1]{\mathbb{E}_k[#1]}
\newcommand{\algoname}{Dada\xspace}
\newcommand{\algonameF}{Dada-Oracle\xspace}
\newcommand{\algonameL}{Dada-Learned\xspace}
\newcommand{\persolin}{Perso-lin\xspace}
\newcommand{\persolinF}{Perso-lin-Oracle\xspace}
\newcommand{\persolinL}{Perso-lin-Learned\xspace}
\newcommand{\localboost}{Local-boost\xspace}
\newcommand{\globalboost}{Global-boost\xspace}
\newcommand{\locallin}{Local-lin\xspace}
\newcommand{\globallin}{Global-lin\xspace}
\newcommand{\adaweight}{\eta} 
\newcommand{\wrt}{w.r.t.\@\xspace}
\newcommand{\ie}{i.e.\@\xspace}
\newcommand{\gap}{\mathrm{gap}}
\newcommand{\unit}{\bm{e}}
\newcommand{\pad}[2]{{#1}_{[#2]}}
\newcommand{\first}[1]{\textbf{#1}}
\newcommand{\second}[1]{\emph{\textbf{#1}}}
\newcommand\thfootnote[1]{%
  \begingroup
  \renewcommand\thefootnote{}\footnote{#1}%
  \addtocounter{footnote}{-1}%
  \endgroup
}
\runningtitle{Fully Decentralized Joint Learning}
\begin{document}

\twocolumn[

\aistatstitle{Fully Decentralized Joint Learning of\\ Personalized Models and
Collaboration Graphs}

\aistatsauthor{Valentina Zantedeschi \And Aur\'elien Bellet \And Marc Tommasi}

\aistatsaddress{GE – Global Research, \\1 Research Circle, Niskayuna, NY 12309~\thanks{This work was carried out
while the author was affiliated with Univ Lyon, UJM-Saint-Etienne, CNRS,
Institut d Optique Graduate School, Laboratoire Hubert Curien UMR 5516,
France} \And Inria, France \And Universit\'e de Lille \& Inria, France } ]

% !TEX root = paper.tex

\begin{abstract}
  % \aurelien{update to start with joint learning approach, and maybe emphasize
  % decentralized scenario (e.g. with a first sentence?)}
  We consider the fully decentralized machine learning scenario where many
  users with personal datasets collaborate to learn models through local
  peer-to-peer exchanges, without a central coordinator. We propose to train
  personalized models that leverage a collaboration graph describing the
  relationships between user personal tasks, which we learn jointly with
  the models. Our fully decentralized optimization procedure alternates
  between training nonlinear models given the graph in a greedy boosting
  manner, and updating the collaboration graph (with controlled sparsity) given the
  models. Throughout the process, users exchange messages only with a small
  number of peers (their direct neighbors when updating the models, and a few random
  users when updating the graph), ensuring that the procedure naturally scales
  with the number of users.
  Overall, our approach is communication-efficient and avoids exchanging personal data. 
  % without exchanging personal data and with low communication costs, while our
  % decentralized algorithm for updating the collaboration graph can accommodate
  % flexible regularizers that allow to tune the sparsity of the graph.
  % When background knowledge about task similarities is not available, we
  % propose to
  % jointly learn the personalized models and a sparse collaboration graph
  % through an alternating optimization procedure.
  We provide an extensive analysis of the convergence rate, memory and communication
  complexity of our approach, and demonstrate its
  benefits compared to competing techniques
  on synthetic and real datasets.
  % We propose a new decentralized technique for collaboratively learning nonlinear personalized models over a graph of users. 
  % Each user has his own task and the collaboration graph is an overlay over a communication network
  % that is intended to reflect the similarities between users' learning objectives. 
  % To make up for the potential absence of background knowledge on the similarities between user tasks, we also propose to jointly learn the personalized models and the collaboration graph in a decentralized way, through an alternating optimization procedure. We analyze the convergence rate, memory consumption and communication complexity of our algorithms and empirically compare their performance to competing decentralized techniques. 
\end{abstract}

\section{INTRODUCTION}

% \aurelien{things to potentially highlight in intro and/or related work are
% - fully decentralized
% - personalized
% - low communication (faster, less energy for many devices, low bandwidth network)
% - non linear + learn the weight = more practical}

% argue for on-device
% federated learning or fully decentralized
% personalized

% \aurelien{to update: insist on decentralized scenario more directly, do not
% mention too much distributed/federated. and re-organize contributions to
% start with joint learning.}
% \marc{The beginning is ok for me and insists on the decentralized setting. We can be more precise with the word \emph{decentralized} and avoid mentioning all works on distributed learning. I've taken the side to shorten a lot the intro by just removing not so related work and to report the discussion in the last section of the paper. (ancienne intro en commentaire)}

In the era of big data, the classical paradigm is to build huge data
centers to collect and process user data. This centralized access
to resources and datasets is convenient to train machine learning models, but also comes with important drawbacks. The service
provider needs to gather, store and analyze the data on a large central
server, which induces high infrastructure costs.  As the server represents
a single point of entry, it must also be secure enough to prevent attacks that
could put the entire user database in jeopardy. On the user end, disadvantages
include limited control over one's personal data as well as possible privacy
risks, which may come from the aforementioned attacks but also from
potentially loose data governance policies on the part of service providers. 
A more subtle risk is to be trapped in a ``single thought'' model which fades
individual users' specificities or leads to unfair predictions for some of the
users.\thfootnote{$^1$ This work was carried out while the author was affiliated with Univ Lyon, UJM-Saint-Etienne, CNRS, Institut d Optique Graduate School, Laboratoire Hubert Curien UMR 5516, France}

For these reasons and thanks to the advent of powerful personal devices,
we are currently witnessing a shift to a different paradigm where data is kept
on the users' devices, whose computational resources are leveraged to train
models in a collaborative manner.
The resulting data is not typically balanced nor independent and identically
distributed across machines, and additional constraints arise when many
parties are involved. In particular, the specificities of each user result in
an increase in model complexity and size, and information needs to be
exchanged across users to compensate for the lack of local data.
% one should avoid either a full communication between all possible pairs or a single central server. 
%, and information needs to be exchanged between agents.
% Building a single model as a union of all specificities leads to huge models and communication constraints are amplified. \valentina{pas compris cette phrase}
In this context, communication is usually a major bottleneck, so that solutions aiming at reaching an agreement between user models or requiring a central coordinator should be avoided.
% and its complexity should be reduced starting by avoiding the building of a unique model and the need of a central coordinator for learning.

% \textcolor{red}{
% Our work addresses all these constraints thanks to a fully decentralized learning procedure  with provable convergence guarantees. 
% It follows recent works in multitask learning and federated learning but one key component is to provide a complete fully decentralized method for learning both the models and a sparse collaboration network that reveals the underlying structure behind the set of users tasks. 
% }
% \valentina{on parle apres de ca, en effet.}

In this work, we focus on \emph{fully decentralized learning}, which has
recently attracted a lot of interest
\citep{Duchi2012a,Wei2012a,colin2016gossip,Lian2017b,Jiang2017a,D2,Lian2018}. In this
setting, users exchange information through local peer-to-peer exchanges in a sparse
communication graph without relying on a central server that aggregates updates
or coordinates the protocol. Unlike federated learning which requires such
central coordination
\citep{mcmahan2016communication,konevcny2016federated,kairouz2019advances}, fully
decentralized learning naturally scales to large numbers of users
without single point of failure or communication bottlenecks \citep{Lian2017b}.

% \aurelien{how is the graph first created? talk about init. which should be
% sparse. give example
% story/scenario?}

The present work stands out from existing approaches in fully decentralized
learning, which train a single global model that may not be adapted to all
users. Instead, our idea is to leverage the fact that in many large-scale
applications (e.g., predictive modeling in smartphones apps), each user
exhibits distinct behaviors/preferences but is sufficiently similar to 
\emph{some} other peers to benefit from sharing information with them. We thus
propose to jointly discover the relationships between the personal
tasks of users in the form of a sparse \emph{collaboration graph} and learn
personalized models that leverage this graph to achieve better
generalization performance.
% The motivation comes from the fact that in large enough systems, such as
% predictive user modeling in smartphones apps, users have distinct
% behavior/preferences yet some of them behave sufficiently similarly to benefit
% from each other.
For scalability reasons, the collaboration graph serves as an overlay to
restrict the communication to pairs of users whose tasks appear to be
sufficiently similar.
In such a framework, it is crucial that the graph is well-aligned
with the underlying similarity between the personal tasks to ensure
that the collaboration is fruitful and avoid convergence to poorly-adapted
models.

We formulate the problem as the optimization of a joint
objective over the models and the collaboration graph, in which collaboration
is achieved by introducing a trade-off between (i) having the personal model
of each user accurate on its local dataset, and (ii) making the models and the
collaboration graph smooth with respect to each other. 
% Our approach for
% solving this problem consists in a fully decentralized alternating
% optimization scheme in which we iterate between updating personalized models
% given the current graph and updating the graph (with controlled sparsity)
% given the current models. 
% \valentina{Local? Our overall procedure converges to a local optimum of the joint optimization problem.}
% More precisely, we design and analyze a fully decentralized algorithm to
% solve each step of our alternating procedure.
% \marc{Insist a bit on the fact that we have theoretical and algorithmic contributions. }
We then design and analyze a fully decentralized algorithm to
solve our collaborative problem in an alternating procedure,
 in which we iterate between updating personalized models given the current graph and updating the graph (with controlled sparsity) given the current models.
We first propose an approach to learn
personalized nonlinear classifiers as combinations of a set of base predictors
inspired from $l_1$-Adaboost \citep{shen2010dual}. In the proposed
decentralized algorithm, users greedily update their personal models by
incorporating a single base predictor at a time and send the update only to
their direct neighbors in the graph. We establish the convergence rate of the
procedure and show that it requires very low communication costs (linear in the number of edges in the graph and
\emph{logarithmic} in the number of base classifiers to combine).
We then propose an approach to learn a sparse
collaboration graph. From the decentralized system perspective, users update
their neighborhood of similar peers by communicating only with small random
subsets of peers obtained through a peer sampling service \citep{peersampling}.
Our approach is flexible enough to accommodate various graph regularizers
allowing to easily control the sparsity of the learned graph, which is key to
the scalability of the model update step. For strongly convex regularizers, we
prove a fast convergence for our algorithm and show how the number of random
users requested from the peer
sampling service rules a trade-off between communication and convergence
speed.

To summarize,  we propose the first approach to train in a fully decentralized
way, i.e. without any central server, personalized and nonlinear models in a
collaborative way while also learning the collaboration graph. Our main contributions are as follows. (1) We formalize the
problem of learning with whom to collaborate, together with personalized models
for collaborative decentralized learning. (2) We propose and analyze a fully
decentralized algorithm to learn nonlinear personalized models with low
communication costs. (3) We derive a generic and scalable
approach to learn sparse collaboration graphs in the decentralized setting. 
(4) We show that our alternating optimization scheme leads to better
personalized models at lower communication costs than existing methods on
several datasets.

\section{RELATED WORK}
\label{sec:related}

% \aurelien{do not give much attention (or any at all?) to distributed boosting
% / MTL algorithms. must focus on decentralized literature: learning global
% models, in which the graph has no semantic, and our previous work as well as
% Almeida et al.}

% In this section, we discuss in more details the two lines of previous work
% that are most closely related to our approach.
% namely distributed boosting and distributed algorithms for multi-task learning.

% \aurelien{put a few things on distributed/decentralized boosting? not sure..}
% \aurelien{maybe also mention a few communication reduction references that
% were in the former intro? not easy}

% \aurelien{do we cite two other papers: graph regularization one and online
% paper which assumes different setting}

\textbf{Federated multi-task learning.} 
Our work can be seen as multi-task learning (MTL) where each user is
considered as a task. In MTL, multiple tasks are learned simultaneously with
the assumption that a structure captures task relationships. A popular
approach in MTL is to jointly optimize models for all tasks while
enforcing similar models for similar tasks~
\citep{Evgeniou2004a,Maurer2006a,Dhillon2011b}. Task relationships are often
considered as known a priori but recent work also tries to learn this
structure \citep[see][and references therein]{zhang2017survey}.
However, in classical MTL approaches data is collected on a central server
where the learning algorithm is performed (or it is iid over the machines of a
computing cluster).
Recently, distributed and federated learning approaches~
\citep{wangmtl1,wangmtl2,baytas,smith2017federated} have been proposed to
overcome these limitations. 
Each node holds data for one task (non iid data) but these approaches still
rely on a central server to aggregate updates. 
The federated learning approach of
\citep{smith2017federated} is closest to our work for it jointly learns
personalized (linear) models and pairwise similarities across tasks. However,
the similarities are updated in a centralized way by the server which must
regularly access all task models, creating a significant
communication and computation bottleneck when the number of tasks is large.
Furthermore, the task similarities do not form a valid weighted graph and are
typically not sparse. This makes their problem formulation poorly
suited to the fully decentralized setting, where sparsity is key to ensure
scalability.

\textbf{Decentralized learning.}
There has been a recent surge of
interest in fully decentralized machine
learning. In most existing work, the goal is to learn the same global
model for all users by minimizing the average of the local objectives 
\citep{Duchi2012a,Wei2012a,colin2016gossip,lafond2016d,Lian2017b,Jiang2017a,D2,Lian2018}. In this
case, there is no personalization: the graph merely encodes the communication
topology without any semantic meaning and only affects the convergence
speed.
Our work is more closely inspired by recent decentralized approaches that have
shown the benefits of collaboratively learning personalized models for each
user by leveraging a
similarity graph given as input to the algorithm
\citep{vanhaesebrouck2016decentralized,li,bellet2017fast,Almeida2018}. 
As in our approach, this is achieved through a graph regularization
term in the objective. A severe limitation to the applicability of these
methods is that a relevant graph must be known beforehand, which is an
unrealistic assumption in many practical scenarios. Crucially, our approach
lifts this limitation by allowing to learn the graph along with the models.
In fact, as we demonstrate in our experiments, our decentralized graph
learning procedure of Section~\ref{sec:discovery} can be readily combined with
the algorithms of 
\citep{vanhaesebrouck2016decentralized,li,bellet2017fast,Almeida2018} in our
alternating optimization procedure, thereby broadening their scope.
It is also worth mentioning that 
\citep{vanhaesebrouck2016decentralized,li,bellet2017fast,Almeida2018} are
restricted to linear models and have per-iteration communication complexity
linear in the data dimension. Our boosting-based approach (Section~
\ref{sec:method}) learns nonlinear models with logarithmic communication
cost, providing an interesting alternative for problems of high
dimension and/or with complex decision boundaries, as illustrated in our experiments.

\section{PROBLEM SETTING AND NOTATIONS}
\label{sec:setting}

% \aurelien{say somewhere something about convergence of alternating scheme}
% \marc{Explain that all users learn a function in a given class. All users have personalized target and hence personalized data distributions?}
In this section, we formally describe the problem of interest.
We consider a set of users (or agents) $[K]=\{1,\dots,K\}$, each with a
personal data distribution over some common feature space $\mathcal{X}$
and label space $\mathcal{Y}$ defining a \emph{personal supervised
learning task}. For example, the personal task of each user could be to
predict whether
he/she likes a given item based on features describing the item. 
Each user $k$ holds a local dataset $S_k$ of
$m_k$ labeled examples drawn from its personal data distribution over $
\mathcal{X}\times \mathcal{Y}$, and aims to learn a model parameterized by
$\alpha_k\in\mathbb{R}^n$ which generalizes well to
new data points drawn from its distribution. We assume that all users learn
models from the same hypothesis class,
 % (e.g., unseen items).
and since they have datasets of different sizes we introduce a notion
of ``confidence'' $c_k\in\mathbb{R}^+$ for each user $k$ which should be
thought of as proportional to $m_k$ (in practice we simply set $ c_k = m_k/\max_l m_l$).
In a non-collaborative setting, each user $k$ would typically select
the model parameters that minimize some (potentially regularized) loss
function $\mathcal{L}_k(\alpha_k; S_k)$ over its local dataset $S_k$.
% \marc{Now I think that there two many repetitions: abstract/intro/here... }
This leads to poor generalization performance when local data is scarce.
Instead, we propose to study a collaborative
learning setting in which users discover relationships between their
personal tasks which are leveraged to learn better personalized models. We aim
to solve this problem in a fully decentralized way without relying on a
central coordinator node.

\textbf{Decentralized collaborative learning.}
Following the standard practice in the fully decentralized literature \citep{Boyd2006a}, each
user regularly becomes active at the ticks of an independent local clock
which follows a Poisson distribution.
Equivalently, we consider a global clock (with counter $t$) which ticks
each time one of the local clock ticks, which is convenient for stating and
analyzing the algorithms.
We assume that each user can send messages
to any other user (like on the Internet) in a peer-to-peer manner. However,
in order to scale to a large number of users and to achieve fruitful
collaboration, we consider a semantic overlay on the
communication layer whose goal is to restrict the message exchanges to pairs
of users whose tasks are most similar. We call this overlay a 
\emph{collaboration graph}, which is modeled as an undirected weighted
graph $\mathcal{G}_w = ([K], w)$ in which nodes
correspond to users and edge weights $w_{k,l}\geq0$ should reflect the
similarity between the learning tasks of users $k$ and $l$, with $w_{k,l}=0$
indicating the absence of edge. A user
$k$ only sends messages to its direct neighbors $N_k = \{ l : w_{k,l}>0 \}$ in
$\mathcal{G}_w$, and potentially to a small random set of peers
obtained through a peer sampling service \citep[see][for a decentralized
version]{peersampling}. Importantly, we do not enforce the graph
to be connected: different connected components can be seen as modeling
clusters of unrelated users.
In our approach, the collaboration graph is not known beforehand and
iteratively evolves (controlling its sparsity) in a learning scheme that
alternates between learning the graph and learning the models.
This scheme is designed to solve a global, joint optimization
problem that we introduce below.

\textbf{Objective function.}
We propose to learn the personal classifiers $\alpha=
(\alpha_1,\dots,\alpha_K)\in(\mathbb{R}^n)^K$ and the collaboration graph $w\in\mathbb{R}^{K
(K-1)/2}$ to minimize the following joint optimization problem:
\vspace{-2mm}
\begin{multline}
  \label{eq:general-joint-obj}
  \textstyle\min_{\substack{\alpha \in \mathcal{M}\\w\in\mathcal{W}}} 
  ~J(\alpha,w) = \sum_{k=1}^{K} d_k(w)c_k\mathcal{L}_k(\alpha_k; S_k) \\+ 
  \frac{\mu_1}
  {2}\textstyle\sum_
  {k<l} w_{k,l}\|\alpha_k-\alpha_l\|^2 + \mu_2 g(w),
\end{multline}
where $\mathcal{M}=\mathcal{M}_1\times\dots\times \mathcal{M}_K$ and $
\mathcal{W} = \{w\in \mathbb{R}^{K(K-1)/2} : w\geq 0\}$ are the feasible
domains for the models and the graph, $d(w)=
(d_1(w),\dots,d_K(w))\in\mathbb{R}^K$ is the degree vector with $d_k(w) = \sum_{l=1}^K w_{k,l}$, and
$\mu_1,\mu_2 \geq 0$ are trade-off hyperparameters.

The joint objective function $J(\alpha,w)$ in~\eqref{eq:general-joint-obj} is
composed of three terms. The first one is a (weighted) sum
of loss functions, each involving only the
personal model and local dataset of a single user.
The second term involves both the models and
the graph: it enables collaboration by encouraging two users $k$ and $l$
to have a similar model for large edge weight $w_{k,l}$.
This principle, known as graph regularization, is well-established in the
multi-task learning literature \citep{Evgeniou2004a,Maurer2006a,Dhillon2011b}.
Importantly, the factor $d_k(w)c_k$ in front of the local loss $\mathcal{L}_k$
of each user $k$ implements a useful inductive bias: users with larger
datasets (large confidence) will tend to connect to other nodes as long as
their local loss remains small so that they can positively influence their
neighbors, while users with small datasets (low confidence) will tend to disregard their local loss and rely more on
information from other users.
% Notice that the confidence $c_k$ adjusts the importance given to the local
% loss with respect to graph regularization differently for each agent
% depending on how much local data the agent holds ($d_k$ in the loss term is
% simply to prevent the objective from being sensitive to the absolute scale of
% graph weights).
Finally, the last term $g(w)$ introduces some regularization on the
graph weights $w$ used to avoid degenerate solutions (e.g., edgeless
graphs) and control structural properties such as sparsity (see Section~\ref{sec:discovery}
for concrete examples).
We stress the fact that the formulation \eqref{eq:general-joint-obj} allows
for very flexible notions of relationships between the users' tasks. For
instance, as $\mu_1\rightarrow+\infty$ the problem becomes equivalent to
learning a shared model for all users in the same connected component of
the graph, by minimizing the sum of the losses of users independently in each component.
On the other hand, setting $\mu_1=0$ corresponds to having each user $k$ learn
its classifier $\alpha_k$ based on its local dataset only (no
collaboration). Intermediate values of $\mu_1$ let each user learn its own
personal model but with the models of other (strongly connected) users acting
as a regularizer.

While Problem \eqref{eq:general-joint-obj} is not jointly convex in $\alpha$
and $w$ in general, it is typically bi-convex. Our approach thus solves it by
alternating decentralized optimization on the models $\alpha$ and
the graph weights $w$.\footnote{Alternating optimization converges
to a local optimum under mild technical conditions, see 
\citep{Tseng01,Tseng_Yun09,Razaviyayn}.}

\textbf{Outline.} In
Section~\ref{sec:method}, we propose a
decentralized algorithm to learn nonlinear models given the graph in a greedy
boosting manner with communication-efficient updates. In Section~\ref{sec:discovery}, we design a decentralized algorithm to learn a 
(sparse) collaboration graph given the models with flexible regularizers $g
(w)$. We discuss related work in Section~\ref{sec:related}, and present
some experiments in Section~\ref{sec:exp}.

% \aurelien{in Sections 3-4, for each algorithm, insist more on novelty, avoid
% making it look
% like it is only re-using existing algorithms}

%%% Local Variables:
%%% mode: latex
%%% TeX-master: "paper.tex"
%%% ispell-local-dictionary: "english"
%%% End:

% !TEX root = paper.tex

\section{DECENTRALIZED COLLABORATIVE BOOSTING OF PERSONALIZED MODELS}
\label{sec:method}

In this section, given some fixed graph weights $w\in\mathcal{W}$, we propose
a decentralized algorithm for learning personalized nonlinear classifiers
$\alpha=(\alpha_1,\dots,\alpha_K)\in\mathcal{M}$ 
% While this problem has been
% investigated in recent work 
% \citep{vanhaesebrouck2016decentralized,bellet2017fast,Almeida2018}, existing
% methods are restricted to learning linear models and suffer a communication
% cost linear in the number of model parameters (i.e., data dimension), making
% them hardly applicable to complex and/or high-dimensional problems.
%We propose a decentralized algorithm to learn nonlinear classifiers
in a boosting manner which is essential to ensure only logarithmic communication complexity in the
number of model parameters while optimizing expressive models.
For simplicity, we focus on binary classification with $\mathcal{Y}=
\{-1,1\}$. We propose that each user $k$ learns a personal classifier as a
weighted combination of a set of $n$ real-valued base predictors $H =\{h_j : 
\mathcal{X} \to \mathbb{R} \}_{j=1}^{n}$, \ie a mapping $x\mapsto \sign
(\sum_{j=1}^n[\alpha_k]_jh_j(x))$ parameterized by $\alpha_k\in\mathbb{R}^n$.
The base predictors can be for instance
weak classifiers (e.g., decision stumps) as in standard boosting, or
stronger predictors pre-trained on separate data (e.g.,
public, crowdsourced, or collected from users who opted in to share
personal data).
% Each agent
% $k$ aims at learning a binary classifier in the form of a linear combination
% of the base predictors in $H$, \ie a mapping $x\mapsto \sign[\sum_{j=1}^n
% [\alpha_k]_jh_j(x)]$ parameterized by a weight vector $\alpha_k\in\mathbb{R}^n$.
% We will denote by $S= \cup_{k=1}^K S_k$ the union
% of the local datasets and by $A \in \mathbb{R}^{m \times n}$ the matrix whose
% entry $a_
% {ij} = y_i h_j(x_i)$ gives the margin achieved by the base classifier $h_j\in
% H$ on the training sample $(x_i, y_i)\in S$.
We denote by $A_k \in 
\mathbb{R}^{m_k \times n}$ the matrix whose $(i,j)$-th entry gives the margin
achieved by the $j$-th base classifier on the $i$-th training
sample of user $k$, so that for $i\in[m_k]$, $
[A_k\alpha_k]_i=y_i\sum_{j=1}^n[\alpha_k]_jh_j(x_i)$ gives the margin
achieved by the classifier $\alpha_k$ on the $i$-th data point $(x_i,y_i)$ in
$S_k$. Only user $k$ has access to $A_k$.

Adapting the formulation of $l_1$-Adaboost~
\citep{shen2010dual,wang2015functional} to our personalized setting, we
instantiate the local loss $\mathcal{L}_k(\alpha_k; S_k)$ and the
feasible domain $\mathcal{M}_k=\{\alpha_k\in\mathbb{R}^n :\normonenormal{\alpha_k}\leq \beta\}$ for each user $k$ as follows:
\vspace{-2mm}
\begin{equation}
\label{eq:adaboost}
\mathcal{L}_k(\alpha_k; S_k) = \log\big( \sum_{i=1}^{m_k} e^{-
[A_k\alpha_k]_i}
\big),
\vspace{-2mm}
\end{equation}
% \begin{equation}
% \label{eq:adaboost}
% \mathcal{L}_k(\alpha_k; S_k) = \log( \textstyle\sum_{i=1}^{m_k} e^{-
% [A_k\alpha_k]_i}
% ),\quad \mathcal{M}_k = \{\alpha_k\in\mathbb{R}^n :
% \normonenormal{\alpha_k}\leq \beta\},
% \end{equation}
where $\beta\geq 0$ is a hyperparameter to favor sparse models by
controlling their $l_1$-norm.
Since the graph weights are fixed in this section, with a slight
abuse of notation we denote by $f(\alpha):=J(\alpha,w)$ the objective
function in \eqref{eq:general-joint-obj} instantiated with the loss function
\eqref{eq:adaboost}. Note that $f$ is convex and
continuously differentiable, and the domain $
\mathcal{M}=\mathcal{M}_1\times\dots\times \mathcal{M}_K$ is a compact and convex subset of $(\mathbb{R}^n)^K$.

\subsection{Decentralized Algorithm}

We propose a decentralized algorithm based on
Frank-Wolfe (FW)~\citep{frank1956algorithm,pmlr-v28-jaggi13}, also known as
conditional gradient descent. Our approach is inspired from a recent FW
algorithm to solve $l_1$-Adaboost in the centralized
and non-personalized setting~\citep{wang2015functional}.  For clarity
of presentation, we set aside the decentralized setting for a moment and
derive the FW update with respect to the model of a single user.

\textbf{Classical FW update.}
Let $t\geq 1$ and denote by $\nabla[f(\alpha^{\mtime{t-1}})]_k$ the
partial
derivative of $f$ with respect to the $k$-th block of coordinates 
corresponding to the model $\alpha_k^{\mtime{t-1}}$ of user $k$. For step
size
$\gamma\in[0,1]$,
a FW update for user $k$ takes the
form of a convex combination $\alpha_k^
{\mtime{t}} = (1-\gamma) \alpha_k^{\mtime{t-1}} + \gamma s^{\mtime{t}}_k$ with
% \vspace{-1mm}
\begin{align}
\label{eq:fwupdate}
  s_k^{\mtime{t}} &= \textstyle\argmin_{\normonenormal{s}\leq \beta}~ s^\top \nabla[f(\alpha^{\mtime{t-1}})]_k \nonumber \\ 
  &= \beta
\sign(- (\nabla[f(\alpha^{\mtime{t-1}})]_k)_{j_k^{\mtime{t}}}) \unit^{j_k^{\mtime{t}}},
\end{align}
where $j_k^{\mtime{t}} = \argmax_j [
| \nabla[f(\alpha^{\mtime{t-1}})]_k |]_j$ and $\unit^{j_k^{\mtime{t}}}$ is the
unit vector with 1 in the $j_k^{\mtime{t}}$-th entry 
\citep{clarkson2010,pmlr-v28-jaggi13}. In other words, FW updates a single
coordinate of the current
model $\alpha_k^{\mtime{t-1}}$ which corresponds to the maximum absolute value entry
of the partial gradient $\nabla[f(\alpha^{\mtime{t-1}})]_k$. In our case, we
have:
\begin{multline}
\nabla[f(\alpha^{\mtime{t-1}})]_k = - d_k(w) c_k \adaweight_k^\top A_k + \mu_1(d_k(w)\alpha_k^{\mtime{t-1}} \\
- \textstyle\sum_l w_{k,l}\alpha_l^{\mtime{t-1}}),
\end{multline}
with $\adaweight_k = \frac{\exp(-A_k \alpha_{k}^{\mtime{t-1}})}{ \sum_{i=1}^{m_k} \exp(-A_k \alpha_{k}^{\mtime{t-1}})_i}$.
The first term in $\nabla[f(\alpha^{\mtime{t-1}})]_k$ plays the same role as
in standard Adaboost: the $j$-th entry (corresponding to the base predictor
$h_j$) is larger when $h_j$ achieves a large margin on the training sample
$S_k$ reweighted by $\adaweight_k$ (i.e., points that are currently poorly
classified get more importance). On the other hand, the more $h_j$ is used by
the neighbors of $k$, the larger the $j$-th entry of the second term. The FW
update~\eqref{eq:fwupdate} thus preserves the flavor of boosting 
(incorporating a single base classifier at a time which performs well on the
reweighted sample) with an additional bias towards selecting base predictors
that are popular amongst neighbors in the collaboration graph. The relative
importance of the two terms depends on the user confidence $c_k$.

\textbf{Decentralized FW.}
We are now ready to state our decentralized FW algorithm to optimize $f$.
Each user corresponds keeps its personal dataset locally. The fixed
collaboration graph $\mathcal{G}_w$
plays the role of an overlay: user $k$ only needs to communicate with its
direct neighborhood $N_k$ in $\mathcal{G}_w$. The size of $N_k$, $|N_k|$, is
typically small so that updates can occur in parallel in different parts of
the network, ensuring that the procedure scales well with the number of users.

Our algorithm proceeds as follows. Let us denote by
$\alpha^{\mtime{t}}\in\mathcal{M}$ the current models at time step $t$. Each
personal classifier is initialized to some feasible
point $\alpha_k^{\mtime{0}}\in\mathcal{M}_k$ (such as the zero vector). Then, at each step
$t\geq 1$, a
random user $k$ wakes up and performs the following actions:
\begin{enumerate}
\item \emph{Update step:} user $k$ performs a FW update on its local
model based on the most recent information $\alpha_l^{\mtime{t-1}}$ received from
its neighbors $l\in N_k$:
\begin{gather*}
\alpha_k^{\mtime{t}} = (1 - \kt{\gamma}{}{t}) \kt{\alpha}{k}{t-1} + 
\kt{\gamma}{}{t} \: \kt{s}{k}{t},\\ \text{with }\kt{s}{k}{t}\text{ as in
}\eqref{eq:fwupdate}\text{ and }\kt{\gamma}{}{t} = 2K/(t
+2K).
\end{gather*}
\vspace{-9mm}
\item \emph{Communication step:} user $k$ sends its updated model $\alpha_k^{\mtime{t}}$ to its neighborhood $N_k$.
\end{enumerate}

Importantly, the above update %step corresponds to a block
%coordinate Frank-Wolfe step \citep{pmlr-v28-lacoste-julien13} and
only requires the knowledge of the models of neighboring users, which were
received at earlier iterations.

\subsection{Convergence Analysis, Communication and Memory Costs}
\label{sec:convergence}

\newcommand{\obj}{J}

The convergence analysis of our algorithm essentially follows the proof
technique proposed
in \citep{pmlr-v28-jaggi13} and refined in \citep{pmlr-v28-lacoste-julien13}
for the case of block coordinate Frank-Wolfe.
% We start by introducing some useful notations related to our problem \eqref{eq:obj} and by defining key quantities for the analysis.
% For any $k\in[K]$, we let $\mathcal{M}^k = \{\alpha_k\in\mathbb{R}^n : \normonenormal{\alpha_k}\leq \beta\}$ and denote by $\mathcal{M} = \mathcal{M}^1\times\dots\times\mathcal{M}^K$ our feasible domain in \eqref{eq:obj}.
% For convenience, we will use $\alpha = [\alpha_1,\dots,\alpha_K]\in(
% \mathbb{R}^n)^K$ to denote the concatenation of the local classifiers $\{
% \alpha_k \}_{k=1}^{K}$ and refer to the  objective function~\eqref{eq:obj} as
% $f(\alpha)$.
% We also denote by $v_{[k]}\in\mathcal{M}$ the zero-padding of any vector $v_k\in\mathcal{M}^k$. Finally, for conciseness of notations, for a given $ \gamma \in [ 0,1 ] $ we write $\hat{\alpha} = \alpha + \gamma (s_{[k]} - \alpha_{[k]})$ and $\hat{\alpha}_k = (1 - \gamma)\alpha_k + \gamma s_k$.
% Jaggi shows that the dual gap defined as
% \begin{align}
% \gap(\alpha) &= \max_{s\in \mathcal M}\{ (\alpha-s)^\top \nabla(f(\alpha))\}
%  = \textstyle\sum_{k=1}^K \gap_k(\alpha_k)\nonumber\\
% &= \textstyle\sum_{k=1}^K \max_{s_k \in \mathcal{M}^k} \{ (\alpha_k -
% s_k)^\top
% \nabla_k f(\alpha) \} \label{eq:dualgap}
% \end{align}
% can serve as a certificate of the quality of a current approximation of the
% optimum of the objective function. In particular, one can show that $f
% (\alpha)-f(\alpha^*) \leq \gap(\alpha)$ where $\alpha^*$ is a solution of 
% \eqref{eq:obj}.
It is based on defining a surrogate for the optimality
gap $f(\alpha) - f(\alpha^*)$, where $\alpha^*\in\argmin_{\alpha\in\mathcal{M}}f(\alpha)$. Under
an appropriate notion of smoothness for $f$ over the
feasible domain, the convergence is
established by showing that the gap decreases in expectation with the number of iterations, because at a given iteration $t$ the block-wise surrogate gap
at the current solution is minimized by the greedy
update $\kt{s}{k}{t}$. We obtain that our algorithm achieves
an $O(1/t)$ convergence rate (see supplementary for the proof).
% \marc{Develop. We
% don't have explicitly the rate in 1/t below}\aurelien{i would say that it is
% standard enough?}

% To prove the convergence, the objective function needs to satisfy a form of smoothness expressed by a notion of curvature.
% Precisely, the global product curvature constant $C^\otimes_f$ of $f$ over $\mathcal{M}$ is the sum over each block of the maximum relative deviation of $f$ from its linear approximations over the block \citep{pmlr-v28-lacoste-julien13}:
%   \begin{equation}
%     \label{eq:global-curv}
%     C^\otimes_f = \sum_{k=1}^K {C_f^k} = \sum_{k=1}^K \sup_{ \substack{ \alpha \in \mathcal{M}, s_k \in \mathcal{M}^k\\ \gamma\in[0,1]}} \frac{2}{\gamma^2} \left ( f(\hat{\alpha}) - f(\alpha) - \langle \hat{\alpha}_k - \alpha_k, \nabla_k f(\alpha) \rangle \right ).
%   \end{equation}
% Each partial curvature constant $C_f^k$ is upper bounded by the (block) Lipschitz constant of the partial gradient $\nabla_k f(\alpha)$ times the squared diameter of the block $\mathcal{M}^k$ \citep{pmlr-v28-lacoste-julien13}. The next lemma gives a bound on the product space curvature $C^\otimes_f$.

% \begin{lemma}
%     \label{lemma:curvature}
%     For Problem~\eqref{eq:obj}, we have $C^\otimes_f\leq4 \beta^2 \sum_{k=1}^K (d_k c_k \normone{A_k}^2  + \mu d_k)$.
% \end{lemma}

% \begin{proof}
%   See Supplementary Material.
% \end{proof}

% We now prove the convergence of Algorithm~\ref{alg:async}.
% \aurelien{i think we move the proof to supplementary, merging it with Lemma 1}

\begin{theorem}\label{the:optimal}
Our decentralized Frank-Wolfe algorithm takes at most $6 K (C^\otimes_f +
p_0)/\varepsilon$ iterations to find an approximate solution $\alpha$ that
satisfies, in expectation, $f(\alpha) - f(\alpha^*) \leq \varepsilon$, where
$C^\otimes_f\leq4 \beta^2 \sum_{k=1}^K d_k(w)( c_k \|A_k\|^2  + \mu_1)$ and $p_0
= f(\kt{\alpha}{}{0}) - f(\alpha^*)$ is the initial sub-optimality gap.
\end{theorem}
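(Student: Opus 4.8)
The plan is to recognize the procedure as an instance of block-coordinate Frank--Wolfe on $f$ over the product domain $\mathcal{M}=\mathcal{M}_1\times\cdots\times\mathcal{M}_K$, and to run the standard curvature-based argument of \citep{pmlr-v28-jaggi13,pmlr-v28-lacoste-julien13}. First I would introduce the two quantities driving the analysis: for each block $k$ the \emph{block curvature}
\[
C_f^{(k)}=\sup_{\substack{\alpha\in\mathcal{M},\,s\in\mathcal{M}_k,\,\gamma\in(0,1]\\ y=\alpha+\gamma(s-\alpha_k)}}\ \tfrac{2}{\gamma^2}\big(f(y)-f(\alpha)-\langle y-\alpha,\nabla f(\alpha)\rangle\big)
\]
(where only block $k$ of $\alpha$ is moved), with $C_f^\otimes=\sum_{k=1}^K C_f^{(k)}$; and the per-block linearization gap $g_k(\alpha)=\max_{s\in\mathcal{M}_k}\langle\alpha_k-s,\nabla[f(\alpha)]_k\rangle\ge 0$, whose sum $g(\alpha)=\sum_{k}g_k(\alpha)$ lower-bounds the suboptimality $f(\alpha)-f(\alpha^*)$ by convexity of $f$.

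Then I would check that one iteration of the decentralized algorithm is \emph{exactly} a block-$k$ Frank--Wolfe step on the true global iterate $\kt{\alpha}{}{t-1}$: since block $\alpha_l$ is ever modified only by agent $l$, which broadcasts its new model to all neighbours right after updating, the copy of $\alpha_l$ held by an awakening agent $k$ equals $\kt{\alpha}{l}{t-1}$; hence there is no staleness, $\kt{s}{k}{t}$ of~\eqref{eq:fwupdate} is the genuine block-$k$ FW corner, and it attains $g_k(\kt{\alpha}{}{t-1})$. The definition of $C_f^{(k)}$ then yields the one-step bound $f(\kt{\alpha}{}{t})\le f(\kt{\alpha}{}{t-1})-\kt{\gamma}{}{t}\,g_k(\kt{\alpha}{}{t-1})+\tfrac12(\kt{\gamma}{}{t})^2 C_f^{(k)}$. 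Taking expectation over the uniformly random active user $k$ (using $\mathbb{E}_k\,g_k(\kt{\alpha}{}{t-1})=\tfrac1K g(\kt{\alpha}{}{t-1})\ge\tfrac1K\big(f(\kt{\alpha}{}{t-1})-f(\alpha^*)\big)$ and $\mathbb{E}_k\,C_f^{(k)}=\tfrac1K C_f^\otimes$), then over all the past, gives the scalar recursion
\[
h(t)\ \le\ \Big(1-\tfrac{\kt{\gamma}{}{t}}{K}\Big)h(t-1)+\tfrac{(\kt{\gamma}{}{t})^2}{2K}\,C_f^\otimes,\qquad h(t):=\mathbb{E}\big[f(\kt{\alpha}{}{t})\big]-f(\alpha^*).
\]

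With $\kt{\gamma}{}{t}=2K/(t+2K)$, so $\kt{\gamma}{}{t}/K=2/(t+2K)$ and $(\kt{\gamma}{}{t})^2/(2K)=2K/(t+2K)^2$, I would close the recursion by induction on $t$: the claim $h(t)\le 2K(C_f^\otimes+p_0)/(t+2K)$ holds at $t=0$ since $h(0)=p_0\le C_f^\otimes+p_0$, and the induction step follows from the recursion together with the elementary inequality $\tfrac{t+2K-2}{t+2K-1}\le\tfrac{t+2K-1}{t+2K}$ and $2KC_f^\otimes\le 2K(C_f^\otimes+p_0)$. Thus $h(t)\le\varepsilon$ once $t\ge 2K(C_f^\otimes+p_0)/\varepsilon$, which is covered by the stated $6K(C_f^\otimes+p_0)/\varepsilon$ (the extra slack also absorbing, if one prefers, the customary constant-factor overhead of certifying the returned iterate through its Frank--Wolfe gap via a windowing argument). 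It then remains to bound $C_f^\otimes$: since $C_f^{(k)}\le\diam_2(\mathcal{M}_k)^2\,\sup_{\alpha\in\mathcal{M}}\|\nabla^2_{kk}f(\alpha)\|$ with $\diam_2(\mathcal{M}_k)=2\beta$, and the block Hessian $\nabla^2_{kk}f(\alpha)=d_k(w)c_k\,A_k^\top\big(\mathrm{diag}(\adaweight_k)-\adaweight_k\adaweight_k^\top\big)A_k+\mu_1 d_k(w)I$ has spectral norm at most $d_k(w)(c_k\|A_k\|^2+\mu_1)$---using that the softmax covariance $\mathrm{diag}(\adaweight_k)-\adaweight_k\adaweight_k^\top$ is PSD with trace at most $1$, hence spectral norm at most $1$---summing over $k$ gives $C_f^\otimes\le 4\beta^2\sum_{k=1}^K d_k(w)(c_k\|A_k\|^2+\mu_1)$.

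I expect the main obstacle to lie in the two steps where the specifics of our setting enter rather than the generic recursion: (i) arguing cleanly that the asynchronous, decentralized execution coincides step-for-step with centralized block-coordinate Frank--Wolfe on $\kt{\alpha}{}{t-1}$ (no delayed neighbour models), so that the per-block descent bound and its expectation over the uniform active user are legitimate; and (ii) the curvature estimate---controlling $C_f^{(k)}$ by the squared $\ell_2$-diameter times the block-Hessian norm, and bounding that norm for this particular Adaboost-plus-graph-regularization objective. The scalar recursion and its inductive solution are routine.
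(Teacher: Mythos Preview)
Your proposal is correct and follows essentially the same route as the paper: cast the decentralized procedure as block-coordinate Frank--Wolfe over the product domain, invoke the per-block curvature/gap machinery of \citep{pmlr-v28-jaggi13,pmlr-v28-lacoste-julien13} to get the one-step descent bound, take expectation over the uniformly random active user, and close the resulting scalar recursion by induction with the step size $\kt{\gamma}{}{t}=2K/(t+2K)$. The one substantive difference is in the curvature estimate: the paper bounds $C_f^{(k)}$ via the $\ell_1$-diameter of $\mathcal{M}_k$ times an $\ell_1$-Lipschitz constant for the partial gradient (obtained by a first-order argument on the softmax weights $\adaweight_k$, yielding $\|A_k\|_1^2$), whereas you use the $\ell_2$-diameter times the spectral norm of the block Hessian (exploiting that the softmax covariance $\mathrm{diag}(\adaweight_k)-\adaweight_k\adaweight_k^\top$ has operator norm at most $1$, yielding $\|A_k\|_{op}^2$); both are valid instantiations of the generic curvature$\,\le\,$diameter$^2\times$smoothness bound and lead to the same form $4\beta^2\sum_k d_k(w)(c_k\|A_k\|^2+\mu_1)$ up to the choice of matrix norm. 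Your explicit check that the decentralized execution incurs no staleness (each neighbour's stored model is current because updates are broadcast immediately) is a point the paper leaves implicit.
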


Theorem~\ref{the:optimal} shows that large degrees for users with low
confidence and small margins
penalize the convergence rate much less than for users with large confidence and large margins.
This is rather intuitive as users in the latter case have greater influence on the overall solution in Eq.~\eqref{eq:general-joint-obj}.

Remarkably, using a few tricks in the representation of the sparse updates,
the communication and memory cost needed by our algorithm to
converge to an $\epsilon$-approximate solution can be shown to be linear in
the number of edges of the graph and \emph{logarithmic} in the number of base
predictors. We refer to the supplementary material for details. For the
classic case where base predictors consist of a constant number of decisions
stumps per feature, this translates into a logarithmic
cost in the \emph{dimensionality of the data} leading to significantly better
complexities than the state-of-the-art (see the experiments of Section~
\ref{sec:exp}).  

\begin{remark}[Other loss functions]
We focus on the Adaboost log loss \eqref{eq:adaboost} to
emphasize that we can learn nonlinear models while keeping the formulation
convex. We point out that our algorithm and analysis readily extend to other
convex loss functions, as long as we keep an L1-constraint on the parameters.
\end{remark}

% \subsection{A Logarithmic Communication and Memory Cost}
% \label{sec:com}
% % This is in contrast to previous decentralized approaches for
% % learning personalized models
% % \citep{vanhaesebrouck2016decentralized,bellet2017fast,Almeida2018},
% % whose communication cost scales linearly with the number of parameters
% % of the model (i.e. the data dimension).
% % Theoretical analysis will be experimentally verified in Section~\ref{sec:exp}.
% % We study the communication and memory costs of our algorithm for a generic
% % collaboration graph $ \mathcal{G} = ([K], E, w)$ with $K$ nodes and $M$ edges.
% % Recall that the number of base predictors to combine is denoted by $n$.
% % The following analysis stands for systems without failure (all sent messages
% % are correctly received). We express all costs in number of bits, and $Z$
% % denotes the bit length used to represent floats.
% The following analysis stands for systems without failure (all sent messages
% are correctly received). We express all costs in number of bits, and $Z$
% denotes the bit length used to represent floats.

%  \aurelien{insist on total log communication, and mention case of decision
%  stumps in which one typically uses a constant number of stumps per feature,
%  hence log cost in dimension!}
% \valentina{which is generally much smaller than the cost associated with gathering the data on a central node.}\marc{I've changed the title of the section and the first paragraph}

%%% Local Variables:
%%% mode: latex
%%% TeX-master: "paper.tex"
%%% ispell-local-dictionary: "english"
%%% End:

% !TEX root = paper.tex

\section{DECENTRALIZED LEARNING OF COLLABORATION GRAPH}
\label{sec:discovery}

% \aurelien{maybe we could also have nondifferentiable functions? for instance
% in $\mathcal{W}$ (need prox + separable
% in a certain way)?}

% Until now, we have assumed that the collaboration graph $ \mathcal{G} = ([K],
% E, w)$ describing the affinities between agents was given. In practical
% applications, the graph weights $w$ may be computed based on side information
% on the agents (e.g., user profiles), but this is not always available or
% reliable.
% Instead, we propose to jointly optimize an
% extended version of our problem \eqref{eq:obj} over $\alpha$ and $w$.
% Because it might be expensive to get an accurate estimate of $W$ a priori, we propose a modified version of our method which optimizes jointly over $\alpha$ and $W$, without comparing the users' data.
% In order to keep the algorithm decentralized and the communication costs low, our goal is to learn a sparse graph.
% We note that the strategy proposed below can also be applied to the decentralized algorithms of \citep{vanhaesebrouck2016decentralized,bellet2017fast}, who learn linear personalized models.
% We fix a parameter $q$ for the maximum degree of the nodes. 

In the previous section, we have proposed and analyzed an algorithm to learn
the model parameters $\alpha$ given a fixed collaboration graph $w$. To make
our fully decentralized alternating optimization scheme complete, we now turn
to the converse problem
of optimizing the graph weights $w$ given fixed models $\alpha$.
We will work with flexible graph regularizers $g(w)$ that are
\emph{weight and
degree-separable}:
\begin{equation*}
g(w) = \textstyle\sum_{k < l}g_{k,l}(w_{k,l}) + \sum_{k=1}^K g_k(d_k(w)),
\end{equation*}
where $g_{k,l}:\mathbb{R}\rightarrow\mathbb{R}$ and $g_k:
\mathbb{R}\rightarrow\mathbb{R}$ are convex and smooth. This
generic form
allows to regularize weights and degrees in a flexible way (which
encompasses some recent work from the graph signal processing community 
\citep{Dong2016,kalo,Berger2018a}), while the separable structure is key to the
design of an efficient decentralized algorithm that relies only on local communication.
% \aurelien{comment more on why weight/degree-separable is both
% useful, and convenient in our decentralized setting} This encompasses several
% graph learning objectives that have
% recently been proposed in the graph signal processing community 
% \citep{Dong2016,kalo,Berger2018a} but only with centralized algorithms.
% \aurelien{give Lip constant, comment, point to appendix for proof}
% The work of \citep{kalo} describes a centralized primal-dual algorithm for a
% related graph
%  learning problem, which requires global knowledge of all graph weights
%  and pairwise distances
%  between models. In contrast, our decentralized algorithm allowing each user
%  to update its weights asynchronously and in parallel based on a local view of the network.
% \aurelien{maybe comment more and point to
% section with
% concrete instantiation. or introduce Kalo right here to fix ideas?}
We denote the graph learning objective function by $h(w):=J
(\alpha,w)$ for fixed models $\alpha$.
Note that $h(w)$ is convex in $w$.

\textbf{Decentralized algorithm.}
% \label{sub:dec-graph}
% \aurelien{notation $[\cdot]$ to be more explicit about subvector?}
% The work of \citep{kalo} describes a centralized primal-dual algorithm for a
% related graph
%  learning problem, which requires global knowledge of all graph weights
%  and pairwise distances
%  between models. In contrast, our goal is to design a
%  decentralized algorithm allowing each user to update its weights
%  asynchronously and in parallel based on a local view of the network.
Our goal is to design a fully decentralized algorithm to update the
collaboration graph $\mathcal{G}_w$. We thus need users to communicate beyond
their current direct neighbors in $\mathcal{G}_w$ to discover new relevant
neighbors. In order to preserve scalability to large numbers of users, a user
can only communicate with small random batches of other users. In a
decentralized system, this can be implemented by a
classic primitive known as a peer sampling service \citep{peersampling,5738983}.
% , a classic distributed systems primitive allowing an user to communicate with a random set of other users (peers) in the network. %, so that the user can update the associated weights.
% More formally, let $\kappa \in\{1, \dots, K-1\}$ be a parameter of the
% algorithm.
% \marc{maybe explicitly write that $\kappa$ *is not* the parameter that tunes the sparsity or fixes the degrees...  maybe find a font in which $\kappa$ is less close to $k$...}
Let $\kappa\in [1..K-1]$ be a parameter of the algorithm, which in
practice is much smaller than $K$.
At each step, a random user $k$ wakes up and samples uniformly and
without replacement a set $\mathcal{K}$ of $\kappa$ users from the set $
\{1,\dots, K\} \setminus \{k\}$ using the peer sampling service. We denote by $w_{k, \mathcal{K}}$ the
$\kappa$-dimensional subvector of a vector $w\in\mathbb{R}^{K(K-1)/2}$ corresponding to the entries $\{(k,l)\}_
{l\in\mathcal{K}}$. Let $\Delta_{k, \mathcal{K}} = 
(\|\alpha_k-\alpha_l\|^2)_{l\in\mathcal{K}}$%  the vector of distances between
% the model of user $k$ and those of users in $\mathcal{K}$
, $p_{k, \mathcal{K}} = (c_k\mathcal{L}_k(\alpha_k; S_k) + c_l
\mathcal{L}_l(\alpha_l; S_l))_{l\in\mathcal{K}}$ and
% $v_{k, \mathcal{K}} = (
% \frac{1}{d_k(w) + \delta} + \frac{1}{d_l(w) + \delta})_{l\in\mathcal{K}}$.
$v_{k, \mathcal{K}}(w) = (
g_k'(d_k(w)) +g_l'(d_l(w)) + g'_{k,l}(w_{k,l}))_{l\in\mathcal{K}}$.
% Equipped with these notations, 
The partial derivative of the objective $h(w)$ with respect to the
variables $w_{k, \mathcal{K}}$ can be written as follows:
% \vspace{-2mm}
% \aurelien{define $g$ as the objective}
% $$[\nabla g(w)]_{k, \mathcal{K}} = \Delta_{k, \mathcal{K}} - (1 / (P_{\mathcal{K}\cup \{k\},:}w + \delta))P_{\mathcal{K}\cup \{k\}}^{\mathcal{K},k} + 2\lambda w_{k, \mathcal{K}},$$
\begin{equation}
\label{eq:gradpartial}
[\nabla h(w)]_{k, \mathcal{K}} = p_{k, \mathcal{K}} + (\mu_1/2)\Delta_{k, 
\mathcal{K}} + \mu_2 v_{k, \mathcal{K}}(w).
% [\nabla h(w)]_{k, \mathcal{K}} = p_{k, \mathcal{K}} +
% \mu\Big(\frac{1}{2}\Delta_{k, \mathcal{K}}\! -\! v_{k, \mathcal{K}} \!+\!
% 2\lambda w_{k, \mathcal{K}}\Big).
\end{equation}

We denote by $L_{k,\mathcal{K}}$
is the Lipschitz constant of $\nabla h$ with respect to block $w_{k, 
\mathcal{K}}$. We now state our algorithm. We start from some arbitrary weight
vector $w^
{\mtime{0}}\in\mathcal{W}$, each user having a local copy of its $K-1$
weights. At each
time step $t$, a random user $k$ wakes up and
performs the following actions:
\begin{enumerate}
\item Draw a set $\mathcal{K}$ of $\kappa$ users and request their current
models, loss value and degree.
\vspace{-2mm}
\item Update the associated weights:
% \aurelien{replace $L_k$ by its value?}
% \begin{equation}
% \label{eq:cdupdate}
\vspace{-3mm}
$$w^{\mtime{t+1}}_{k, \mathcal{K}} \leftarrow \max\big(0, w^{\mtime{t}}_{k, \mathcal{K}} - 
(1/L_{k,\mathcal{K}})[\nabla h(w^{\mtime{t}})]_{k, \mathcal{K}}\big).$$
% \end{equation}
% $L_{\kappa} = 2\mu(\frac{\kappa+1}{\delta^2} + \lambda)$
% Other
% variables remain unchanged.
% \aurelien{update $L_{\kappa}$ to $L_{k,\mathcal{K}}$ and update supp also to
% be gene5ic}
\vspace{-8mm}
\item Send each updated weight $w^{\mtime{t+1}}_{k,l}$ to the associated user in $l\in\mathcal{K}$.
\end{enumerate}
The algorithm is fully decentralized. Indeed, no global information is needed
to update the weights: the information requested from users in $
\mathcal{K}$ at step 1 of the algorithm is sufficient to compute 
\eqref{eq:gradpartial}. Updates can thus happen asynchronously and in
parallel.
% Note that the step size $1/L_{k,\mathcal{K}}$, given by the
% Lipschitz constant, is
% often
% quite conservative: in practice we tune it to speed up
% convergence.

% \valentina{ca me parait pas a sa place cette remarque. elle arrive tard, apres avoi conclu l'analyse de la comm.}

\textbf{Convergence, communication and memory.}
Our analysis proceeds as follows. We first show that our
algorithm
can be seen as an instance of proximal coordinate
descent (PCD) \citep{Tseng_Yun09,proxcd} on a slightly modified objective
function. Unlike
the standard PCD setting which focuses on disjoint blocks, our coordinate
blocks exhibit a specific overlapping structure that arises as soon as $\kappa
> 1$ (as each weight is shared by two users). We
build upon the PCD analysis due to
\citep{wright2015coordinate}, which we adapt to account for
our overlapping block structure. The details of our analysis can be found in the
supplementary material. For the case where $g$ is strongly convex, we obtain
the following convergence rate.\footnote{For the general convex case, we can
obtain a slower $O(1/T)$ convergence rate.}

\begin{theorem}
\label{the:disco}
Assume that $g(w)$ is $\sigma$-strongly convex. Let $T>0$ and $h^*$ be the
optimal
objective value. Our algorithm cuts the expected suboptimality gap
by a constant factor $\rho$ at each iteration: we have $\mathbb{E}[h(w^{(T)}) -
h^*]
\leq \rho^T(h(w^{\mtime{0}}) -
h^*)$ with $\rho = 1- \frac{2\kappa\sigma}{K
(K-1)L_{max}}$ with $L_{max}=\max_{(k,\mathcal{K})}L_{k,\mathcal{K}}$.
% \begin{equation}
% \label{eq:shrink}
% \rho = 1 - \frac{2}{K(K-1)}\frac{\kappa\lambda\delta^2}
% {\kappa+1+\lambda\delta^2}.
% \end{equation}
% \vspace{-5mm}
% For any $T>0$, the iterates generated by our
% algorithm satisfy: $$\mathbb{E}[h(w^{(T)}) - h^*] \leq \Big( 1 - \frac{P_K\kappa\lambda\delta^2}
% {\kappa+1+\lambda\delta^2} \Big)^T(h(w^{(0)}) - h^*),$$
% where $P_K=2 / K(K-1)$.
% where $b(\kappa) = 2 / K(K-1)$ when $\kappa=1$ and $b(\kappa) = \kappa^2 / K(K-1)$ when $\kappa \in \{2,\dots,K-1\}$.
% where $b(\kappa) = 2\kappa / K(K-1)$.
\end{theorem}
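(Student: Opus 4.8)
The plan is to recognize the algorithm as a randomized proximal coordinate descent on the composite objective $F(w) := h(w) + \iota_{\mathcal{W}}(w)$, where $\iota_{\mathcal{W}}$ is the indicator of the nonnegative orthant $\mathcal{W}=\{w\ge 0\}$ (a coordinate-separable, hence block-separable, convex function) and $h$ is convex and block-smooth. Since $\iota_{\mathcal{W}}$ is an indicator, $\operatorname{prox}_{\gamma\iota_{\mathcal{W}}}=\Pi_{\mathcal{W}}$ is just coordinatewise clipping $\max(0,\cdot)$ for any $\gamma>0$, so step~2 of the algorithm is exactly the block proximal-gradient step on $B_{k,\mathcal{K}}:=\{(k,l):l\in\mathcal{K}\}$ with stepsize $1/L_{k,\mathcal{K}}$. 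Throughout I keep $h$ undecomposed and use only that $\nabla h$ is $L_{k,\mathcal{K}}$-Lipschitz along directions supported on $B_{k,\mathcal{K}}$; the block descent lemma then gives, for every feasible block-direction $d$, $h(w+d)\le h(w)+\langle[\nabla h(w)]_{k,\mathcal{K}},d\rangle+\tfrac{L_{k,\mathcal{K}}}{2}\|d\|^2$. As the update minimizes precisely this quadratic model plus the block indicator, we obtain the per-step guarantee $F(w^{(t+1)})\le F(w^{(t)})+V_{B_{k,\mathcal{K}}}(w^{(t)})$, where $V_B(w)$ denotes the minimal value of $\langle[\nabla h(w)]_B,d\rangle+\tfrac{L_B}{2}\|d\|^2$ over block-$B$ directions $d$ with $w+d\in\mathcal{W}$. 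This inequality is valid for each realized block in isolation, so the fact that distinct blocks overlap (every weight $w_{k,l}$ belongs to both the block of $k$ and that of $l$) is irrelevant here; the overlap would only bite if we split $h$ across blocks or updated several blocks simultaneously, neither of which we do.

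Next I bound $V_B(w)$ from above by plugging in the feasible suboptimal direction $d=\lambda(w^*_B-w_B)$ with $\lambda\in[0,1]$, where $w^*\in\argmin_{w\in\mathcal{W}}h(w)$ so that $h(w^*)=h^*=F^*$ (feasibility holds since $\mathcal{W}$ is a box), giving $V_B(w)\le\lambda\langle[\nabla h(w)]_B,w^*_B-w_B\rangle+\tfrac{\lambda^2 L_{max}}{2}\|w^*_B-w_B\|^2$. I then take expectation over the random choice $(k,\mathcal{K})$. Because $k$ is uniform on $[K]$ and $\mathcal{K}$ is a uniform $\kappa$-subset of the remaining $K-1$ users, every edge $e=(i,j)$ lies in $B_{k,\mathcal{K}}$ with the same probability $p:=\tfrac{2\kappa}{K(K-1)}$ (the two disjoint ways to hit $e$, namely $k=i$ with $j\in\mathcal{K}$ or $k=j$ with $i\in\mathcal{K}$, each contribute $\tfrac1K\cdot\tfrac{\kappa}{K-1}$). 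Since $\langle[\nabla h(w)]_B,w^*_B-w_B\rangle$ and $\|w^*_B-w_B\|^2$ are sums over $e\in B$ of per-edge terms, hence affine in the block-membership indicators, their expectations are exactly $p\langle\nabla h(w),w^*-w\rangle$ and $p\|w^*-w\|^2$. This symmetry of the two-level sampling is precisely what produces the uniform factor $\tfrac{2\kappa}{K(K-1)}$ in $\rho$.

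Finally I invoke strong convexity: since the first two terms of $J(\alpha,\cdot)$ are linear in $w$ for fixed $\alpha$, $h$ inherits the $\sigma$-strong convexity of $g$ (up to the constant $\mu_2$, which I absorb into $\sigma$), so $\langle\nabla h(w),w^*-w\rangle\le -(h(w)-h^*)-\tfrac{\sigma}{2}\|w^*-w\|^2$. Substituting into the expected bound and choosing $\lambda=\sigma/L_{max}\in[0,1]$ cancels the $\|w^*-w\|^2$ terms and leaves $\mathbb{E}_{(k,\mathcal{K})}[V_{B_{k,\mathcal{K}}}(w)]\le -\tfrac{2\kappa\sigma}{K(K-1)L_{max}}(h(w)-h^*)$. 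Hence $\mathbb{E}[F(w^{(t+1)})-F^*\mid w^{(t)}]\le\rho\,(F(w^{(t)})-F^*)$ with $\rho=1-\tfrac{2\kappa\sigma}{K(K-1)L_{max}}$; taking total expectations, iterating over $T$ steps, and using $F=h$ on $\mathcal{W}$ together with the feasibility of every iterate (step~2 clips to the orthant and $w^{(0)}\in\mathcal{W}$) yields $\mathbb{E}[h(w^{(T)})-h^*]\le\rho^T(h(w^{(0)})-h^*)$. I expect the main obstacle to be the careful handling of the overlapping random block structure: resisting the natural but doomed attempt to decompose $h$ block-wise — the degree terms $d_k(w)$ and the degree-dependent regularizer couple the blocks — and instead relying only on block-restricted smoothness plus the uniform edge-inclusion probability, in the spirit of the PCD analysis of \citep{wright2015coordinate}. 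Secondary care points are verifying that the orthant-indicator prox is clipping, checking feasibility of the interpolating direction, and the $\mu_2$-versus-$\sigma$ bookkeeping.
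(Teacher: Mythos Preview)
Your proposal is correct and follows essentially the same route as the paper's proof: both recast the update as proximal block coordinate descent on $F=h+\iota_{\mathcal{W}}$, apply block smoothness to get a per-step descent bound, average over the random block using that every edge lies in the selected block with probability $\tfrac{2\kappa}{K(K-1)}$, and then plug in the interpolated point $w+\lambda(w^*-w)$ with $\lambda=\sigma/L_{\max}$ together with strong convexity to obtain the contraction factor $\rho$. The only differences are organizational---you compute the edge-inclusion probability directly and invoke strong convexity on $\langle\nabla h(w),w^*-w\rangle$, whereas the paper counts that each coordinate appears in $2\binom{K-2}{\kappa-1}$ of the $K\binom{K-1}{\kappa}$ blocks and routes the argument through an auxiliary function $H(w,z)$---but the substance is identical.
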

% \aurelien{maybe reformulate this by having separate equation for the
% shrinking factor}
% \aurelien{with the corrected value of $\kappa$ we have a very limited effect
% of $\kappa$ on the convergence rate which does not reflect the reality. This
% is due to the linear dependence of the Lipschitz constant on $\kappa$, which
% is very likely to be loose. Need to improve this.}

% \aurelien{comment can get $1/T$ for convex.}
The rate of Theorem~\ref{the:disco} is typically faster
than the sublinear rate of the boosting subproblem (Theorem~\ref{the:optimal}), suggesting that a small
number of updates per user is sufficient to reach reasonable optimization
error before re-updating the models given the new graph. In the supplementary,
we further analyze the trade-off between communication and memory costs and
the convergence rate ruled by $\kappa$.

\textbf{Proposed regularizer.}
In our
experiments, we use a graph regularizer defined as $g(w) = \lambda\|w\|^2 - 
\mathbf{1}^\top\log(d(w) + \delta)$, which is inspired from \citep{kalo}.
% \aurelien{warning: need to set
% $\beta=\mu$ in the joint problem to recover our former formulation and have
% correct strong convexity/Lip}
% \begin{equation}
% \label{eq:g_kalo}
% g(w) = \lambda\|w\|^2 - 
% \mathbf{1}^\top\log(d(w) + \delta)
% \end{equation}
The log term ensures that all nodes have
nonzero degrees (the small positive constant $\delta$ is a simple trick to
make the logarithm smooth on the feasible domain, see e.g., \citep{koriche})
without ruling out non-connected graphs with several connected components.
%,which will make the objective easier to optimize in the decentralized setting.
Crucially, $\lambda > 0$ provides a direct way to tune the sparsity of
the graph: the smaller $\lambda$, the more concentrated the weights of a
given user on the peers with the closest models. This
allows us to control the trade-off between accuracy and communication in the
model update step of Section~\ref{sec:method}, whose communication cost is
 linear in the number of edges. %% (see the supplementary material
% and the experiments of Section~\ref{sec:exp}).
The resulting objective is strongly convex and block-Lipschitz
continuous (see supplementary for the derivation of the parameters and
analysis of the trade-offs).
Finally, as discussed in \citep{kalo}, tuning the importance
of the log-degree term with respect to the other graph terms has simply a
scaling effect, thus we can simply set $\mu_2=\mu_1$ in 
\eqref{eq:general-joint-obj}.
 % to
% reduce the number of hyperparameters.
% \aurelien{mention strong convexity Lipschitz etc}

\begin{remark}[Reducing the number of variables]
To reduce the number of variables to optimize, each user can keep to 0
the weights corresponding to users whose current model is most different to
theirs. This heuristic has a negligible impact on the solution quality in
sparse regimes (small $\lambda$).
\end{remark}

%%% Local Variables:
%%% mode: latex
%%% TeX-master: "paper.tex"
%%% ispell-local-dictionary: "english"
%%% End:

%\input{stateoftheart}
% !TEX root = paper.tex

\section{EXPERIMENTS}
\label{sec:exp}

% \aurelien{rename approaches in Figure: abbreviated name and Oracle instead of
% Fixed}\marc{we could say that there are not really competitors?}
% \marc{Recall/fix the choices that define DADA if we present the algorithm in a generic way in the previous sections}
% \aurelien{I think Dada should only be the approach of Sec~3. Dada-Learned is
% combination of Sec~3 and Sec~4.}
In this section, we study the practical behavior of our approach. Denoting our
decentralized Adaboost method introduced in Section~\ref{sec:method} as
\algoname, we study two variants: \algonameF (which uses a fixed oracle graph
given as input) and \algonameL (where the graph is learned along with the
models).
We compare against various competitors, which learn either
global or personalized models in a centralized or decentralized manner.
\globalboost and \globallin learn a single global
$l_1$-Adaboost model (resp. linear model) over the centralized dataset
$S=\cup_{k}S_k$.
\localboost and \locallin learn (Adaboost or linear) personalized models
independently for each user without collaboration. Finally, \persolin is a
decentralized method
for collaboratively learning personalized linear models 
\citep{vanhaesebrouck2016decentralized}. This approach requires an oracle
graph as input (\persolinF) but it can also directly benefit
from our graph learning approach of Section~\ref{sec:discovery} (we denote
this new variant by \persolinL).
% We compare also to the global (\globallin) and the local (\locallin) counterparts of this method for learning linear models.
% We use \persolinL when the graph is learned with our
% approach, and \algonameF and \persolinF when a fixed ``oracle'' graph is given as
% input.
We use the same set
of base predictors for all boosting-based methods,
namely $n$ simple decision stumps uniformly split between all $D$ dimensions
and value ranges.
% In the figures, algorithms that build purely local or purely global models are shown with a straight horizontal line that correspond to their convergence value.
For all methods we tune the hyper-parameters with 3-fold cross validation.
% We tune the following hyper-parameters with 3-fold cross validation on the
% training user datasets: $\beta\in\{ 1, \dots, 10^3 \}$ ($l_1$ constraint for
% all methods except \persolin), $\mu\in\{ 10^{-3}, \dots, 10^3
% \}$ 
% (trade-off parameter for \algoname and \persolin, and
% $\lambda\in\{10^{-3},
% \dots, 10^3 \}$ (graph sparsity in \algonameL and \persolinL).
%  when learning the graph, is
% searched in $\{10^{-3}, \dots, 10^3 \}$.
Models are initialized to zero vectors and the initial graphs of \algonameL
and \persolinL are learned using the purely local classifiers, and then
updated after every $100$ iterations of optimizing the classifiers, with
$\kappa=5$. 
% (see supplementary material for an empirical study of the impact of $\kappa$).
% \aurelien{need to add explanation of how we initialize the graph when we
% learn it (based on purely local models)}
All reported accuracies are averaged over users.
Additional details and results can be found in the supplementary.
The source code is available at \url{https://github.com/vzantedeschi/Dada}.

% \begin{figure*}[htbp]
%     \centering
%     \subfigure%[Training and test accuracy \wrt iterations.]
%     {\label{fig:accuracies}
%       \includegraphics[height=3.1cm]{}}
%     \qquad
%     \subfigure%[Average number of neighbors \wrt iterations]
%     {\label{fig:edges} 
%       \includegraphics[height=3.1cm]{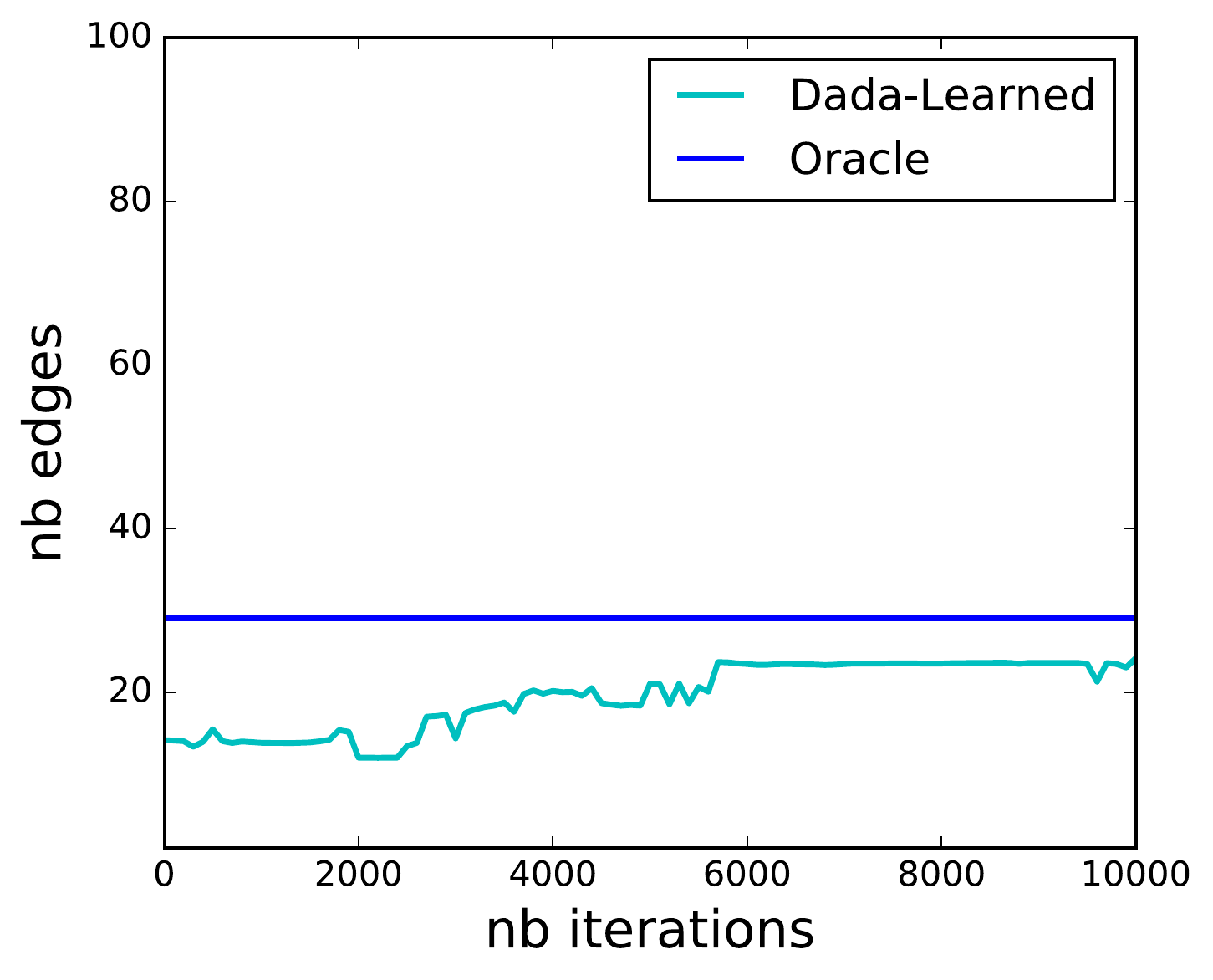}}
%     \caption{Results on the Moons dataset. \emph{Left:} Training and test
%     accuracy \wrt iterations (we display the performance of
%     non-collaborative baselines at convergence with a straight line).
%     \globallin is off limits at $\sim$50\% accuracy). \emph{Right:} Average
%     number of
%     neighbors \wrt
%     iterations for \algonameL.}
%     % \aurelien{globallin missing?}\aurelien{make
%     % fig bigger if possible}
%   \label{fig:moons}
% \end{figure*}

\begin{figure}[htbp]
    \centering
    \subfigure%[Training and test accuracy \wrt iterations.]
    {\label{fig:accuracies}
      \includegraphics[width=\linewidth]{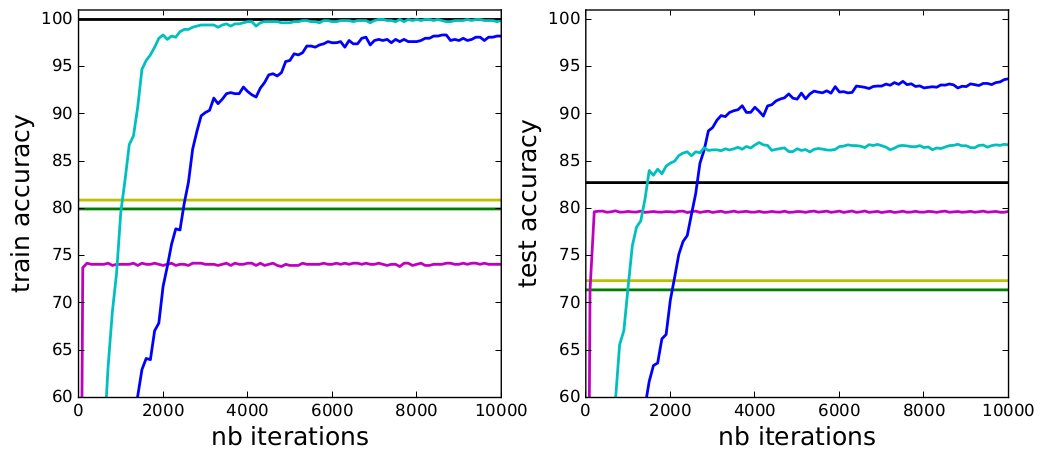}}
    \qquad
    \subfigure%[Average number of neighbors \wrt iterations]
    {\label{fig:edges} 
      \includegraphics[width=.55\linewidth]{moons-clust-edges.pdf}
          \raisebox{1cm}{\includegraphics[height=2cm]{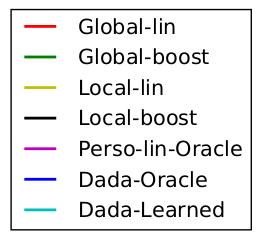}}}
    \caption{Results on the Moons dataset. \emph{Top:} Training and test
    accuracy \wrt iterations (we display the performance of
    non-collaborative baselines at convergence with a straight line).
    \globallin is off limits at $\sim$50\% accuracy. \emph{Bottom:} Average
    number of
    neighbors \wrt
    iterations for \algonameL.}
    % \aurelien{globallin missing?}\aurelien{make
    % fig bigger if possible}
  \label{fig:moons}
\end{figure}

\begin{table}
  \caption{Test accuracy (\%) on real data, averaged over 3 runs. Best
  results in boldface, second best in italic.}
  \label{tab:acc}
  \centering
  ~\\\vspace{1mm}
  % \scriptsize
  \footnotesize
  \scalebox{0.85}{
  \begin{tabular}{lcccc}
    \bf{DATASET}      & {\sc \bf{HARWS}}  & {\sc \bf{VEH.}} & {\sc \bf{COMP.}} & {\sc \bf{SCH.}} \\
    \midrule
    Global-linear & 93.64 & 87.11 & 62.18 & 57.06 \\
    Local-linear & 92.69 & 90.38 & 60.68 & 70.43\\
    Perso-linear-Learned  & \first{96.87} & \first{91.45} & 69.10 & \second{71.78}\\
    Global-Adaboost & 94.34 & 88.02 & \second{69.16 }& 69.96 \\
    Local-Adaboost     & 93.16 & 90.59 & 66.61 & 70.69 \\
    Dada-Learned & \second{95.57} & \second{91.04} & \first{73.55} & \first{72.47}\\
  \end{tabular}}
\end{table}

\begin{table*}[!h]
  \caption{Test accuracy (\%) with different fixed communication budgets (\# bits) on real datasets.}
  \label{tab:bud}
  ~\\\vspace{1mm}
  \centering
  \footnotesize
  \scalebox{0.85}{
  \begin{tabular}{llcccc}
    \bf{BUDGET} & \bf{MODEL} & {\sc \bf{HARWS}} & {\sc \bf{VEHICLE}} & {\sc \bf{COMPUTER}} & {\sc
    \bf{SCHOOL}} \\
    \midrule
    \multirow{2}{*}{$DZ \times 160$} & \persolinL & - & - & - & -\\
                            & \algonameL & \bf{95.70} & \bf{75.11} & \bf{52.03} & 
                            \bf{56.83}\\
    \hline
    \multirow{2}{*}{$DZ \times 500$} & \persolinL & 81.06 & \bf{89.82} & - &
    -\\
                            & \algonameL & \bf{95.70} & 89.57 & {\bf 62.22} & \bf{71.90}\\
    \hline
    \multirow{2}{*}{$DZ \times 1000$} & \persolinL & 87.55 & 90.52 & \bf{68.95} & 71.90\\
                            & \algonameL & \bf{95.70} & \bf{90.81} & 68.83 & \bf{72.22}\\                        
  \end{tabular}}
\end{table*}
% \begin{table}[h]
%   \caption{Test accuracy (\%) with different fixed communication budgets (\# bits) on real datasets.}
%   \label{tab:bud}
%   \centering
%   \footnotesize
%   \begin{tabular}{lcccc}
%     \toprule
%     Model     & {\sc Harws} & {\sc Veh.} & {\sc Comp.} & {\sc
%     Sch.} \\
%     \midrule
%     \multicolumn{5}{l}{Budget $DZ\times 160$}\\
%     \hline
%     \persolinL & - & - & - & -\\
%     \algonameL & \bf{95.70} & \bf{75.11} & \bf{52.03} & \bf{56.83}\\
%     \hline
%     \hline
%     \multicolumn{5}{l}{Budget $DZ\times 500$}\\
%     \hline
%     \persolinL & 81.06 & \bf{89.82} & - &-\\
%     \algonameL & \bf{95.70} & 89.57 & {\bf 62.22} & \bf{71.90}\\
%     \hline
%     \hline
%     \multicolumn{5}{l}{Budget $DZ\times 1000$}\\
%     \hline
%     \persolinL & 87.55 & 90.52 & \bf{68.95} & 71.90\\
%     \algonameL & \bf{95.70} & \bf{90.81} & 68.83 & \bf{72.22}\\                        
%     \bottomrule
%   \end{tabular}
% \end{table}

\textbf{Synthetic data.}
% \label{sec:exp_synth}
% \aurelien{update graph more frequently, maybe with less iterations each time}
% \valentina{i tried that. it doesn't change the results, probably because we still init the graph using the local models}
To study the behavior of our approach in a controlled setting, our
first set
of experiments is carried out on a synthetic problem ({\sc Moons}) constructed
from the classic two interleaving Moons dataset which has nonlinear class
boundaries.
We consider $K=100$ users, clustered in $4$ groups of $10$, $20$, $30$ and
$40$ users. Users in the same cluster are associated with a similar
rotation of the feature space and hence have similar tasks. We construct an
oracle collaboration graph based on the difference in
rotation angles between users, which is given as input to \algonameF and
\persolinF.
Each user $k$ obtains a
training sample random size $m_k\sim\mathcal{U}(3,15)$.
The data dimension is $D=20$ and the
number of base predictors is $n=200$.
We refer to the supplementary material for more details on the dataset
generation.
Figure~\ref{fig:moons} (left) shows the accuracy of all methods. As
expected, all linear models (including \persolin) perform poorly since the tasks
have highly
nonlinear decision boundaries. The results show the
clear gain in accuracy provided by our method: both \algonameF and \algonameL
are successful in reducing the overfitting of \localboost, and also achieve
higher test accuracy than \globalboost. \algonameF
outperforms \algonameL as it makes use of the oracle graph computed from the
true data distributions. Despite
the noise introduced by the finite sample setting, \algonameL effectively
makes up for not having access to any knowledge on the relations
between the users' tasks.
% It also allows faster accuracy 
% improvements in earlier iterations, probably thanks to better alignment
% between the current graph and the current models across time.
Figure~\ref{fig:moons} (right) shows that
the graph learned by \algonameL remains sparse across time (in fact, always
sparser than the oracle graph), ensuring a small communication cost for
the model update steps. 
% On this dataset, we found that the learned graph
% is very stable \wrt the graph sparsity parameter
% $\lambda$, which is consistent with the underlying well-separated cluster
% structure. 
% In the supplementary material, we provide experiments on a
% variant of this dataset without cluster structure and show that $\lambda$
% allows to easily tune the sparsity of the graph.
Figure~\ref{fig:graph}
(left)
confirms that
the graph learned by \algonameL is able to approximately recover the
ground-truth cluster structure. Figure~\ref{fig:graph} (right) provides a more
detailed visualization of the learned graph. We can clearly see the effect of the inductive bias
brought by the confidence-weighted loss term in Problem~\eqref{eq:general-joint-obj}
discussed
in Section~\ref{sec:setting}. In
particular, nodes
with high confidence and high loss values tend to have small degrees while
nodes with low confidence or low loss values are more densely connected.

\begin{figure}[h]
    \centering
    \includegraphics[width=.45\linewidth]
    {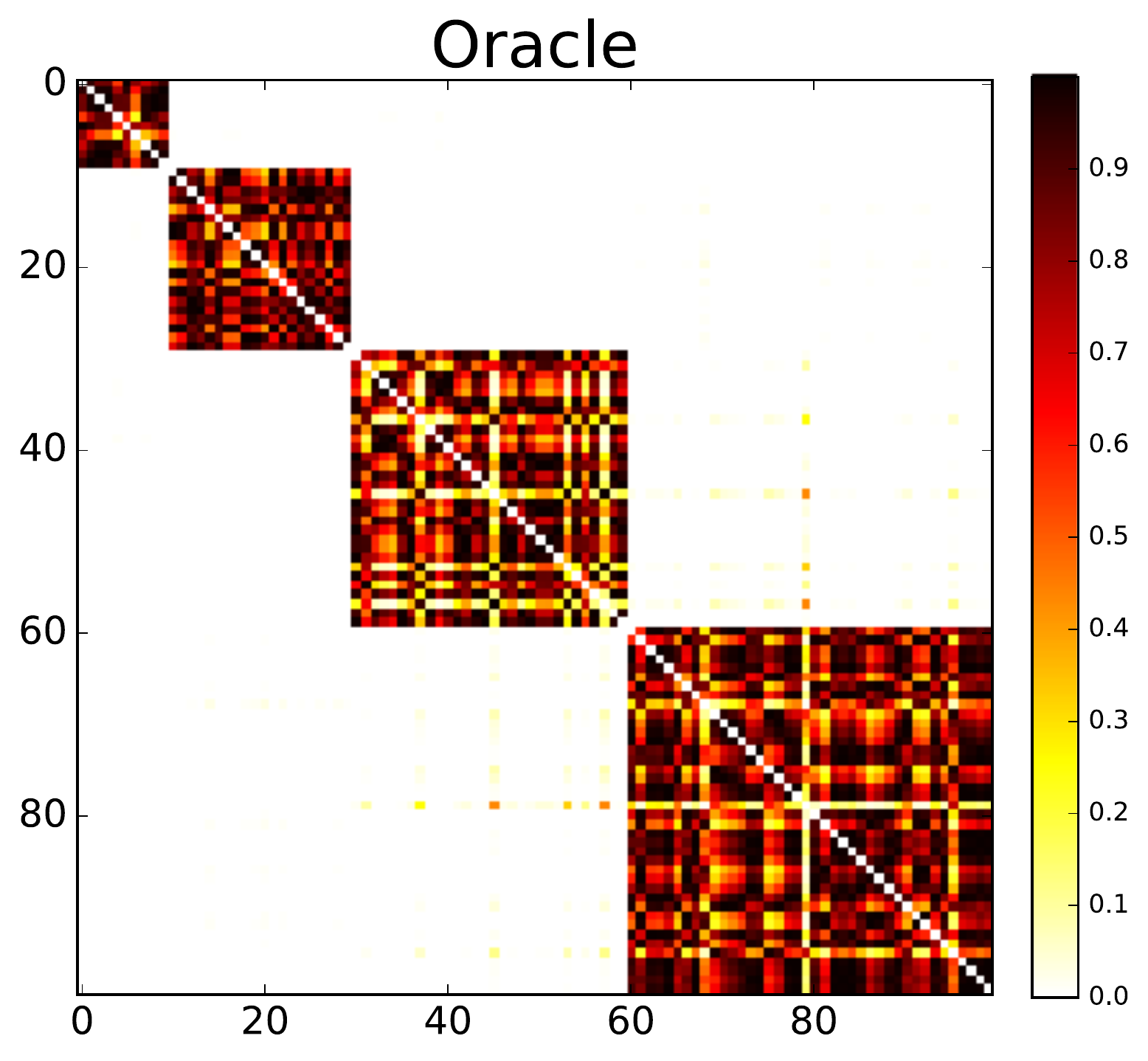}
    \hspace{5mm}
    \includegraphics[width=.45\linewidth]
    {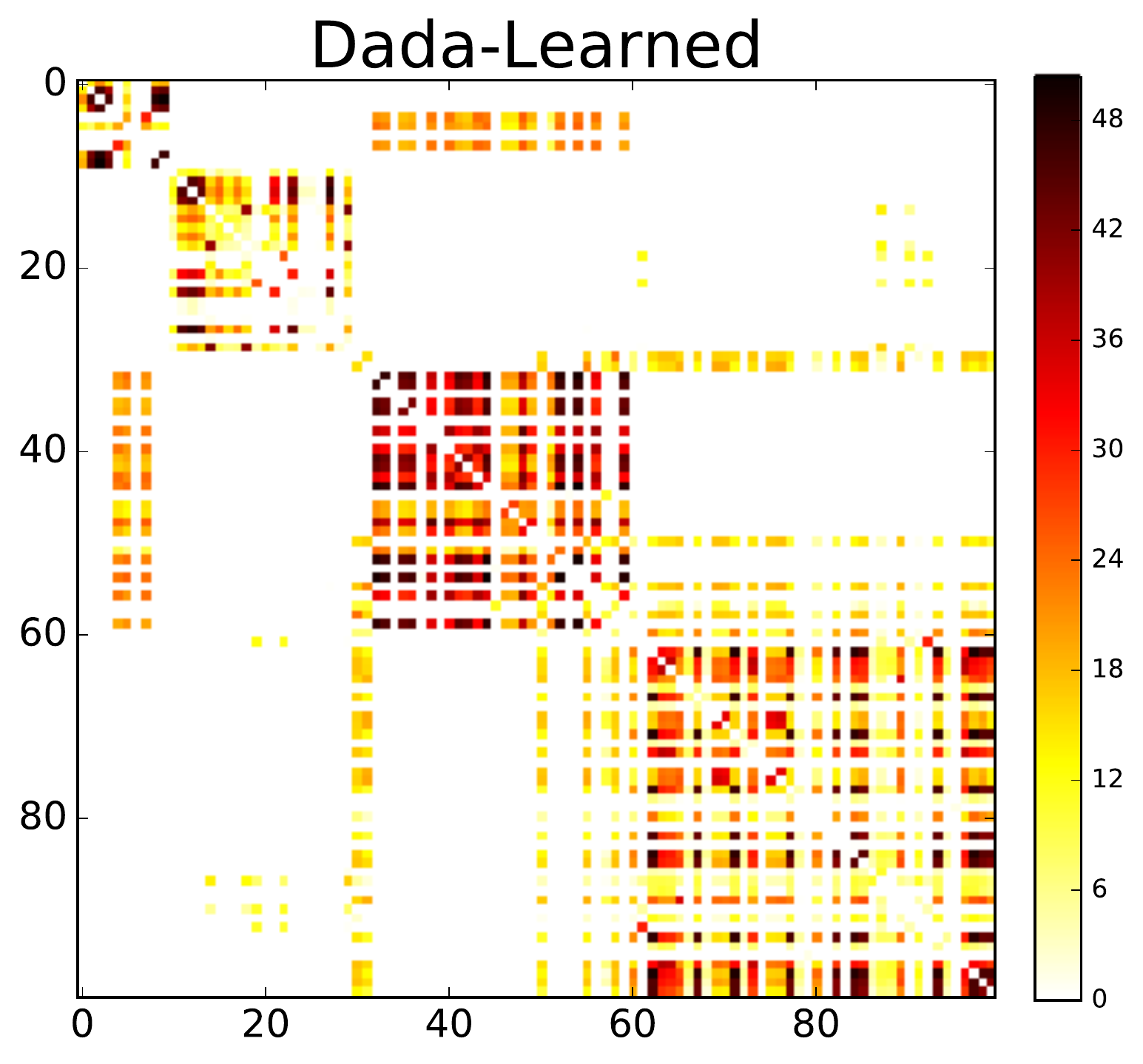}
    % \hspace{5mm}
    \\
    \includegraphics[width=.93\linewidth]
    {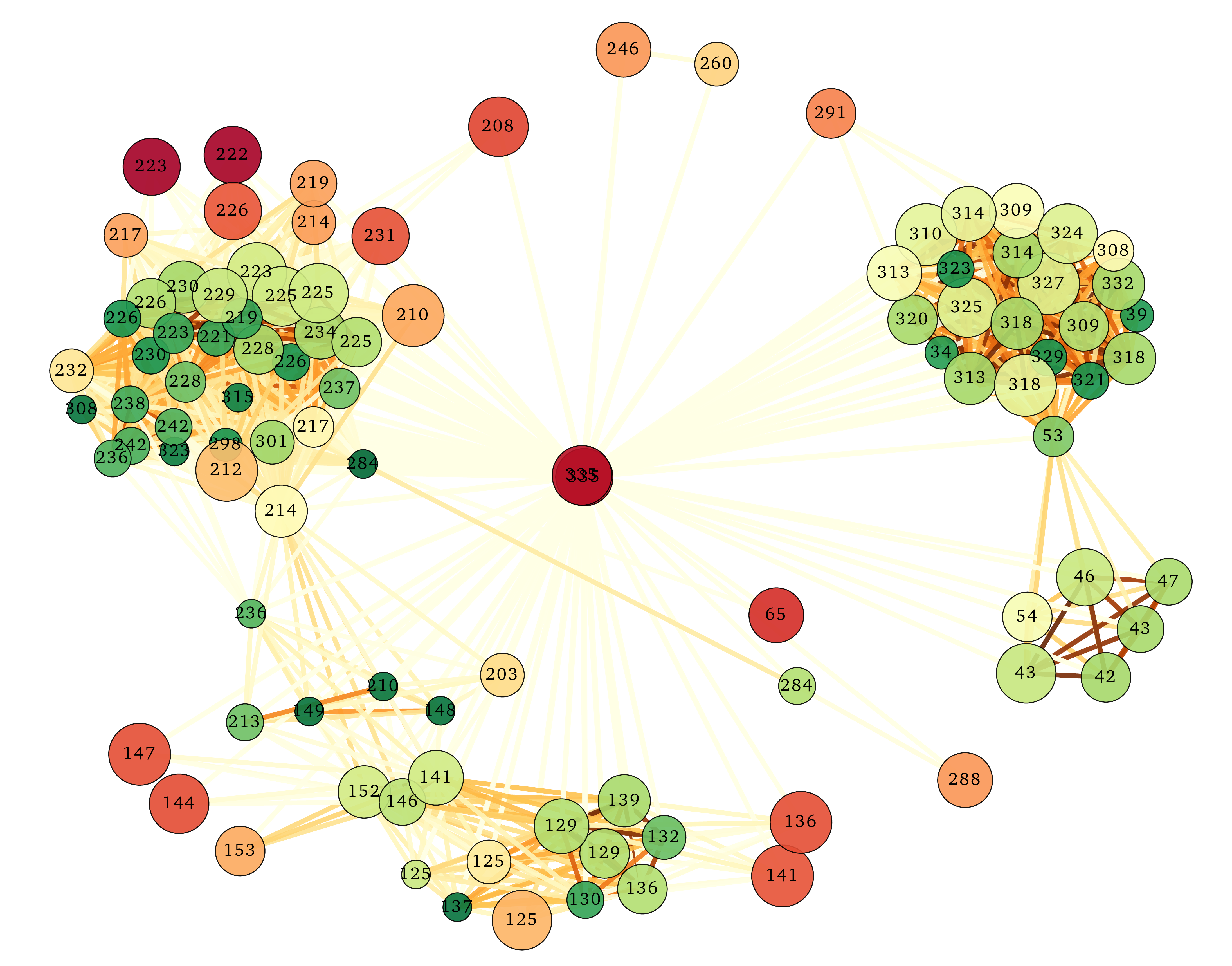}
    \caption{Graph learned on {\sc Moons}. \emph{Top:} Graph weights for the
    oracle and learned graph (with users grouped by cluster).
    % , from the smallest to the
    % largest
    % cluster.
  %   The strongest weights are between nodes of the same
  % cluster.
  \emph{Bottom:} Visualization of the graph.
    The node size is proportional to the confidence $c_k$ and the color
    reflects the
    relative value of
    the local
    loss (greener = smaller loss).
    Nodes are labeled with their rotation angle, and a darker edge
    color indicates a higher weight.
    % \aurelien{add oracle graph?}
    } 
    \label{fig:graph}
\end{figure}

\textbf{Real data.}
% \label{sec:real}
We present results on real datasets that are naturally
collected at the user level: Human Activity
Recognition With Smartphones ({\sc Harws}, $K=30$, $D=561$)~
\citep{anguita2013public}, {\sc
Vehicle Sensor}~\citep{duarte2004vehicle} ($K=23$, $D=100$), {\sc Computer
Buyers} ($K=190$, $D=14$) and {\sc
School}~\citep{goldstein1991multilevel} ($K=140$, $D=17$).
% For this set of experiments, when learning the graph, we update it every
% $100$ updates of the classifiers, and we set $\kappa = 5$ and the maximum
% number of iterations to $100$. 
As shown in Table~\ref{tab:acc}, \algonameL and \persolinL, which both make
use of our alternating procedure, achieve the best performance. This
demonstrates the wide applicability of our graph learning approach, for it
enables the use of \persolin \citep{vanhaesebrouck2016decentralized}
on datasets where no prior information is available to build a predefined
collaboration graph.
% Moreover, on this problem learning linear classifiers seems more pertinent than learning non-linear ones.
%Despite its lower accuracy at convergence on two datasets \wrt \persolinL,
Thanks to its logarithmic communication, our approach \algonameL
achieves higher accuracy under limited communication budgets, especially on
higher-dimensional
data (Table~\ref{tab:bud}). More details and results are given in the
supplementary.
% On the considered datasets, the communication cost of \algoname is lower by
% 2-3 orders of magnitude:
% % However, our boosting technique allows to obtain better accuracies with a limited communication budget (see Figure~\ref{fig:hcomm}), as
% at each iteration of learning the models, $2 D Z$ bits
% are exchanged
% for \persolin and only $\frac{M (Z + \log n)}{K}$ bits
% for
% \algoname. An even larger gap holds for graph learning ($10^6$
% versus $10^3$).

% \aurelien{OK, voici un choix possible de présentation pour les expés:
% \begin{itemize}
% \item Dans le main text: moons-clustered + Harws (qui permet de montrer
% l'intérêt en communication, et le fait que notre approche d'apprentissage du
% graphe marche aussi pour les modèles linéaires)
% \item Dans le supp: moons-100, schools et computer buyers. dans un premier
% temps, on peut mettre le minimum pour schools et computer buyers (courbe
% d'accuracy, sans comparer à linear): le but principal est de montrer que notre
% algo marche (= est meilleur que global et purely locals) sur 2 jeux de
% données en plus. En fonction du temps, on peut rajouter des courbes.
% \end{itemize}}

%%% Local Variables:
%%% mode: latex
%%% TeX-master: "paper.tex"
%%% ispell-local-dictionary: "english"
%%% End:

% !TEX root = paper.tex

\section{FUTURE WORK}
\label{sec:conclu}

We plan to extend our approach
to (functional) gradient boosting \citep{gradboosting,wang2015functional}
where the graph regularization term would need to be applied to an infinite
set of base predictors.
% We envision several promising research directions. To further enhance the
% applicability of the proposed framework, we would like to extend our approach
% to (functional) gradient boosting \citep{gradboosting}, where the set of base
% predictors can be infinite. Frank-Wolfe algorithms can also be used in this
% setting \citep{wang2015functional}, but how to express the graph
% regularization term in this case is an open question.
Another promising direction is to make our approach
differentially-private \citep{Dwork2006a} to formally guarantee that
personal datasets cannot be inferred from the information
sent by users.
% does not reveal too much about their training data.
As our algorithm communicates very scarcely,
we think that the privacy/accuracy trade-off may be better than the one
known for linear models \citep{bellet2017fast}.
% due to the low communication.

% differential privacy

% gradient boosting

% online? 

\paragraph{Acknowledgments}
~\\

The authors would like to thank Rémi Gilleron for
his useful feedback. This research was partially sup-
ported by grants ANR-16-CE23-0016-01 and ANR-15-
CE23-0026-03, by the European Union’s Horizon 2020
Research and Innovation Program under Grant Agree-
ment No. 825081 COMPRISE and by a grant from
CPER Nord-Pas de Calais/FEDER DATA Advanced
data science and technologies 2015-2020.

\bibliographystyle{apalike}
\bibliography{paper}

\newpage
\onecolumn
\appendix

\section*{SUPPLEMENTARY MATERIAL}

%\aurelien{outline to update at least for kalo}
This supplementary material is organized as follows. Section~\ref{app:fw}
provides the convergence analysis for our decentralized Frank-Wolfe
boosting algorithm.
Section~\ref{app:cd} describes the convergence analysis of our decentralized
graph learning algorithm. In Section~\ref{sec:smooth}, we derive the strong
convexity and smoothness parameters of $g(w) = \lambda\|w\|^2 - 
\mathbf{1}^\top\log(d(w) + \delta)$ needed to apply Theorem~\ref{the:disco}.
Section~\ref{sec:communcation-memory} discusses the communication and memory
costs of our method, with an emphasis on the scalability.
% It also derives the expected
% communication cost of the algorithm when users can keep in memory the models
% of all the users they communicate with. 
Finally, Section~\ref{app:exp} gives more details on our experimental setting
and present additional results.

% !TEX root = supp.tex

\section{PROOF OF THEOREM~\ref{the:optimal}}
\label{app:fw}
We first recall our optimization problem over the classifiers
$\alpha=[\alpha_1,\dots,\alpha_K]\in(\mathbb{R}^n)^K$:
\begin{multline}
  \label{eq:obj_supp}
  \min_{\normonenormal{\alpha_1},\dots,\normonenormal{\alpha_K} \leq \beta} f(\alpha) = 
  \sum_{k=1}^{K} d_k(w) c_k \log\Big( \frac{1}{m_k}\sum_{i=1}^{m_k} \exp\big
  ( -(A_k\alpha_k)_i \big) \Big) + \frac{\mu_1}{2} \sum_{k=1}^{K} \sum_{l=1}^
  {k-1} w_{k,l}\|\alpha_k-\alpha_l\|^2.
\end{multline}

We recall some notations.
For any $k\in[K]$, we let $\mathcal{M}^k = \{\alpha_k\in\mathbb{R}^n : \normone{\alpha_k}\leq \beta\}$ and denote by $\mathcal{M} = \mathcal{M}^1\times\dots\times\mathcal{M}^K$ our feasible domain in \eqref{eq:obj_supp}. 
% For convenience, we will use $\alpha = [\alpha_1,\dots,\alpha_K]\in(\mathbb{R}^n)^K$ to denote the concatenation of the local classifiers $\{ \alpha_k \}_{k=1}^{K}$ and refer to the  objective function~\eqref{eq:obj_supp} as $f(\alpha)$. 
We also denote by $v_{[k]}\in\mathcal{M}$ the zero-padding of any vector $v_k\in\mathcal{M}^k$. 
Finally, for conciseness of notations, for a given $ \gamma \in [ 0,1 ] $ we write $\hat{\alpha} = \alpha + \gamma (s_{[k]} - \alpha_{[k]})$ and $\hat{\alpha}_k = (1 - \gamma)\alpha_k + \gamma s_k$.

\subsection{Curvature Bound}

We first show that our objective function satisfies a form of smoothness over
the feasible domain, which is expressed by a notion of curvature.
Precisely, the global product curvature constant $C^\otimes_f$ of $f$ over $\mathcal{M}$ is the sum over each block of the maximum relative deviation of $f$ from its linear approximations over the block \citep{pmlr-v28-lacoste-julien13}:
  \begin{equation}
    \label{eq:global-curv}
    C^\otimes_f = \sum_{k=1}^K {C_f^k} = \sum_{k=1}^K \sup_{ \substack{ \alpha
    \in \mathcal{M}, s_k \in \mathcal{M}^k\\ \gamma\in[0,1]}} \Big\{ \frac{2}
    {\gamma^2} \left ( f(\hat{\alpha}) - f(\alpha) - (\hat{\alpha}_k -
    \alpha_k)^\top \nabla_k f(\alpha) \right ) \Big\}.
  \end{equation}
We will use the fact that each partial curvature constant $C_f^k$ is upper
bounded by the (block) Lipschitz constant of the partial gradient $\nabla_k f
(\alpha)$ times the squared diameter of the blockwise feasible domain $
\mathcal{M}^k$ \citep{pmlr-v28-lacoste-julien13}. The next lemma gives a bound on the product space curvature $C^\otimes_f$.

\begin{lemma}
    \label{lemma:curvature}
    For Problem~\eqref{eq:obj_supp}, we have $C^\otimes_f\leq4 \beta^2 \sum_
    {k=1}^K d_k(w)( c_k \normone{A_k}^2  + \mu_1)$.
\end{lemma}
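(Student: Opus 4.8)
The plan is to bound $C^\otimes_f = \sum_{k=1}^K C_f^k$ (as in~\eqref{eq:global-curv}) one block at a time, so it suffices to prove $C_f^k \le 4\beta^2 d_k(w)(c_k\normone{A_k}^2 + \mu_1)$ for each $k$ and sum. I would do this directly through a second-order Taylor expansion rather than invoking a black-box Lipschitz estimate (though the shortcut from \citep{pmlr-v28-lacoste-julien13} — partial curvature $\le$ block-Lipschitz constant of $\nabla_k f$ times the blockwise squared diameter — would also work). Since $f$ is $\mathcal{C}^2$ and the segment $[\alpha,\hat{\alpha}]$ lies in the convex set $\mathcal{M}$, Taylor's theorem provides a point $\xi$ on this segment with
\[ f(\hat{\alpha}) - f(\alpha) - (\hat{\alpha}_k - \alpha_k)^\top\nabla_k f(\alpha) = \tfrac12\gamma^2\,(s_k - \alpha_k)^\top \nabla^2_{kk}f(\xi)\,(s_k - \alpha_k), \]
where $\nabla^2_{kk}f$ denotes the $k$-th diagonal block of the Hessian; the off-diagonal blocks do not appear because only block $k$ is perturbed in $\hat{\alpha}$. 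Multiplying by $2/\gamma^2$ and using $\|s_k - \alpha_k\|_1 \le 2\beta$ (triangle inequality, both points lying in $\mathcal{M}^k$) gives $C_f^k \le \sup_{\xi\in\mathcal{M}}\ \sup_{\|v\|_1\le 2\beta}\, v^\top\nabla^2_{kk}f(\xi)\,v$, which is where the prefactor $4\beta^2$ will ultimately come from.

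The next step is to compute and bound $\nabla^2_{kk}f(\xi)$. A direct computation gives $\nabla^2_{kk}f(\xi) = d_k(w)c_k\,A_k^\top\big(\mathrm{diag}(\adaweight) - \adaweight\adaweight^\top\big)A_k + \mu_1 d_k(w)I_n$, where $\adaweight = \adaweight_k$ is the Adaboost reweighting vector evaluated at $\xi$ (a probability vector). For the graph part, $v^\top(\mu_1 d_k(w)I_n)v = \mu_1 d_k(w)\|v\|_2^2 \le \mu_1 d_k(w)\|v\|_1^2$. For the log-sum-exp part, $\mathrm{diag}(\adaweight) - \adaweight\adaweight^\top \preceq \mathrm{diag}(\adaweight)$, so $v^\top A_k^\top(\mathrm{diag}(\adaweight)-\adaweight\adaweight^\top)A_k v \le \sum_i\adaweight_i(A_k v)_i^2 \le \|A_k v\|_\infty^2 \le \normone{A_k}^2\|v\|_1^2$. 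Combining the two pieces, $v^\top\nabla^2_{kk}f(\xi)\,v \le d_k(w)\big(c_k\normone{A_k}^2 + \mu_1\big)\|v\|_1^2 \le 4\beta^2 d_k(w)\big(c_k\normone{A_k}^2+\mu_1\big)$, and summing over $k$ yields the claim.

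The main obstacle is the log-sum-exp term: one must recognize its Hessian as the centered second-moment matrix $\mathrm{diag}(\adaweight)-\adaweight\adaweight^\top$ of the reweighted sample, discard the rank-one correction via the PSD ordering, and then cross over from that quadratic form to the $\ell_1$-geometry of $\mathcal{M}^k$ through the matrix-norm inequality $\|A_k v\|_\infty \le \normone{A_k}\|v\|_1$ — noting that this bound is uniform in $\xi$, since $\adaweight$ stays in the simplex wherever on the segment it is evaluated, so the supremum over $\xi$ is harmless. Everything else (the $2\beta$ $\ell_1$-diameter of $\mathcal{M}^k$, the identity-block Hessian of the graph regularizer, and additivity over blocks) is routine.
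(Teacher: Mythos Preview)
Your proof is correct and reaches the same bound, but via a somewhat different route than the paper. The paper invokes the black-box inequality $C_f^k \le L_k\cdot\diam_{\|\cdot\|_1}^2(\mathcal{M}^k)$ from \citep{pmlr-v28-lacoste-julien13}, then bounds the block-Lipschitz constant $L_k$ of $\nabla_k f$ by first controlling the Jacobian of the softmax weights $\adaweight_k$ (obtaining $\|\nabla_k\adaweight_k\|_1\le\|A_k\|_1$) and then reading off $L_k\le d_k(w)(c_k\|A_k\|_1^2+\mu_1)$. You instead bypass the Lipschitz step entirely: a second-order Taylor expansion along the segment reduces $C_f^k$ to a sup over $v^\top\nabla_{kk}^2 f(\xi)\,v$, and you bound this quadratic form directly by exploiting the explicit Hessian structure $A_k^\top(\mathrm{diag}(\adaweight)-\adaweight\adaweight^\top)A_k$ of the log-sum-exp, the PSD ordering $\mathrm{diag}(\adaweight)-\adaweight\adaweight^\top\preceq\mathrm{diag}(\adaweight)$, and the passage $\sum_i\adaweight_i(A_kv)_i^2\le\|A_kv\|_\infty^2\le\|A_k\|_1^2\|v\|_1^2$. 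Your argument is more self-contained (no appeal to the curvature-vs-Lipschitz lemma) and makes transparent exactly where each factor in the bound originates; the paper's version is terser if one is willing to cite that lemma. Morally the two are equivalent, since bounding the Hessian quadratic form in the $\ell_1$ geometry is precisely what controls the block-Lipschitz constant of the gradient.
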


\begin{proof}
    For the following proof, we rely on two key concepts: the Lipschitz continuity and the diameter of a compact space.
    A function $f: \mathcal{X} \to \mathbb{R}$ is $L$-lipschitz w.r.t. the norm $ \normone{.} $ if $\forall (x, x') \in \mathcal{X}^2$:
    \begin{equation}
        |f(x) - f(x')| \leq L \normone{x - x'}.
    \end{equation}
    The diameter of a compact normed vector space $(\mathcal{M}, \norm{.})$ is defined as:
    \begin{equation}
        \textstyle\diam_{\norm{.}}(\mathcal{M}) = \displaystyle\sup_{x, x' \in \mathcal{M}} \norm{x - x'}.
        \nonumber
    \end{equation}
    We can easily bound the diameter of the subspace $\mathcal{M}^k = \{\alpha_k\in\mathbb{R}^n : \normone{\alpha_k}\leq \beta\}$ as follows:
    \begin{equation}
      \label{eq:diam}
      \textstyle\diam_{\normonenormal{.}}(\mathcal{M}^k) = \max_{\alpha_k, \alpha'_k \in \mathcal{M}^k} \normone{\alpha_k - \alpha'_k} = 2 \beta .
    \end{equation}
    We recall the expression for the partial gradient:
    \begin{equation}
    \label{eq:grad2}
        \nabla_k f(\alpha) = - d_k(w) c_k \adaweight_k(\alpha)^\top A_k + \mu_1
        \Big(d_k(w) \pad{\alpha}{k} - \sum_l w_{k,l}\pad{\alpha}{l} \Big),
    \end{equation}
    where we denote $\adaweight_k(\alpha) = \frac{\exp(-A_k \pad{\alpha}{k})}{ \sum_{i=1}^{m_k} \exp(-A_k \pad{\alpha}{k})_i}$. We bound the Lipschitz constant of $\adaweight_k(\alpha)$ by bounding its first derivative:
    \begin{align}
        \normone{\nabla_k (\adaweight_k(\alpha))} &= \normone{(- \adaweight_k
        (\alpha) + \adaweight_k(\alpha)^2)^\top A_k} \nonumber \\
        & \leq \normone{A_k}.\label{lip:w}
    \end{align}
    Eq. \eqref{lip:w} is due to the fact that $ \normone{\adaweight_k(\alpha)}
    \leq 1 $ and $ \adaweight_k(\alpha) \geq 0$. It is then easy to see that
    considering any two vectors $\alpha, \alpha' \in \mathcal{M}$ differing only in their $k$-th block ($\pad{\alpha}{l} = \pad{\alpha'}{l}~ \forall l \ne k$), the Lipschitz constant $L_k$ of the partial gradient $\nabla_k f$ in \eqref{eq:grad2} is bounded by $d_k(w) (c_k \|A_k\|_1^2 + \mu_1)$.

    % Considering the fact that $\forall k$

    % $$\normone{\pad{\alpha_1}{k} - \pad{\alpha_2}{k}} \leq \normone{\alpha_1 - \alpha_2} $$ 

    % then,

    % \begin{equation}
    %     \label{lip:first}
    %     \normone{d_k c_k (\pad{w_2}{k}^T - \pad{w_1}{k}^T)A_k} \leq d_k c_k \normone{A_k}  \normone{\alpha_1 - \alpha_2}.
    % \end{equation}

    % and

    % \begin{equation}
    %     \label{lip:second}
    %     \normone{\mu_1 (d_k(\pad{\alpha_2}{k} - \pad{\alpha_1}{k})} \leq \mu_1 d_k \normone{\alpha_1 - \alpha_2} \enspace .
    % \end{equation}

    % Finally, we can derive the Lipschitz constant $L_k = (d_k c_k A_k + \mu_1 d_k + \mu_1 K)$ of the partial gradient considering any two vectors $\alpha_1, \alpha_2 \in \mathcal{M}^K$ different only for their block of components $k$ ($\pad{\alpha_1}{l} = \pad{\alpha_2}{l} \forall l \ne k$).
    % Using~\ref{lip:first}~\ref{lip:second} we get:

    % \begin{align}
    %     \label{lip:g}
    %     &| \nabla_k f(\alpha_1) - \nabla_k f(\alpha_2)| = \normone{- d_k c_k w_k(\alpha_1)^TA_k + \mu_1 \left(d_k \pad{\alpha_1}{k} - \sum_l W_{kl}\pad{\alpha_1}{l} \right) \nonumber \\
    %     &+ d_k c_k w_k(\alpha_2)^TA_k - \mu_1 \left(d_k \pad{\alpha_2}{k} - \sum_l W_{kl} \pad{\alpha_2}{l} \right)} \\
    %     &\leq \normone{d_k c_k (w_k(\alpha_2)^T - w_k(\alpha_1)^T)A_k} + \normone{\mu_1 (d_k(\pad{\alpha_2}{k} - \pad{\alpha_1}{k})} \\
    %     &\leq d_k c_k \normone{A_k}^2  \normone{\alpha_1 - \alpha_2} + \mu_1 d_k \normone{\alpha_1 - \alpha_2} \\
    %     &\leq (d_k c_k \normone{A_k}^2 + \mu_1 d_k) \normone{\alpha_1 - \alpha_2}.
    % \end{align}

    Finally, we obtain Lemma~\ref{lemma:curvature} by combining the above results \eqref{eq:diam} and \eqref{lip:w}:
    \begin{equation}
        C^\otimes_f = \sum_{k=1}^K C_f^k \leq \sum_{k=1}^K L_k \textstyle\diam^2_{\normonenormal{.}}(\mathcal{M}^k)
        \leq 4 \beta^2 \sum_{k=1}^K d_k(w)( c_k \normone{A_k}^2  + \mu_1).
    \end{equation}
\end{proof}

\subsection{Convergence Analysis}

We can now prove the convergence rate of our algorithm by following the proof
technique proposed by Jaggi in~\citep{pmlr-v28-jaggi13} and refined by \citep{pmlr-v28-lacoste-julien13} for the case of block coordinate Frank-Wolfe.

We start by introducing some useful notation related to our problem 
\eqref{eq:obj_supp}:
\begin{align}
\gap(\alpha) &= \max_{s\in \mathcal M}\{ (\alpha-s)^\top \nabla(f(\alpha))\}
 = \textstyle\sum_{k=1}^K \gap_k(\alpha_k)\nonumber\\
&= \textstyle\sum_{k=1}^K \max_{s_k \in \mathcal{M}^k} \{ (\alpha_k -
s_k)^\top
\nabla_k f(\alpha) \} \label{eq:dualgap}
\end{align}
The quantity $\gap(\alpha)$ can serve as a certificate for the quality of a
current approximation of the
optimum of the objective function \citep{pmlr-v28-jaggi13}. In particular, one can show that $f
(\alpha)-f(\alpha^*) \leq \gap(\alpha)$ where $\alpha^*$ is a solution of 
\eqref{eq:obj_supp}. Under a bounded global product curvature constant
$C^\otimes_f$, we will obtain the convergence of Frank-Wolfe by showing that
the surrogate gap decreases in expectation over the
iterations, because at a given iteration $t$ the block-wise surrogate gap at
the current solution $\gap_k(\kt{\alpha}{k}{t})$ is minimized by the greedy
update $\kt{s}{k}{t} \in \mathcal{M}^k$.

  Using the definition of the curvature \eqref{eq:global-curv} and rewriting $\hat{\alpha}_k - \alpha_k = -\gamma_k(\alpha_k-s_k)$, we obtain  
  \begin{equation*}
    f(\hat{\alpha}) \leq f(\alpha) - \gamma   (\alpha_k-s_k)^\top
    \nabla_k f(\alpha) + \gamma^2 \frac{{C_f^k}}{2}.
  \end{equation*}
  In particular, at any iteration $t$, the previous inequality holds for
  $\gamma = \kt{\gamma}{}{t} = \frac{2K}{t+2K}$, $\kt{\alpha}{}{t+1} = 
  \kt{\alpha}{}{t} + \kt{\gamma}{}{t} ( \kt{s}{k}{t+1} - \kt{\alpha}{k}{t+1})
  $ with $\kt{s}{k}{t+1} = \argmin_{s \in \mathcal{M}^k} \{s^\top \nabla_k f
  (\kt{\alpha}{}{t})\}$ as defined in \eqref{eq:fwupdate}. Therefore, 
  $ (\alpha_k-s_k)^\top \nabla_k f(\alpha)$ is by definition $\gap_k
  (\alpha_k)$ and 
  \begin{equation*}
    f(\kt{\alpha}{}{t+1}) \leq f(\kt{\alpha}{}{t}) - \kt{\gamma}{}{t} \gap_k(\kt{\alpha}{k}{t}) + (\kt{\gamma}{}{t})^2 \frac{{C_f^k}}{2} \enspace .
  \end{equation*}
  By taking the expectation over the random choice of $k \sim \mathcal{U}(1, K)$ on both sides, we obtain 
  \begin{align}
   \shortexpect{f(\kt{\alpha}{}{t+1})} 
   & \leq \shortexpect{f(\kt{\alpha}{}{t})} - \kt{\gamma}{}{t} \shortexpect{\gap_k(\kt{\alpha}{k}{t})} + \frac{(\kt{\gamma}{}{t})^2 \shortexpect{{C_f^k}}}{2} \nonumber \\
   & \leq \shortexpect{f(\kt{\alpha}{}{t})} - \frac{\kt{\gamma}{}{t} \gap(\kt{\alpha}{}{t})}{K} + \frac{(\kt{\gamma}{}{t})^2 C^\otimes_f}{2 K} \enspace . \label{eq:expe}
  \end{align}
  Let us define the sub-optimality gap $p(\alpha) = f(\alpha) - f^*$ with $f^*$ the optimal value of $f$. 
  By subtracting $f^*$ from both sides in~\eqref{eq:expe}, we obtain
  \begin{align}
    \shortexpect{p(\kt{\alpha}{}{t+1})} &\leq \shortexpect{p(\kt{\alpha}{}{t})} - \frac{\kt{\gamma}{}{t}}{K} \shortexpect{p(\kt{\alpha}{}{t})} + (\kt{\gamma}{}{t})^2 \frac{C^\otimes_f}{2K} \label{eq:leq-gap} \\
    & \leq \left(1 - \frac{\kt{\gamma}{}{t}}{K} \right) \shortexpect{p(\kt{\alpha}{}{t})} + (\kt{\gamma}{}{t})^2 \frac{C^\otimes_f}{2K}.
  \end{align}
  Inequality \eqref{eq:leq-gap} comes from the definition of the surrogate gap~\eqref{eq:dualgap} which ensures that $ \shortexpect{p(\alpha)} \leq \gap(\alpha)$. 

  Therefore, we can show by induction that the expected sub-optimality gap satisfies $\shortexpect{p(\kt{\alpha}{}{t+1})} \leq \frac{2 K (C^\otimes_f + p_0)}{t + 2 K}$, with $p_0 = p(\kt{\alpha}{}{0})$ the initial gap.
  This shows that the expected sub-optimality gap $\shortexpect{p(\alpha)}$ decreases with the number of iterations with a rate $O(\frac{1}{t})$, which implies the convergence of our algorithm to the optimal solution. 
  The final convergence rate can then be obtained by the same proof as \citep{pmlr-v28-lacoste-julien13} (Appendix C.3 therein) combined with Lemma~\ref{lemma:curvature}.

%%% Local Variables:
%%% mode: latex
%%% TeX-master: "supp.tex"
%%% ispell-local-dictionary: "english"
%%% End:

% !TEX root = supp.tex

\section{PROOF OF THEOREM~\ref{the:disco}}
\label{app:cd}

We first show that our algorithm can be explicitly formulated as an instance
of proximal block coordinate descent (Section~\ref{sec:prox}).
Building upon this formulation, we prove the convergence rate in Section~
\ref{sec:conv_disco}.

\subsection{Interpretation as Proximal Coordinate Descent}
\label{sec:prox}

First, we reformulate our graph learning subproblem as an equivalent
unconstrained optimization problem by incorporating the nonnegativity
constraints into the objective:
\begin{align}
\label{eq:obj_uncons}
\min_{w\in\mathbb{R}^{K(K-1)/2}} & ~F(w) = h(w) + r(w),
\end{align}
where 
\begin{align}
  h(w) &= \sum_{k=1}^{K} d_k(w)c_k\mathcal{L}_k(\alpha_k; S_k) +
         \frac{\mu_1}{2}\sum_{k<l} w_{k,l}\|\alpha_k-\alpha_l\|^2 +\mu_2g(w)\\
  % \mu_1\Big(\frac{1}{2}\sum_{k<l} w_{k,l}\|\alpha_k-\alpha_l\|^2 - \textbf{1}^\top\log(d + \delta)+\mu_2\|w\|^2\Big),\\  
  r(w) &= \sum_{k<l}\mathbb{I}_{\geq 0}(w_{k,l}).
\end{align}
In the expression above, $\mathbb{I}_{\geq 0}$ denotes the characteristic
function of the nonnegative orthant of $\mathbb{R}$: $\mathbb{I}_{\geq 0}
(x)=0$ if $x\geq 0$ and $+\infty$ otherwise. Recall that $g$ is the sum of smooth, weight-separable and degree-separable functions

$$g(w) = \textstyle\sum_{k < l}g_{k,l}(w_{k,l}) + \sum_{k=1}^K g_k(d_k(w))\enspace .$$

We assume that  $h(w)$ is strongly
convex and smooth, while it is clear that $r(w)$ is
not smooth but convex and separable across the coordinates of $w$. We will
denote by $w^*$ the solution to \eqref{eq:obj_uncons}, which is also the
solution to our original (constrained) graph learning subproblem.

We will now show that the algorithm presented in the main text can be
explicitly 
reformulated as an instance of proximal block coordinate descent 
\citep{proxcd} applied to the function $F
(w)$. In
the process, we will introduce some notations that we will reuse in
the convergence analysis provided in Section~\ref{sec:conv_disco}.
At each iteration $t$, a random block of coordinates indexed by $(k, 
\mathcal{K})$ is selected. Consider the following update:
\begin{equation}
\label{eq:rewrite_update1}
\left\{\begin{array}{l}
z^{(t)}_{k, \mathcal{K}} \leftarrow \displaystyle\argmin_{z\in
\mathbb{R}^\kappa}\Big\{ (z - w^{
(t)}_{k,
\mathcal{K}})^\top[\nabla h(w^{(t)})]_{k, \mathcal{K}} + \frac{L_{k,\mathcal{K}}}
{2}\|z
- w^{(t)}_{k,
\mathcal{K}}\|^2 + \sum_{l\in\mathcal{K}}\mathbb{I}_{\geq 0}(z_{k,l})\Big\}\\
w^{(t+1)} \leftarrow w^{(t)} + U_{k, \mathcal{K}}(z^{(t)}_{k, \mathcal{K}} - w^{(t)}_{k, 
\mathcal{K}})
\end{array}\right.
\end{equation}
For notational convenience, $U_{k, \mathcal{K}}$ denotes the column submatrix
of the $K(K-1)/2\times K(K-1)/2$ identity matrix such that $w^\top U_{k, 
\mathcal{K}} = w_{k, \mathcal{K}}\in\mathbb{R}^\kappa$ for any $w\in
\mathbb{R}^{K
(K-1)/2}$.
% that maps the
% entries of a
% vector in $
% \mathbb{R}^\kappa$ to the coordinates indexed by $(k, 
% \mathcal{K})$ of a  space $\mathbb{R}^{K(K-1)/2}$.

Notice that the minimization problem in \eqref{eq:rewrite_update1} is
separable and can be solved independently for each coordinate.
Denoting by
\begin{equation}
\label{eq:ztilde}
\tilde{z}^{(t)} = \argmin_{z\in
\mathbb{R}^{K(K-1)/2}}\big\{ (z - w^{
(t)})^\top\nabla h(w^{(t)}) + \frac{L_{k,\mathcal{K}}}{2}\|z - w^{(t)}\|^2 + r
(z)\big\},
\end{equation}
we can thus rewrite 
\eqref{eq:rewrite_update1} as:
\begin{equation}
\label{eq:rewrite_update2}
w^{(t+1)}_{j,l} = \left\{\begin{array}{ll}
\tilde{z}^{(t)}_{j,l} & \text{if } j=k \text{ and } l\in\mathcal{K}\\
w^{(t)}_{j,l} & \text{otherwise}
\end{array}\right.
\end{equation}

Finally, recalling the
definition of the proximal operator of a function $f$:
$${\prox}_{f}(w) = \argmin_{z} \Big\{f(z) + \frac{1}{2}\|w -
z\|^2\Big\},$$
we can rewrite \eqref{eq:ztilde} as:
\begin{equation}
\tilde{z}^{(t)} = {\prox}_{\frac{1}{L_{k,\mathcal{K}}}r}(w^{(t)} - (1 / L_{k,\mathcal{K}})\nabla
h(w^{
(t)})).
\end{equation}

We have indeed obtained that \eqref{eq:rewrite_update2} corresponds to a
proximal block coordinate descent update \citep{proxcd}, i.e. a proximal
gradient descent step restricted to a block of coordinates.

When $f$ is the characteristic function of a set, the proximal operator
corresponds to the Euclidean projection onto the set. Hence, in our case we
have ${\prox}_r(w) = \max(0, w)$ (the thresholding operator), and we recover
the simple update introduced in the main text.

\subsection{Convergence Analysis}
\label{sec:conv_disco}

We start by introducing a convenient lemma.

\begin{lemma}
\label{lem:blocksmooth}
For any block of size $\kappa$ indexed by $(k, \mathcal{K})$, any $w\in
\mathbb{R}^{K(K-1)/2}$
and any $z\in\mathbb{R}^\kappa$, we have:
$$h(w + U_{k,\mathcal{K}}z) \leq h(w) + z^\top[\nabla h(w)]_{k,\mathcal{K}} +
\frac{L_{k,\mathcal{K}}}{2}\|z\|^2.$$
\end{lemma}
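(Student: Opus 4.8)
The plan is to derive this block descent lemma from the very definition of $L_{k,\mathcal{K}}$ as the Lipschitz constant of $\nabla h$ restricted to the block $w_{k,\mathcal{K}}$, using the standard fundamental-theorem-of-calculus argument. First I would observe that, because the models $\alpha$ are fixed, the first two terms of $h$ are \emph{linear} in $w$ (recall $d_k(w)=\sum_l w_{k,l}$), so $h$ is the sum of a linear function and $\mu_2 g(w)$ with $g$ smooth and convex; hence $h$ is continuously differentiable and $\nabla h$ is Lipschitz on every block, the constant $L_{k,\mathcal{K}}$ being inherited entirely from $\mu_2 g$ since the linear part has vanishing second-order behaviour. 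In particular, for all $w\in\mathbb{R}^{K(K-1)/2}$ and all $z\in\mathbb{R}^\kappa$ we have the block-Lipschitz property $\|[\nabla h(w + U_{k,\mathcal{K}}z)]_{k,\mathcal{K}} - [\nabla h(w)]_{k,\mathcal{K}}\|\leq L_{k,\mathcal{K}}\|z\|$.

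Next I would introduce the scalar auxiliary function $\phi:[0,1]\to\mathbb{R}$ defined by $\phi(\tau)=h(w+\tau U_{k,\mathcal{K}}z)$. Since the perturbation $U_{k,\mathcal{K}}z$ is supported on the block $(k,\mathcal{K})$, the chain rule gives $\phi'(\tau)=z^\top[\nabla h(w+\tau U_{k,\mathcal{K}}z)]_{k,\mathcal{K}}$, and integrating from $0$ to $1$ yields
\begin{equation*}
h(w + U_{k,\mathcal{K}}z) - h(w) = \int_0^1 z^\top[\nabla h(w + \tau U_{k,\mathcal{K}}z)]_{k,\mathcal{K}}\,d\tau .
\end{equation*}
Subtracting $z^\top[\nabla h(w)]_{k,\mathcal{K}}$ from both sides and inserting it back inside the integral,
\begin{equation*}
h(w + U_{k,\mathcal{K}}z) - h(w) - z^\top[\nabla h(w)]_{k,\mathcal{K}} = \int_0^1 z^\top\big([\nabla h(w + \tau U_{k,\mathcal{K}}z)]_{k,\mathcal{K}} - [\nabla h(w)]_{k,\mathcal{K}}\big)\,d\tau .
\end{equation*}

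Finally I would bound the right-hand side: by Cauchy--Schwarz and the block-Lipschitz property the integrand is at most $\|z\|\cdot L_{k,\mathcal{K}}\|\tau z\| = L_{k,\mathcal{K}}\tau\|z\|^2$, and $\int_0^1 L_{k,\mathcal{K}}\tau\|z\|^2\,d\tau = \tfrac{L_{k,\mathcal{K}}}{2}\|z\|^2$, which gives the claimed inequality after rearranging. I do not expect a genuine obstacle here: this is the textbook descent lemma specialized to a coordinate block. The only mild care needed is to justify that $\nabla h$ is well defined and block-Lipschitz with a finite $L_{k,\mathcal{K}}$, which follows from the linear-plus-$\mu_2 g$ structure of $h$ together with the smoothness of $g$ — made fully explicit for the concrete regularizer $g(w)=\lambda\|w\|^2-\mathbf{1}^\top\log(d(w)+\delta)$ in Section~\ref{sec:smooth}.
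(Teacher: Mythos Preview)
Your proposal is correct and follows essentially the same approach as the paper: the paper's proof simply invokes ``Taylor's inequality'' applied to the restriction $q_w(z)=h(w+U_{k,\mathcal{K}}z)$ together with the $L_{k,\mathcal{K}}$-block smoothness of $h$, and your fundamental-theorem-of-calculus argument is precisely the standard derivation of that inequality spelled out in full. Your additional remark that the first two terms of $h$ are linear in $w$ (so the block-Lipschitz constant is inherited entirely from $\mu_2 g$) is correct and a nice clarification, though the paper does not make it explicit in the proof of this lemma.
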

\begin{proof}
This is obtained by applying Taylor's inequality to the function
\begin{align*}
q_w : & ~\mathbb{R}^\kappa\rightarrow\mathbb{R}\\
& ~z\mapsto h(w + U_{k,\mathcal{K}}z)
\end{align*}
combined with the convexity and $L_{k,\mathcal{K}}$-block smoothness of
$h$.
\end{proof}

We are now ready the prove the convergence rate of our algorithm. We focus
below on the more interesting cases where the block size $\kappa>1$, since the
case $\kappa=1$ (blocks of size 1) reduces to standard proximal
coordinate descent and can be addressed directly by previous work 
\citep{proxcd,wright2015coordinate}.

Recall that in our algorithm, at each iteration an user $k$ is drawn
uniformly at random from $[K]$, and then this user samples a set $
\mathcal{K}$ of $\kappa$ other users uniformly and without replacement from
the set $\{1, \dots, K\} \setminus \{k\}$. This gives rise to a block of
coordinates indexed by $(k, \mathcal{K})$. Let $\mathcal{B}$ be
the set of such possible block indices. Note that $\mathcal{B}$ has
cardinality $K{K-1\choose\kappa}$ since for $\kappa>1$ all ${K-1\choose\kappa}$ blocks that
can be sampled by an user are unique
(i.e., they cannot be sampled by other users).
However, it is important to
note that unlike commonly assumed in the block coordinate descent literature,
our blocks exhibit an overlapping structure: each coordinate block $b\in
\mathcal{B}$ shares some of its coordinates with several other
blocks in $\mathcal{B}$. In particular, each
coordinate (graph weight) $w_
{i,j}$ is shared by user $i$ and user $j$ can thus be part of
blocks drawn by both users.
Our analysis builds
upon the proof technique of
\citep{wright2015coordinate}, adapting the arguments to handle our update
structure based on overlapping blocks rather than single coordinates.

Let
$b_t=(k,
\mathcal{K})\in\mathcal{B}$ be the block of coordinates selected at iteration
$t$. For notational convenience, we write $(j,l)\in b_t$ to denote the set of
coordinates indexed by block $b_t$ (i.e., index pairs $(j,l)$ such that $j=k$
and $l\in\mathcal{K}$).
Consider the expectation of the objective function $F(w^{ (t+1)})$ in
\eqref{eq:obj_uncons} over the choice of $b_t$, plugging in the update 
\eqref{eq:rewrite_update1}:

\begin{align}
\mathbb{E}_{b_t}[F(w^{(t+1)})] &= \mathbb{E}_{b_t}\Big[h\big(w^{(t)} + U_{b_t}
(z^{(t)}_{b_t} - w^{(t)}_{b_t})\big) + \sum_{(j,l)\in b_t}\mathbb{I}_
{\geq 0}(z^{(t)}_
{j,l}) +
\sum_{
(j,l)\notin b_t}\mathbb{I}_{\geq 0}(w^{(t)}_{j,l})\Big]\nonumber\\
&= \frac{1}{K{K-1\choose \kappa}}\sum_{b\in\mathcal{B}}\Big[h\big(w^{(t)} + U_
{b}
(z^{(t)}_{b} - w^{(t)}_{b})\big) + \sum_{(j,l)\in b}\mathbb{I}_
{\geq 0}(z^{(t)}_
{j,l}) +
\sum_{
(j,l)\notin b}\mathbb{I}_{\geq 0}(w^{(t)}_{j,l})\Big]\nonumber\\
&\leq \frac{1}{K{K-1\choose \kappa}}\sum_{b\in\mathcal{B}} \Big[h(w^{(t)}) + (
z^{(t)}_{b}
-
w^{(t)}_{b})^\top[\nabla h(w^{(t)})]_b + \frac{L_b}{2}\|z^{(t)}_
{b}
-
w^{(t)}_{b}\|^2 \nonumber\\& + \sum_{(j,l)\in b}\mathbb{I}_
{\geq 0}(z^{(t)}_
{j,l}) +
\sum_{
(j,l)\notin b}\mathbb{I}_{\geq 0}(w^{(t)}_{j,l})\Big],\label{eq:use_smooth}
% \\
% &= \frac{K{K-1\choose \kappa}-1}{K{K-1\choose \kappa}}F(w^{(t)}) + 
\end{align}
where we have used Lemma~\ref{lem:blocksmooth} to obtain 
\eqref{eq:use_smooth}, and $z^{(t)}_{b}$ is defined
as in \eqref{eq:rewrite_update1} for any block $b=(k, \mathcal{K})$.

We now need to aggregate the blocks over the sum in \eqref{eq:use_smooth},
taking into account the overlapping structure of our blocks.
We rely on the observation that each coordinate $w_{k,l}$
appears in exactly $2
{K-2\choose\kappa-1}$ blocks.
% , we rely on the observation that each block is
% composed of a
% unique set of coordinates that overlaps with a single coordinate of exactly $ \kappa{K-2\choose \kappa-1}$ other blocks. Hence, each
% coordinate $w_{k,l}$ appears in exactly $\kappa{K-2\choose \kappa-1}$ terms of
% the sum in  \eqref{eq:use_smooth} for all $k, l\in[K]$ with $k<l$.
% \aurelien{I think this is wrong: the overlap size can be larger than 1 for
% blocks of the same user $k$. the right thing should probably be that each
% coordinate appears in exactly $(\kappa+1){K-2\choose\kappa-1}$ blocks}
% in each block, each coordinate appears in $(\kappa+1){K-2\choose
% \kappa-1}$ other blocks? this accounts for blocks of the user itself
% also need the fact that each coordinate appears in the same number of blocks
% - how many?
%
% Noting that
% $\frac{\kappa{K-2\choose \kappa-1}}{K{K-1\choose \kappa}}
% = \frac{\kappa^2}{K(K-1)}$,
% and for simplicity let us define:
% \begin{equation}
% \label{eq:b_kappa}
% b(\kappa) = \left\{\begin{array}{ll}
% \frac{2}{K(K-1)} & \text{if } \kappa=1\\
% \frac{\kappa^2}{K(K-1)} & \text{if } \kappa > 1
% \end{array}\right.
% \end{equation}
Grouping coordinates accordingly in \eqref{eq:use_smooth} gives:

\begin{equation}
\label{eq:H_appear}
\begin{aligned}
\mathbb{E}_{b_t}[F(w^{(t+1)})] &\leq \frac{K{K-1\choose \kappa}-\kappa{K-2\choose \kappa-1}}{K
{K-1\choose \kappa}}F(w^{(t)})\\ &+ \frac{\kappa{K-2\choose \kappa-1}}{K
{K-1\choose \kappa}}\Big( h(w^{(t)}) +
(\tilde{z}^{(t)} - w^{(t)})\nabla h(w^{(t)}) + \frac{L_{max}}{2}\|\tilde{z}^{
(t)} - w^{(t)}\|^2  + r(\tilde{z}^{(t)}) \Big),
\end{aligned}
\end{equation}
where $\tilde{z}^{(t)}$ is defined as in \eqref{eq:ztilde}. This is because the
block $b$ of $\tilde{z}^{(t)}$ is equal to $z^{(t)}_{b}$, as explained in
Section~\ref{sec:prox}.

We now deal with the second term in \eqref{eq:H_appear}.
Let us consider the following function $H$:
$$H(w^{(t)}, z) = h(w^{(t)}) + (z - w^{(t)})^\top\nabla h(w^{(t)}) + \frac{L_{max}}{2}
\|z - w^{(t)}\|^2 + r(z).$$
% Furthermore, as noted
% before about
% update \eqref{eq:rewrite_update1}, $H$ is separable
% across the coordinates of $z$ and thus the block $(k, \mathcal{K})$ of $
% \tilde{z}^{(t)}$ is equal to $z^{(t)}_{k, \mathcal{K}}$ as defined in update
% \eqref{eq:rewrite_update1}.
% Furthermore,
By $\sigma$-strong convexity of $h$, we have:
\begin{align}
H(w^{(t)}, z) &\leq h(z) - \frac{\sigma}{2}\|z - w^{(t)}\|^2 + \frac{L_{max}}{2}
\|z - w^{(t)}\|^2 + r(z)\nonumber\\
&= F(z) + \frac{1}{2}(L_{max}-\sigma)\|z - w^{(t)}\|^2.\label{eq:H2}
\end{align}

Note that $H$ achieves its minimum at $\tilde{z}^{(t)}$ defined in \eqref{eq:ztilde}.
If we minimize over $z$ both sides of \eqref{eq:H2} we get
\begin{align*}
H(w^{(t)}, \tilde{z}^{(t)}) &= \min_z H(w^{(t)}, z)\\
&\leq \min_z F(z) + \frac{1}{2}(L_{max}-\sigma)\|z - w^{(t)}\|^2.
\end{align*}

By $\sigma$-strong convexity of $F$,\footnote{$F=h+r$ is $\sigma$-strongly convex since $h$ is $\sigma$-strongly
convex and $r$ is convex.} we have for any $w,w'$ and $\alpha\in[0,1]$:
\begin{equation}
\label{eq:strongF}
F(\alpha w + 
(1-\alpha)w') \leq \alpha F(w) + (1-\alpha)F(w') - \frac
{\sigma\alpha(1-\alpha)}{2}\|w-w'\|^2.
\end{equation}

Using the change of variable $z=\alpha w^* + (1-\alpha)w^{(t)}$ for
$\alpha\in[0,1]$ and \eqref{eq:strongF} we
obtain:
\begin{align}
H(w^{(t)}, \tilde{z}^{(t)}) &\leq \min_{\alpha\in[0, 1]} F\big(\alpha w^* + 
(1-\alpha)w^{(t)}\big) + \frac{1}{2}(L_{max}-\sigma)\alpha^2\|w^* - w^{(t)}\|^2\nonumber\\
&\leq \min_{\alpha\in[0, 1]} \alpha F(w^*) + (1-\alpha)F(w^{(t)}) + \frac{1}
{2}\big[(L_{max}-\sigma)\alpha^2 - \sigma\alpha(1-\alpha)\big]\|w^* - w^{
(t)}\|^2\label{eq:preH3}\\
&\leq \frac{\sigma}{L_{max}}F(w^*) + \Big(1-\frac{\sigma}{L_{max}}\Big) F(w^
{(t)}),\label{eq:H3}
\end{align}
where the last inequality is obtained by plugging the value
$\alpha=\sigma/L_{max}$, which cancels the last term in \eqref{eq:preH3}.

We can now plug \eqref{eq:H3} into \eqref{eq:H_appear} and subtract $F(w^*)$
on both sides to get:
\begin{align}
\mathbb{E}_{b_t}[F(w^{(t+1)})] - F(w^*) &\leq \frac{K{K-1\choose \kappa}-2
{K-2\choose \kappa-1}}{K
{K-1\choose \kappa}}F(w^{(t)}) + \frac{2{K-2\choose \kappa-1}}{K
{K-1\choose \kappa}}\Big(\frac{\sigma}{L_{max}}F(w^*) + \Big(1-\frac{\sigma}{L_{max}}\Big) F(w^
{(t)})\Big)\nonumber\\
&= \Big(1- \frac{2\kappa\sigma}{K(K-1)L_{max}}\Big)(F(w^{(t)}) - F
(w^*))\label{eq:cdprooffinal},
\end{align}
where we used the fact that $\frac{2{K-2\choose \kappa-1}}{K{K-1\choose
\kappa}} =
\frac{2\kappa}{K(K-1)}$.
We conclude by taking the expectation of both sides with respect to the choice
of previous blocks $b_0, \dots, b_{t-1}$ followed by a recursive application
of the resulting formula. %, and finally plugging in the strong convexity and
% smoothness parameters obtained in Section~\ref{sec:smooth}.

\section{SMOOTHNESS AND STRONG CONVEXITY OF GRAPH LEARNING FORMULATION}
\label{sec:smooth}

We derive the (block) smoothness and strong convexity parameters of the
objective function $h(w)$ when we use
\begin{equation}
\label{eq:kalo}
g(w) = \lambda\|w\|^2 - \mathbf{1}^\top\log(d(w) + \delta).
\end{equation} 

\textbf{Smoothness.} A function $f:\mathcal{X}\rightarrow\mathbb{R}$ is
$L$-smooth w.r.t. the Euclidean norm if its gradient is $L$-Lipschitz, i.e. $\forall(x,x')\in\mathcal{X}^2$:
$$\|\nabla f(x) - \nabla f(x')\| \leq L \|x - x'\|.$$

In our case, we need to analyze the smoothness of our objective function $h$
for each block of coordinates indexed by $(k, \mathcal{K})$. Therefore for any $(w,w')\in\mathbb{R}_+^{K(K-1)/2}$ which differ only in the $(k, \mathcal{K})$-block, we want to find $L_{k,\mathcal{K}}$ such that:
$$\|[\nabla h(w)]_{k,\mathcal{K}} - [\nabla h(w')]_{k,\mathcal{K}}\| \leq L_
{k,\mathcal{K}} \|w_{k,\mathcal{K}} - w'_{k,\mathcal{K}}\|.$$

\begin{lemma}
For any block $(k, \mathcal{K})$ of size $\kappa$, we have
$L_{k,\mathcal{K}} \leq
% 2\mu_1(\frac{\kappa}{\delta^2} + \lambda)$.
\mu_1(\frac{\kappa+1}{\delta^2} + 2\lambda)$.
\end{lemma}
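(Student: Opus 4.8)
The plan is to reduce everything to bounding the spectral norm of one explicit $\kappa\times\kappa$ matrix. First I would isolate which part of $h$ actually depends on $w$. Using the expression \eqref{eq:gradpartial}, $[\nabla h(w)]_{k,\mathcal{K}} = p_{k,\mathcal{K}} + (\mu_1/2)\Delta_{k,\mathcal{K}} + \mu_2 v_{k,\mathcal{K}}(w)$, and noting that for fixed models $\alpha$ the loss--degree term $\sum_j d_j(w)c_j\mathcal{L}_j(\alpha_j;S_j)$ and the graph-smoothness term $\tfrac{\mu_1}{2}\sum_{k<l}w_{k,l}\|\alpha_k-\alpha_l\|^2$ are both \emph{affine} in $w$, so that $p_{k,\mathcal{K}}$ and $\Delta_{k,\mathcal{K}}$ are constant. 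Hence only $\mu_2 v_{k,\mathcal{K}}(w)$ contributes to the block-Lipschitz constant, and since we set $\mu_2=\mu_1$ it suffices to show that $w_{k,\mathcal{K}}\mapsto v_{k,\mathcal{K}}(w)$ is $\big(\tfrac{\kappa+1}{\delta^2}+2\lambda\big)$-Lipschitz.

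Next I would plug in the specific regularizer \eqref{eq:kalo}, which is of the required separable form with $g_{k,l}(t)=\lambda t^2$ (so $g_{k,l}''=2\lambda$) and $g_k(x)=-\log(x+\delta)$ (so $g_k''(x)=(x+\delta)^{-2}$). Differentiating $v_{k,\mathcal{K}}(w)=\big(g_k'(d_k(w))+g_l'(d_l(w))+g_{k,l}'(w_{k,l})\big)_{l\in\mathcal{K}}$ with respect to the coordinates $w_{k,\mathcal{K}}$, and using $\partial d_j/\partial w_{a,b}=1$ iff $j\in\{a,b\}$, I would obtain that the Jacobian equals
\[
J_{k,\mathcal{K}}(w) \;=\; \frac{1}{(d_k(w)+\delta)^2}\,\mathbf{1}\mathbf{1}^\top \;+\; \mathrm{diag}\!\Big(\tfrac{1}{(d_l(w)+\delta)^2}+2\lambda\Big)_{l\in\mathcal{K}},
\]
where $\mathbf{1}\in\mathbb{R}^\kappa$ is the all-ones vector. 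The structural observation driving this is that all coordinates in the block $(k,\mathcal{K})$ share the endpoint $k$, which is exactly why the term carrying $(d_k(w)+\delta)^{-2}$ is the dense rank-one matrix $\mathbf{1}\mathbf{1}^\top$, whereas the other endpoints $l\in\mathcal{K}$ are pairwise distinct and the edge penalties $w_{k,l}^2$ act on single coordinates, so both only appear on the diagonal.

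Finally, since $h$ restricted to this block is twice continuously differentiable and convex, $L_{k,\mathcal{K}}$ is the supremum over feasible $w\geq 0$ of $\mu_1\|J_{k,\mathcal{K}}(w)\|_2$, which I would bound by the triangle inequality: $\|\mathbf{1}\mathbf{1}^\top\|_2=\kappa$, the constraint $d_k(w)\geq 0$ gives $(d_k(w)+\delta)^{-2}\leq\delta^{-2}$, and the diagonal part has operator norm $\max_{l}\big(\tfrac{1}{(d_l(w)+\delta)^2}+2\lambda\big)\leq\delta^{-2}+2\lambda$. Adding these yields $\|J_{k,\mathcal{K}}(w)\|_2\leq \kappa/\delta^2+1/\delta^2+2\lambda$, and multiplying by $\mu_1$ gives the claimed $L_{k,\mathcal{K}}\leq\mu_1\big(\tfrac{\kappa+1}{\delta^2}+2\lambda\big)$. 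I expect the only real obstacle to be getting the Jacobian exactly right — in particular correctly tracking the overlap pattern of the degree variables so that the $(d_k+\delta)^{-2}$ contribution comes out dense and the $(d_l+\delta)^{-2}$ and $2\lambda$ contributions come out diagonal; after that, the bound is a routine norm estimate that does not even need the specific values of the degrees.
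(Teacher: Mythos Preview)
Your argument is correct, and the Jacobian you wrote down is exactly right: the shared endpoint $k$ produces the rank-one block $(d_k(w)+\delta)^{-2}\mathbf{1}\mathbf{1}^\top$, while the distinct endpoints $l\in\mathcal{K}$ and the quadratic edge penalty contribute only diagonally. The spectral-norm bound then follows immediately.

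The paper proceeds differently. Rather than computing the Jacobian, it bounds the finite difference $\|[\nabla h(w)]_{k,\mathcal{K}}-[\nabla h(w')]_{k,\mathcal{K}}\|$ directly for $w,w'$ differing only on the block: it writes each coordinate of the log-degree part as $\tfrac{d'_k-d_k}{(d_k+\delta)(d'_k+\delta)}+\tfrac{d'_l-d_l}{(d_l+\delta)(d'_l+\delta)}$, uses $d_j\geq 0$ to replace the denominators by $\delta^2$, and then applies a chain of triangle and $\ell_1$--$\ell_2$ norm inequalities to extract the factor $\kappa+1$. Your route is cleaner and more structural: the $\kappa$ in the bound is visibly $\|\mathbf{1}\mathbf{1}^\top\|_2$, which also makes transparent the paper's later remark that the linear dependence of $L_{k,\mathcal{K}}$ on $\kappa$ is tight (take $w=w'=0$, so the Jacobian is exactly $\delta^{-2}\mathbf{1}\mathbf{1}^\top+(\delta^{-2}+2\lambda)I$ with largest eigenvalue $(\kappa+1)/\delta^2+2\lambda$). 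The paper's route, on the other hand, is slightly more elementary in that it avoids invoking the mean-value inequality for vector-valued maps and any spectral-norm computation.
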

\begin{proof}
% \aurelien{to rewrite as gradient in main text is the generic one}
% \aurelien{fix factor 2?}
Recall from the main text that the partial gradient can be written as follows:
\begin{equation}
\label{eq:newgrad}
[\nabla h(w)]_{k, \mathcal{K}} = p_{k, \mathcal{K}} + (\mu_1/2)\Delta_{k, 
\mathcal{K}} + \mu_2 v_{k, \mathcal{K}}(w),
\end{equation}
where %  the vector of distances between
% the model of user $k$ and those of users in $\mathcal{K}$
$p_{k, \mathcal{K}} = (c_k\mathcal{L}_k(\alpha_k; S_k) + c_l
\mathcal{L}_l(\alpha_l; S_l))_{l\in\mathcal{K}}$,
$\Delta_{k, \mathcal{K}} = 
(\|\alpha_k-\alpha_l\|^2)_{l\in\mathcal{K}}$ and
% $v_{k, \mathcal{K}} = (
% \frac{1}{d_k(w) + \delta} + \frac{1}{d_l(w) + \delta})_{l\in\mathcal{K}}$.
$v_{k, \mathcal{K}}(w) = (
g_k'(d_k(w)) +g_l'(d_l(w)) + g'_{k,l}(w_{k,l}))_{l\in\mathcal{K}}$.
In our case, $g(w)$ is defined as in \eqref{eq:kalo} so we have $v_{k, \mathcal{K}} =
(\frac{1}{d_k(w) + \delta} + \frac{1}{d_l(w) + \delta} + 2\lambda w_{k, l})_{l\in
\mathcal{K}}$. Note also that we set $\mu_2=\mu_1$, as discussed in the main
text.

The first two terms do not depend on $w_{k, \mathcal{K}}$ and can thus be ignored. We focus on the Lipschitz constant corresponding to the third term.
Let $(w,w')\in\mathbb{R}_+^{K(K-1)/2}$ such that they only differ in the
block indexed by $
(k,\mathcal{K})$. We denote the degree of an user $k$ with respect to $w$ and
$w'$ by $d_k(w) = \sum_{j} w_{k, j}$ and $d'_k(w) = \sum_{j} w'_{k, j}$
respectively. Denoting $z_{k, \mathcal{K}} = (\frac{1}{d_k(w) + \delta} + 
\frac{1}{d_l(w) + \delta})_{l\in
\mathcal{K}}$, we have:

\begin{align}
\| z_{k, \mathcal{K}} - z'_{k, \mathcal{K}}\| &= \Big\| \Big( \frac{1}
{d_k(w)+\delta} + \frac{1}{d_l(w) + \delta} \Big)_{l\in\mathcal{K}} - \Big(
\frac{1}
{d'_k(w)+\delta} + \frac{1}{d'_l(w) + \delta} \Big)_{l\in\mathcal{K}}
\Big\|\nonumber\\
&= \Big\| \Big( \frac{d_k'(w)+\delta - d_k(w) - \delta}
{(d_k(w)+\delta)(d'_k(w)+\delta)} + \frac{d'_l(w) + \delta - d_l(w) - \delta}{(d_l(w) +
\delta)(d'_l(w) + \delta)} \Big)_{l\in
\mathcal{K}} \Big\|\nonumber\\
&\leq \frac{1}{\delta^2} \Big\| \Big(\sum_{j\in\mathcal{K}}(w'_{k, j} - w_{k, j}
)\Big)_{l\in\mathcal{K}} + \Big( w'_{k, l} - w_{k, l} \Big)_{l\in
\mathcal{K}} \Big\|\label{eq:lip_pos}\\
&\leq \frac{1}{\delta^2}\Big [ \Big\| \Big(\Big|\|w'_{k, \mathcal{K}}\|_1 -
\|w_{k,
\mathcal{K}}\|_1 \Big|\Big)_{l\in
\mathcal{K}} \Big\| + \|w_{k, \mathcal{K}} - w'_{k, 
\mathcal{K}}\|\Big]\nonumber\\
&\leq \frac{1}{\delta^2}\Big [ \Big\| \Big(\|w'_{k, \mathcal{K}}-w_{k,
\mathcal{K}}\|_1 \Big)_{l\in
\mathcal{K}} \Big\| + \|w_{k, \mathcal{K}} - w'_{k, 
\mathcal{K}}\|\Big]\label{eq:lip_norm1}\\
&\leq \frac{1}{\delta^2}\Big [ \sqrt{\kappa}\Big| \sum_{j\in\mathcal{K}}(w'_
{k,j} - w_{k,j}) \Big| + \|w_{k, \mathcal{K}} - w'_{k,
\mathcal{K}}\|\Big] \label{eq:lip_norm2}\\
&\leq \frac{1}{\delta^2}\Big [ \sqrt{\kappa} \|w_{k, \mathcal{K}} - w'_{k,
\mathcal{K}}\|_1 + \|w_{k, \mathcal{K}} - w'_{k,
\mathcal{K}}\|\Big] \label{eq:lip_norm3}\\
&\leq \frac{\kappa+1}{\delta^2}\|w_{k, \mathcal{K}} - w'_{k, 
\mathcal{K}}\|,\nonumber
\end{align}
where to obtain \eqref{eq:lip_pos} we used the nonnegativity of the weights
and the fact that $w$ and $w'$ only differ in the coordinates indexed by $
(k,\mathcal{K})$, and \eqref{eq:lip_norm1}-\eqref{eq:lip_norm2}-\eqref{eq:lip_norm3} by classic
properties of norms.

% For simplicity, we bound it first for the case $\kappa=1$ (block of size 1). Let $k\in[K]$ and $\mathcal{K} = \{l\}$ for some $l\neq k$. Let $(w,w')\in\mathbb{R}_+^{K(K-1)/2}$ such that they only differ in the $(k,l)$ coordinate. We denote by $\bar{d}_l = \sum_{m\neq l}w_{k, m} - w_{k,l}$ and $\bar{d}'_l = \sum_{m\neq l}w'_{k, m} - w'_{k,l}$. By definition of $w$ and $w'$, $\bar{d}_l=\bar{d}'_l$.
% We have:
% \begin{align*}
% | v'_{k, \mathcal{K}} - v_{k, \mathcal{K}}| &= \Big|\frac{1}{\bar{d_k}+w'_{k, l}+\delta} + \frac{1}{\bar{d_l}+w'_{k, l}+\delta} - \frac{1}{\bar{d_k}+w_{k, l}+\delta} - \frac{1}{\bar{d_l}+w_{k, l}+\delta}\Big|\\
% &= \Big|\frac{w_{k, l} - w'_{k, l}}{(\bar{d_k}+w'_{k, l}+\delta)(\bar{d_k}+w_{k, l}+\delta)} + \frac{w_{k, l} - w'_{k, l}}{(\bar{d_l}+w'_{k, l}+\delta)(\bar{d_l}+w_{k, l}+\delta)}\Big|\\
% &\leq \frac{2}{\delta^2} |w'_{k,l} - w_{k,l}|.
% \end{align*}

% The Lipschitz constant scales at most linearly with the size of the block 
% \citep{wright2015coordinate}, hence for $\kappa > 1$ we have $\| v'_{k, 
% \mathcal{K}} - v_{k, \mathcal{K}}\| \leq \frac{2\kappa}{\delta^2} \|w'_{k, 
% \mathcal{K}} - w_{k, \mathcal{K}}\|$.

We conclude by combining this result with the
quantity $2\lambda$ that comes from the last term in $v_{k, \mathcal{K}}$ and
multiplying by $\mu_1$.
% \begin{align*}
% \|[\nabla g(w)]_{k,\mathcal{K}} - [\nabla g(w')]_{k,\mathcal{K}}\| &= \| v'_{k, \mathcal{K}} - v_{k, \mathcal{K}} + 2\lambda(w_{k,\mathcal{K}} - w'_{k,\mathcal{K}})\|\\
% &\leq \left( \begin{array}{c}
% \frac{1}{\bar{d_k}+w'_{k, l}+\delta} - \frac{1}{\bar{d_l}+w'_{k, l}+\delta} - \\
% d \\
% g \end{array} \right)  + 2\lambda\|w_{k,\mathcal{K}} - w'_{k,\mathcal{K}}\|
% \end{align*}
\end{proof}
Observe that $L_{k,\mathcal{K}}$ only depends on the block size $\kappa$ (not
the block itself). Hence we also have $L_{max} \leq \mu_1(\frac{\kappa+1}
{\delta^2} + 2\lambda)$.

It is important to note that the linear dependency of $L_{k,\mathcal{K}}$ in the block
size $\kappa$, which is the worst possible dependency for block
Lipschitz constants \citep{wright2015coordinate}, is \emph{tight} for our objective function.
This is due to the log-degree term which makes each entry of $[\nabla h(w)]_
{k, \mathcal{K}}$ dependent on the sum of all coordinates in $w_{k, 
\mathcal{K}}$. This linear dependency explains the mild effect of
the block size $\kappa$ on the convergence rate of our algorithm (see the
discussion of Section~\ref{sec:disco-com} and the numerical results of
Appendix~\ref{app:kappa}).

\textbf{Strong convexity.} It is easy to see that the objective function $h$
is
$\sigma$-strongly convex with $\sigma=2\mu_1\lambda$.

%%% Local Variables:
%%% mode: latex
%%% TeX-master: "supp.tex"
%%% ispell-local-dictionary: "english"
%%% End:

% !TEX root = supp.tex

\section{COMMUNICATION AND MEMORY}
\label{sec:communcation-memory}

In this section we provide additional details on the communication and memory
costs of the proposed method. The section is organized in two parts
corresponding to  the decentralized boosting algorithm of Section~\ref{sec:method} and the graph learning algorithm of Section~\ref{sec:discovery}. 

\subsection{Learning Models: A Logarithmic Communication and Memory Cost}
\label{sec:com}
We prove that our Frank-Wolfe algorithm of Section~\ref{sec:method} enjoys logarithmic
communication and memory costs with respect to the number of base predictors $n$.
% This is in contrast to previous decentralized approaches for
% learning personalized models
% \citep{vanhaesebrouck2016decentralized,bellet2017fast,Almeida2018},
% whose communication cost scales linearly with the number of parameters
% of the model (i.e. the data dimension).
Combined with the approach for building sparse collaboration graphs we introduce in
Section~\ref{sec:discovery}, we obtain a scalable-by-design algorithm.
% Theoretical analysis will be experimentally verified in Section~\ref{sec:exp}.
% We study the communication and memory costs of our algorithm for a generic
% collaboration graph $ \mathcal{G} = ([K], E, w)$ with $K$ nodes and $M$ edges.
% Recall that the number of base predictors to combine is denoted by $n$.
% The following analysis stands for systems without failure (all sent messages
% are correctly received). We express all costs in number of bits, and $Z$
% denotes the bit length used to represent floats.
The following analysis stands for systems without failure (all sent messages
are correctly received). We express all costs in number of bits, and $Z$
denotes the bit length used to represent floats.  Assume we are given a 
collaboration graph $ \mathcal{G} = ([K], E, w)$ with $K$ nodes and $M$ edges.
% \aurelien{should clarify how these costs are computed. number of real numbers? number of bits?}

Recall that the algorithm proceeds as follows. At each time step $t$, a
random user $k$ wakes up and performs the following actions:
\begin{enumerate}
\item \emph{Update step:} user $k$ performs a Frank-Wolfe update on its local
model based on the most recent information $\alpha_l^{\mtime{t-1}}$ received from
its neighbors $l\in N_k$:
$$\alpha_k^{\mtime{t}} = (1 - \kt{\gamma}{}{t}) \kt{\alpha}{k}{t-1} + 
\kt{\gamma}{}{t} \: \kt{s}{k}{t},\quad\text{with }\kt{\gamma}{}{t} = 2K/(t
+2K).$$
\item \emph{Communication step:} user $k$ sends its updated model $\alpha_k^{\mtime{t}}$ to its neighborhood $N_k$.
\end{enumerate}

\textbf{Memory.}
% In order to reduce the communication cost of the algorithm, we consider that the nodes keep a local copy of the current classifiers of their neighbors, initialized with zero vectors.
% Therefore, the storing cost of a node $k$ with $|N_k|$ neighbors comprehends the space for storing its solution vector, its neighbors' and its current vector of similarities (containing only non-zero entries):
Each user needs to store its current model, a copy of its neighbors' models,
and the similarity weights associated with its neighbors. Denoting by $|N_k|$
the number of neighbors of user $k$, its
memory cost is given by $Z(n + |N_k|(n+1))$,
which leads to a total cost for the network of
% \aurelien{i would simply write the cost for a particular user, so with its degree}
$$ K Z\big( n + \textstyle\sum_{k=1}^K|N_k|(n + 1)\big) =  Z \left( K n + 2 M 
(n + 1)\right). $$
The total memory is thus linear in $M$, $K$ and $n$.
Thanks to the sparsity of the updates, the dependency on $n$ can be reduced from linear to logarithmic by representing models as sparse vectors. Specifically, when initializing the models to zero vectors, the model of an user $k$ who has performed $t_k$ updates so far contains at most $t_k$ nonzero elements and can be represented using $t_k (Z+\log n)$ bits: $t_k Z $ for the nonzero values and $t_k \log n$ for their indices.

\textbf{Communication.}
At each iteration, an user $k$ updates a single coordinate of its model
$\alpha_k$. Hence, it is enough to send to the neighbors the index of the
modified coordinate and its new value (or the index and the step
size $\gamma_k^{(t)}$). Therefore, the communication cost of a single
iteration is equal to $ (Z+\log n) |N_k|$. 
After $T$ iterations, the expected total communication cost for our
approach is
$$ T (Z+\log n) \expect{|N_k|} = 2 T M K^{-1}(Z+\log n).$$
Combining this with Theorem~\ref{the:optimal}, the total communication cost
needed to obtain an optimization error smaller than $\varepsilon$ amounts to
 $\frac{12 M (C^\otimes_f + h_0)}{\varepsilon}\big(Z+\log n\big)$, hence
 logarithmic in $n$. For the classic case where the set of base predictors
 consists of a  constant number of simple decisions stumps per feature, this
 translates into  a logarithmic cost in the \emph{dimensionality of the data} 
 (see the  experiments of Section~ \ref{sec:exp}). This can be much smaller
 than the  cost needed to send all the data to a central server.

\subsection{Learning the Collaboration Graph: Communication vs. Convergence}
\label{sec:disco-com}

Recall that the algorithm of Section~\ref{sec:discovery} learns in a fully decentralized way a collaboration graph given fixed models $\alpha$. It is defined by:

\begin{enumerate}
\item Draw a set $\mathcal{K}$ of $\kappa$ users and request their current models and degree.
\item Update the associated weights:
$ w^{\mtime{t+1}}_{k, \mathcal{K}} \leftarrow \max\big(0, w^{\mtime{t}}_{k, \mathcal{K}} - 
(1/L_{k,\mathcal{K}})[\nabla h(w^{\mtime{t}})]_{k, \mathcal{K}}\big),
$
\item Send each updated weight $w^{\mtime{t+1}}_{k,l}$ to the associated user in $l\in\mathcal{K}$.
\end{enumerate}

% The memory cost for a single agent is given by $(K - 1 + \kappa) Z$ as it needs to store its current weights, the degree of $\kappa$ other nodes

% The memory and communication costs for one agent are $KZ$
% (in fact we already have memory $O(K)$ due to the weights...)

At each iteration, the active user needs to request from each user $l\in
\mathcal{K}$ its current degree $d_l(w)$, its personal model $\alpha_l^
{
(t_\alpha)}$ and the value of its local loss, where
$t_\alpha$
is the total number of FW model updates done so
far in the network. It then sends the updated
weight to each
user in $\mathcal{K}$. As the expected number of nonzero entries in the model
of an user is at most $\min(t_\alpha/K, n)$, the expected
communication cost for a
single iteration is equal to $\kappa (3Z + \min(\frac{t_\alpha}{K}, n)(Z+\log
n))$, where $Z$ is the representation length of a float.
This can be further
optimized if users have enough local memory to store the models and local
losses of all the users they communicate with (see Section~
\ref{sec:refined_com} below).

In general, Theorem~\ref{the:disco} shows that the parameter $\kappa$ can be
used to trade-off the convergence speed and the amount of communication needed
at each iteration, especially when the number of users $K$ is large.
For the particular case of $g(w) = \lambda\|w\|^2 - 
\mathbf{1}^\top\log(d(w) + \delta)$ that we propose, we have
$\sigma=2\mu\lambda$ and $L_{max} \leq
\mu(\frac{\kappa+1}{\delta^2} + 2\lambda)$ (see
Section~\ref{sec:smooth}). This gives the following shrinking factor in
the convergence rate of Theorem~\ref{the:disco}:
$$\rho \leq 1 - \frac{4}{K(K-1)}\frac{\kappa\lambda\delta^2}
{\kappa+1+2\lambda\delta^2}.
$$
Hence, while increasing $\kappa$ results in a linear increase in the
per-iteration communication cost (as well as in the number of users to
communicate with), the impact on $\rho$ in Theorem~\ref{the:disco} is mild and
fades rather quickly due to the (tight) linear
dependence of $L_{k,\mathcal{K}}$ in $\kappa$. This suggests that
choosing
$\kappa=1$ will minimize the
total communication cost needed to reach solutions of moderate precision 
(which is usually sufficient for machine learning).
Slightly larger values (but still much smaller than $K$) will provide a better
balance between the communication cost and the number of rounds. On the
other hand, if high precision solutions are needed or if the number of
communication rounds is the primary concern, large values of $\kappa$ could
be used. As shown in Section~\ref{app:kappa}, the numerical behavior of
our algorithm is in line with this theoretical analysis.

\subsubsection{Refined communication complexity analysis}
\label{sec:refined_com}

The communication
complexity of our
decentralized graph learning algorithm can be reduced if the users store
the models and local losses of all the peers they communicate with.
The communication complexity for a given iteration $t$ then depends on the
expected number of nodes $\bar \kappa^{(t)}$, among the selected $\mathcal{K}$,
that the picked user has not yet selected:
$$\kappa 2 Z + \bar \kappa^{(t)} \Big(Z + \min\big(\frac{t_\alpha}{K}, n\big)
(Z+\log n))
\Big).$$
The next lemma shows that $\bar \kappa^{(t)}$ decreases exponentially fast with
the number of iterations.

\begin{proposition}
For any $T\geq 1$, the expected number of new nodes after $T$ iterations is
given by
$$\bar \kappa^{(T)} = \kappa \left( 1- \frac{\kappa}{K(K-1)} \right)^
{T-1}.$$
\end{proposition}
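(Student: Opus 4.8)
The plan is to reduce the statement to tracking, for each ordered pair of distinct users $(k,l)$, whether user $k$ has ever sampled $l$ during the first $T-1$ iterations. First I would record the key structural fact that the iterations are mutually independent: at each iteration the active user is drawn uniformly from $[K]$ and then draws $\mathcal{K}$ uniformly without replacement from the remaining $K-1$ users, and this draw does not depend on anything that happened before. Consequently, for a fixed ordered pair $(k,l)$ with $k\neq l$, the probability that one given iteration \emph{activates} it — meaning the active user equals $k$ and $l\in\mathcal{K}$ — is $p := \frac{1}{K}\cdot\frac{\kappa}{K-1} = \frac{\kappa}{K(K-1)}$, using that $\mathbb{P}[\,l\in\mathcal{K}\mid\text{active user}=k\,] = \binom{K-2}{\kappa-1}/\binom{K-1}{\kappa} = \kappa/(K-1)$.

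By independence across iterations, the probability that $(k,l)$ is never activated during iterations $1,\dots,T-1$ is exactly $(1-p)^{T-1}$. I would then condition on the identity of the active user $k$ at iteration $T$; by the symmetry of the construction in the users, the conditional expected number of new nodes does not depend on $k$. Writing $\mathcal{K}_T$ for the set sampled at iteration $T$ and $N_{k,l}$ for the event that $(k,l)$ was never activated in iterations $1,\dots,T-1$, the number of new nodes at iteration $T$ equals $\sum_{l\neq k}\mathbf{1}[l\in\mathcal{K}_T]\,\mathbf{1}[N_{k,l}]$. Since iteration $T$ is independent of iterations $1,\dots,T-1$, conditionally on the active user being $k$ the events $\{l\in\mathcal{K}_T\}$ and $N_{k,l}$ are independent, $\{l\in\mathcal{K}_T\}$ having probability $\kappa/(K-1)$ and $N_{k,l}$ having probability $(1-p)^{T-1}$. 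Taking expectations and summing over the $K-1$ choices of $l$ gives $\bar\kappa^{(T)} = (K-1)\cdot\frac{\kappa}{K-1}\cdot(1-p)^{T-1} = \kappa\bigl(1-\tfrac{\kappa}{K(K-1)}\bigr)^{T-1}$, as claimed; a quick sanity check is that for $T=1$ this returns $\kappa$, consistent with all $\kappa$ sampled peers being new at the first iteration.

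The main obstacle is purely bookkeeping rather than mathematical: one must be careful that ``new'' refers to peers that the \emph{currently active} user has not sampled before (a statement about directed pairs, capturing exactly the messages that must be transmitted), not to any pair of users that have ever exchanged information, and that conditioning on the active user at iteration $T$ leaves $\mathcal{K}_T$ uniform over $\binom{K-1}{\kappa}$ subsets and independent of the entire past. Once that is set up cleanly, the computation is the elementary argument above. An alternative would be an induction on $T$ relating $\bar\kappa^{(T+1)}$ to $\bar\kappa^{(T)}$ through the per-iteration activation probability, but the direct linearity-of-expectation route is shorter and sidesteps any correlations between distinct pairs.
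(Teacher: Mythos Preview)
Your proposal is correct and follows essentially the same approach as the paper: both compute the per-iteration probability $p=\tfrac{\kappa}{K(K-1)}$ that a fixed ordered pair $(k,l)$ is activated, use independence across iterations to get $(1-p)^{T-1}$ for non-activation during the first $T-1$ rounds, and then apply linearity of expectation at iteration $T$. Your presentation is in fact a bit cleaner about the conditioning and independence steps than the paper's, but the argument is the same.
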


\begin{proof}
    
At a given iteration $t$, let $k^{(t)}$ denote the random user
that performs the update
and $\mathcal{K}^{(t)}$ the set of $\kappa$ users selected by $k^{(t)}$. We
denote by $X_{l,m}^{
(t)}$ the random variable indicating if node $l$ selected node $m$ at that
iteration:
\[   
X_{l,m}^{(t)} = 
     \begin{cases}
       0& \text{if node} \ m \ \text{was not selected by node} \ l \ \text{at iteration} \ t, \\
       1& \text{otherwise.} 
     \end{cases}
\]
Similarly, $X_{l,m}$ indicates if node $l$ has ever selected node $m$ after
$T$ iterations.

Let us denote by $R^{(t)} = \{X_{k^{(t)},j}\}_{j \in \mathcal{K}^{(t)}}$ the
set
of random variables that have to be updated at iteration $t$.
The probability that node $m$ is not selected by node $l$ at a given round $t$ is given by
\begin{align*}
    \mathbb{P}[X_{l,m}^{(t)} = 0] &= \mathbb{P}[k^{(t)} = l] \mathbb{P}[X_{k^{(t)},m} \notin R^{(t)} | k^{(t)} = l] + \mathbb{P}[k^{(t)} \neq l] \mathbb{P}[X_{k^{(t)},m} \notin R^{(t)} | k^{(t)} \neq l] \\
    &= \frac 1 K \left(1 - \frac{\kappa}{K-1} \right) + \frac{K-1}{K} 1 \\
    &= \frac{K^2 - K - \kappa}{K (K - 1)} = 1- \frac{\kappa}{K(K-1)}.
\end{align*}

As $k$ and $\mathcal{K}$ are drawn independently from the previous draws, the
probability that node $m$ has never been selected by node $l$ after $T$
iterations is given by:
$$ \mathbb{P}[X_{l,m} = 0] = \prod_{t=0}^{T-1} \mathbb{P}[X_{l,m}^{(t)} = 0] =
\left(1- \frac{\kappa}{K(K-1)} \right)^{T}.$$

Finally, the expected number of new nodes seen at iteration $T$ is given by
\begin{align*}
  \bar \kappa^{(T)} &= \mathbb{E}_{k,\mathcal{K}} [ \ |\{ X_{k,l} \in R^{(T)} | X_{k,l} = 0 \}| \ ] \\
               &= \frac 1 K \sum_{k=1}^{K} \frac{1}{{{K-1}\choose{\kappa}}} \sum_{\mathcal{K}} \sum_{m \in \mathcal{K}} \mathbb{P}[X_{l,m} = 0] \\
               &= \kappa \left(1- \frac{\kappa}{K(K-1)} \right)^{T-1}.
\end{align*}
\end{proof}

\section{ADDITIONAL EXPERIMENTS}
\label{app:exp}

% \aurelien{add communication curves for all real datasets}

\subsection{Details on Experimental Setting}

\paragraph{Hyperparameter tuning.}
We tune the following hyper-parameters with 3-fold cross validation on the
training user datasets: $\beta\in\{ 1, \dots, 10^3 \}$ ($l_1$ constraint for
all Adaboost-based methods), $\mu\in\{ 10^{-3}, \dots, 10^3
\}$ 
(trade-off parameter for \algoname and \persolin), and
$\lambda\in\{10^{-3},
\dots, 10^3 \}$ (graph sparsity in \algonameL and \persolinL).
% \aurelien{do we tune a hyperparameter for \globallin? and \locallin?}

\paragraph{Description of Moons dataset.}

We describe here in more details the generation of the synthetic problem {\sc
Moons} used in the main text, which is constructed from the classic two
interleaving Moons dataset which has nonlinear class boundaries.
We consider $K=100$ users, clustered in $4$ groups of respectively $K_
{c_1}=10$, $K_{c_2}=20$, $K_{c_3}=30$ and $K_{c_4}=40$ users.
Each cluster is associated with a rotation angle $\Theta_{c}$ of
$45$, $135$, $225$ and $315$ degrees respectively.
We generate a local dataset for each user $k$ by drawing $m_k\sim\mathcal{U}
(3,15)$ training examples and $100$ test examples from the
two Moons distribution. We then apply a rotation
(coplanar to the Moons' distribution) to all the points according to an angle
$\theta_k\sim\mathcal{N}(\Theta_c, 5)$
where $c$ is the cluster the user belongs to.
This construction allows us to control the similarity between
users (users from the same cluster are more similar to each other
than to those from different clusters). We build an oracle
collaboration graph by
setting
$w_{k,l} = \exp ( \frac{\cos(\theta_{k} - \theta_{l}) - 1}{\sigma})$ with
$\sigma = 0.1$ and dropping all edges with negligible weights, which we will
give as input to \algonameF and \persolinF.
In order to make the classification problems more challenging, we
add random label noise to the generated local samples by flipping the labels
of $5\%$ of the training data, and embed all points in $
\mathbb{R}^D$ space by adding random values for the $D-2$ empty axes, similar
to \citep{vanhaesebrouck2016decentralized}.
In the experiments, we set $D=20$.
%  and the
% number of base predictors $n$ to $200$.

\paragraph{Description of the real datasets.} We give details on datasets used
in the main text:
\begin{itemize}
    \item {\sc Harws} (Human Activity Recognition With Smartphones)~
    \citep{anguita2013public}, which is
composed of records of various types of physical activities, described by $D=561$ features and collected from $K=30$ users. We focus on the task of distinguishing when a user is sitting or not, use $20\%$ of the records for training and set the number of stumps for the boosting-based methods to $n=1122$.

    \item {\sc Vehicle Sensor}~\citep{duarte2004vehicle} contains data from
    $K=23$ sensors describing vehicles driving on a road, where each record is described by $D=100$ features. We predict between AAV and DW vehicles, using $20\%$ of the records for training, and fix the number of stumps to $n=1000$.

    \item {\sc Computer Buyers}\footnote{{\scriptsize
    \url{https://github.com/probml/pmtkdata/tree/master/conjointAnalysisComputerBuyers}}} consists of $K=190$ buyers, who have each evaluated $m_k=20$ computers described by $D=14$ attributes, with an overall score within the range $[0, 10]$. We use a total of $1407$ (between $5$ and $10$ per user) instances for training and $2393$ (between $10$ and $15$ per user) for testing. We tackle the problem as binary classification, by affecting all instances with a score above $5$ to the positive class and the remaining ones to the negative class, and we set the number of stumps to $n=28$.

    \item {\sc School}~\citep{goldstein1991multilevel}\footnote{{\scriptsize
\url{https://github.com/tjanez/PyMTL/tree/master/data/school}}} consists of
$m=15362$ total student examination records described by $D=17$ features, with an overall score in the
range $[0, 70]$ from $K=140$ secondary schools. In
total, there are $11471$ instances (between $16$ and $188$ per user) for
training and $3889$ (between $5$ and $63$ per user) for
testing. We predict between records with scores smaller or greater than $20$ and set the number of stumps to $n=34$.
\end{itemize} 

\subsection[Effect of the Block Size]{Effect of the Block Size $\kappa$}
\label{app:kappa}

\begin{figure}[t]
    \centering
    \includegraphics[width=.7\columnwidth]{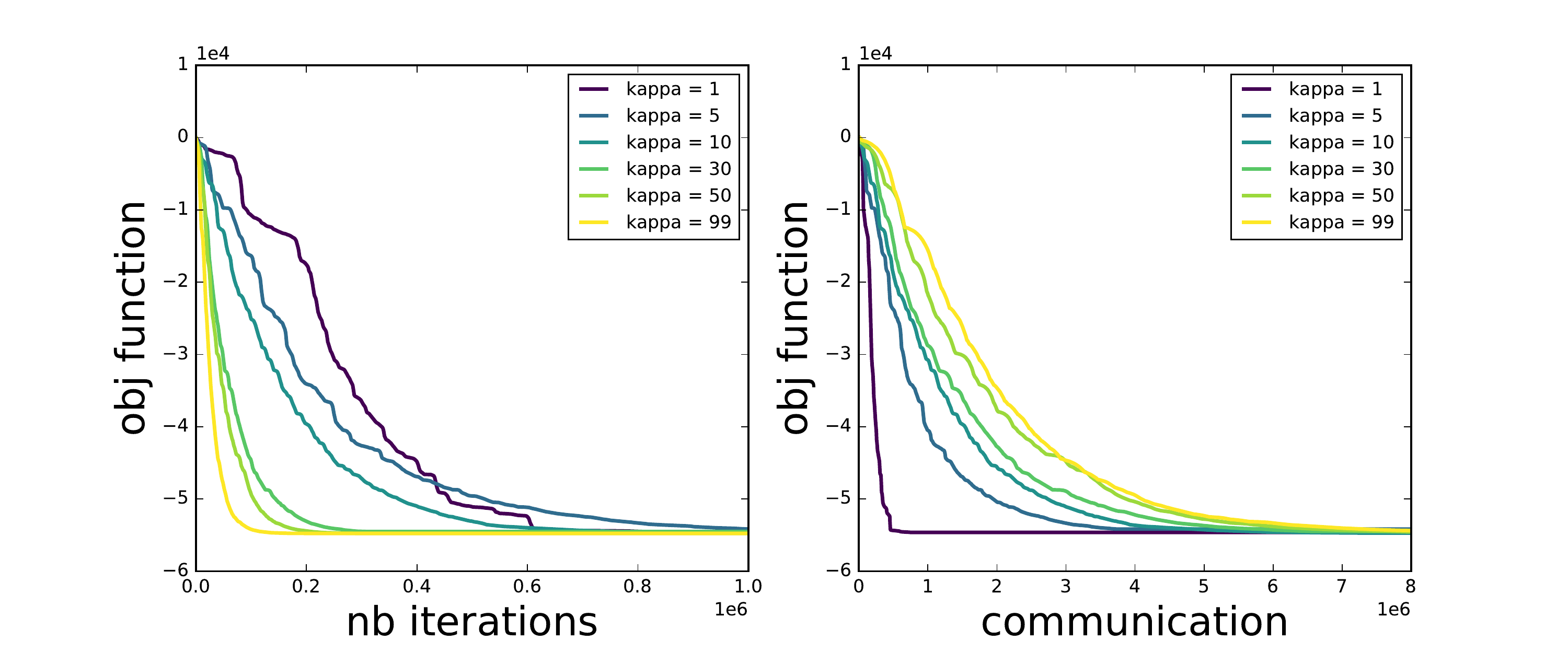}
    \caption{Impact of $\kappa$ on the
    convergence rate and the communication
    cost for learning the graph on {\sc Moons}.
    }
  \label{fig:kappas}
\end{figure}

As discussed in Section~\ref{sec:disco-com}, the parameter $\kappa$ allows to
trade-off the communication cost (in bits as well as the number of pairwise
connections at each iteration) and the convergence rate for
the graph learning steps of \algonameL.
We study the effect of varying $\kappa$ on our synthetic dataset {\sc Moons}.
Figure~\ref{fig:kappas} shows the evolution of the objective function with the number of iterations and with the communication cost depending on $\kappa$ when learning a graph using the local classifiers learned with \localboost.
Notice that the numerical behavior is consistent with the theory: while
increasing $\kappa$ reduces the number of communication rounds,
setting $\kappa=1$
minimizes the total amount of
communication (about $5\times 10^5$ bits). By way of comparison, the
communication cost
required to send all weights to all users just once is $Z K^2 (K - 1) / 2 =
1.6
\times 10^7$ bits. In practice, moderate values of $\kappa$ can
be used to obtain a good trade-off between the number of rounds and the
total communication cost, and to reduce the higher variance
associated
with small values of $\kappa$. 

\subsection{Test Accuracy with respect to Local Dataset Size}

In the main text, the reported accuracies are averaged over users. Here, we
study the relation between the local test accuracy of users
depending on the size of their training set. Figure~\ref{fig:points} shows a
comparison between \algonameF, \algonameL and \localboost, in
order to assess the improvements introduced by our collaborative scheme.
On {\sc Moons}, \localboost shows good
performance on users with larger training sets but generalizes poorly
on users with limited local information. Both \algonameF and \algonameL
outperform \localboost, especially on users with small datasets.
Remarkably, in the ideal setting where we have access to the ground-truth
graph (\algonameF), we are able to fully
close the accuracy gaps caused by uneven training set sizes.
\algonameL is able to match
this performance except on users with smaller datasets, which is expected
since there is very limited information available to learn reliable similarity
weights
for these users.
% On these nodes, the personalized classifiers learned by our method, both with the fixed graph and with the learned graph, outperform the local ones. 
On {\sc Harws}, \algonameL generally improves upon \localboost, although
there is more variability due to difference in difficulty across user tasks
and uneven numbers of users in each size group.

\begin{figure}[t!]
    \centering
    \subfigure[{\sc Moons}.]{
    \label{fig:moons-points}
    \includegraphics[width=0.4\textwidth]{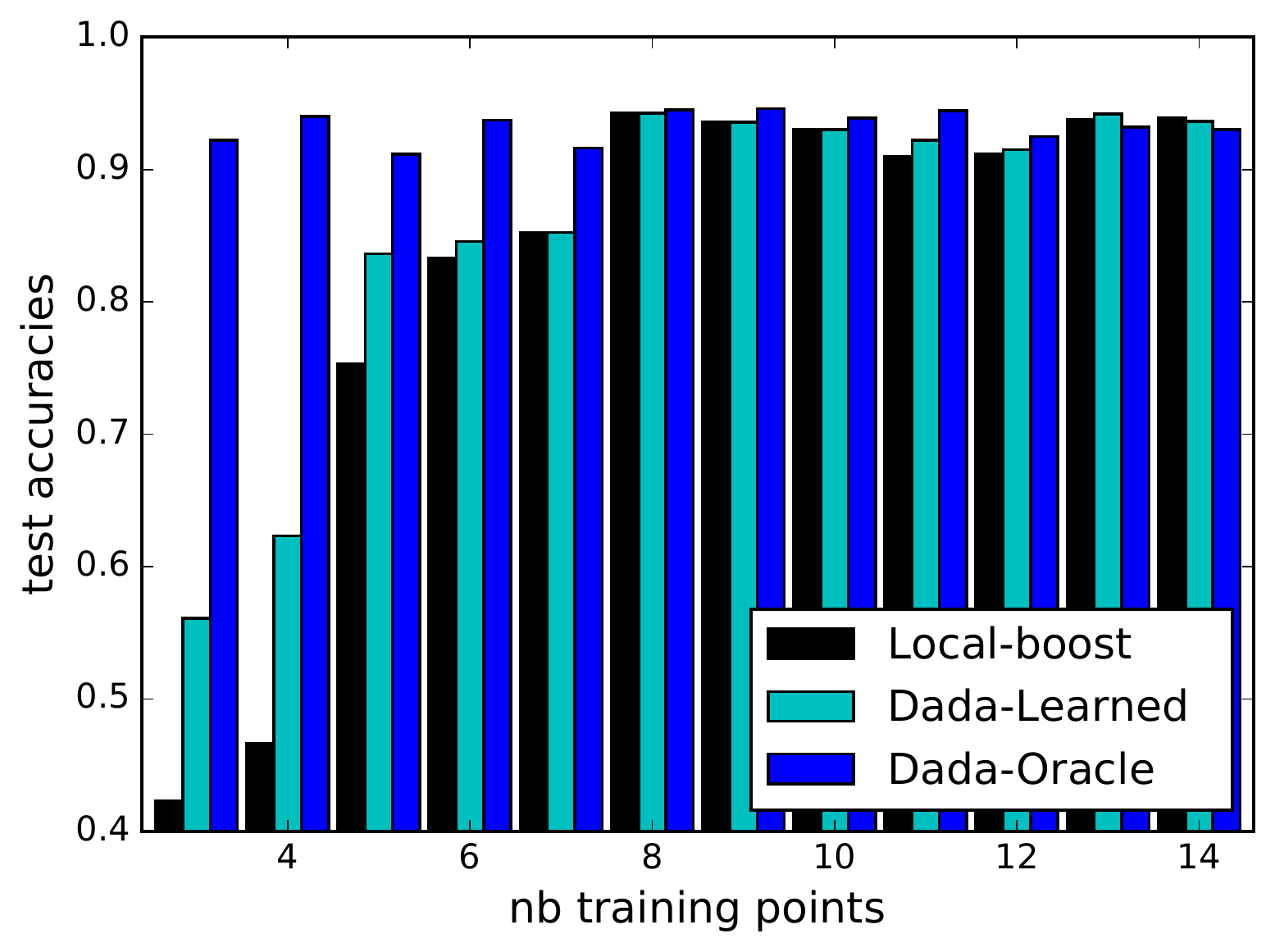}}
    \hspace{0.3cm}
    \subfigure[{\sc Harws}.]{
    \label{fig:harws-points}
    \includegraphics[width=.4\textwidth]{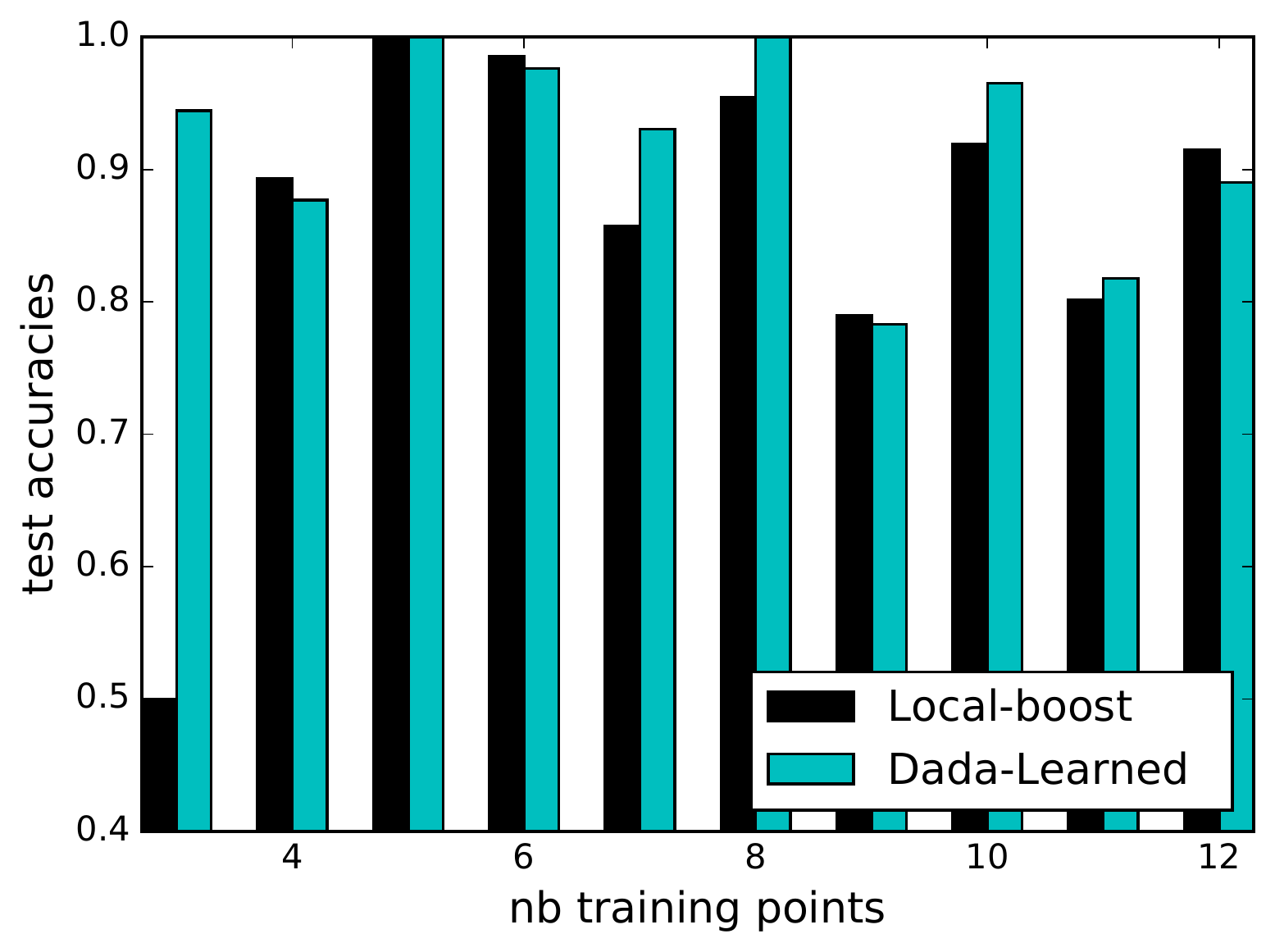}} 
    \caption{Average test accuracies with respect to the number of training points of the local sets.}
    \label{fig:points}
\end{figure}

\subsection[Test Accuracy with respect to Communication Cost]{Test Accuracy with respect to Communication Cost}
\label{app:comm}

We report the full study of the test accuracies under limited communication
budget, summarized in Table~\ref{tab:bud} of the main text.
Figure~\ref{fig:acc-com} confirms that \algonameL generally allows for reaching
higher test accuracies with less communications than \persolinL, especially on
higher-dimensional datasets, such as {\sc Harws} (Figure~\ref{fig:harws-comm}).

\begin{figure}[t!]
    \centering
    \subfigure[{\sc Harws} ($D=561$).]{
    \label{fig:harws-comm}
    \includegraphics[width=.42\textwidth]{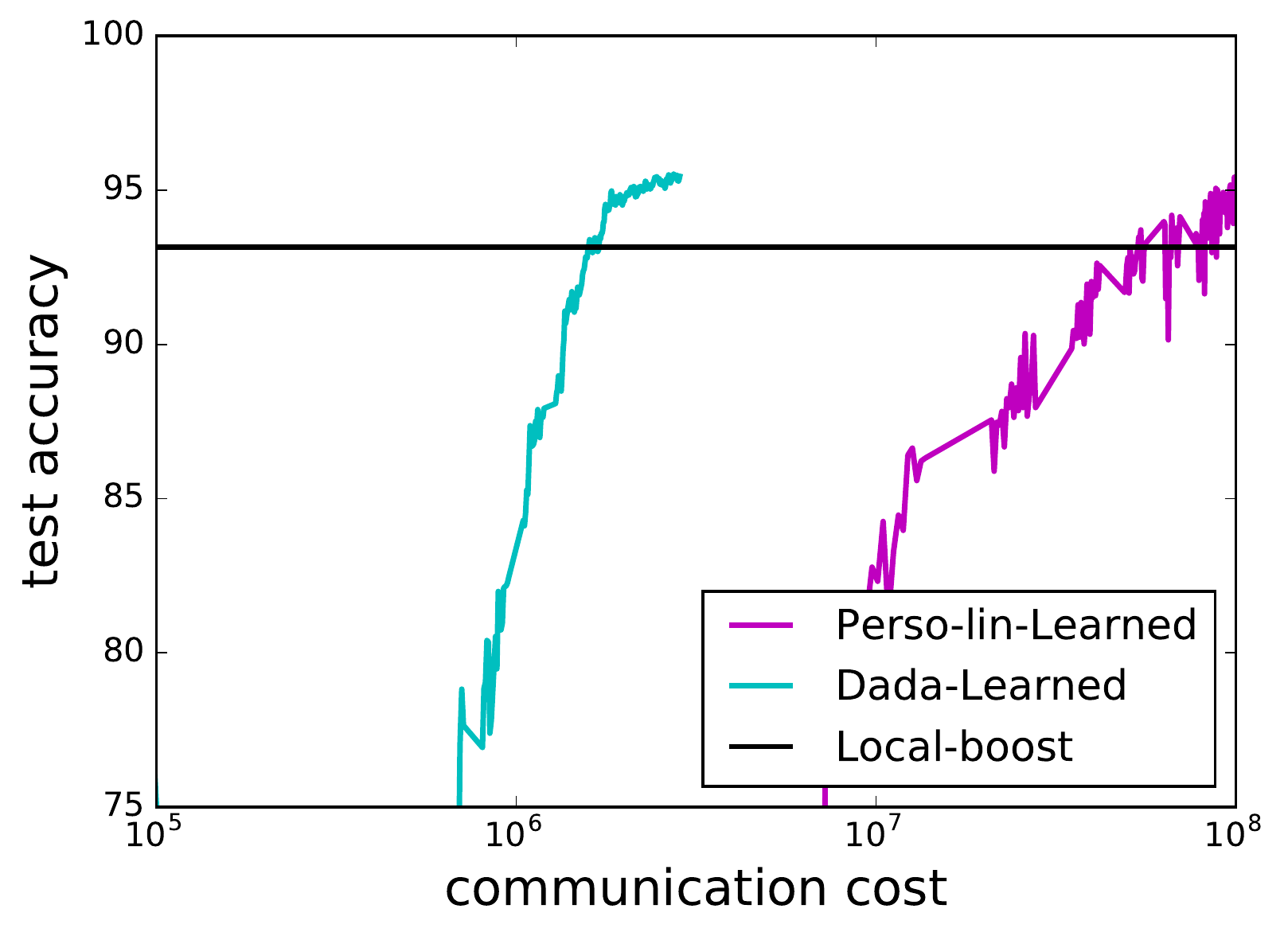}}
    \hspace{0.3cm}
    \subfigure[{\sc Vehicle Sensor} ($D=100$).]{
    \label{fig:vehicle-comm}
    \includegraphics[width=.4\textwidth]{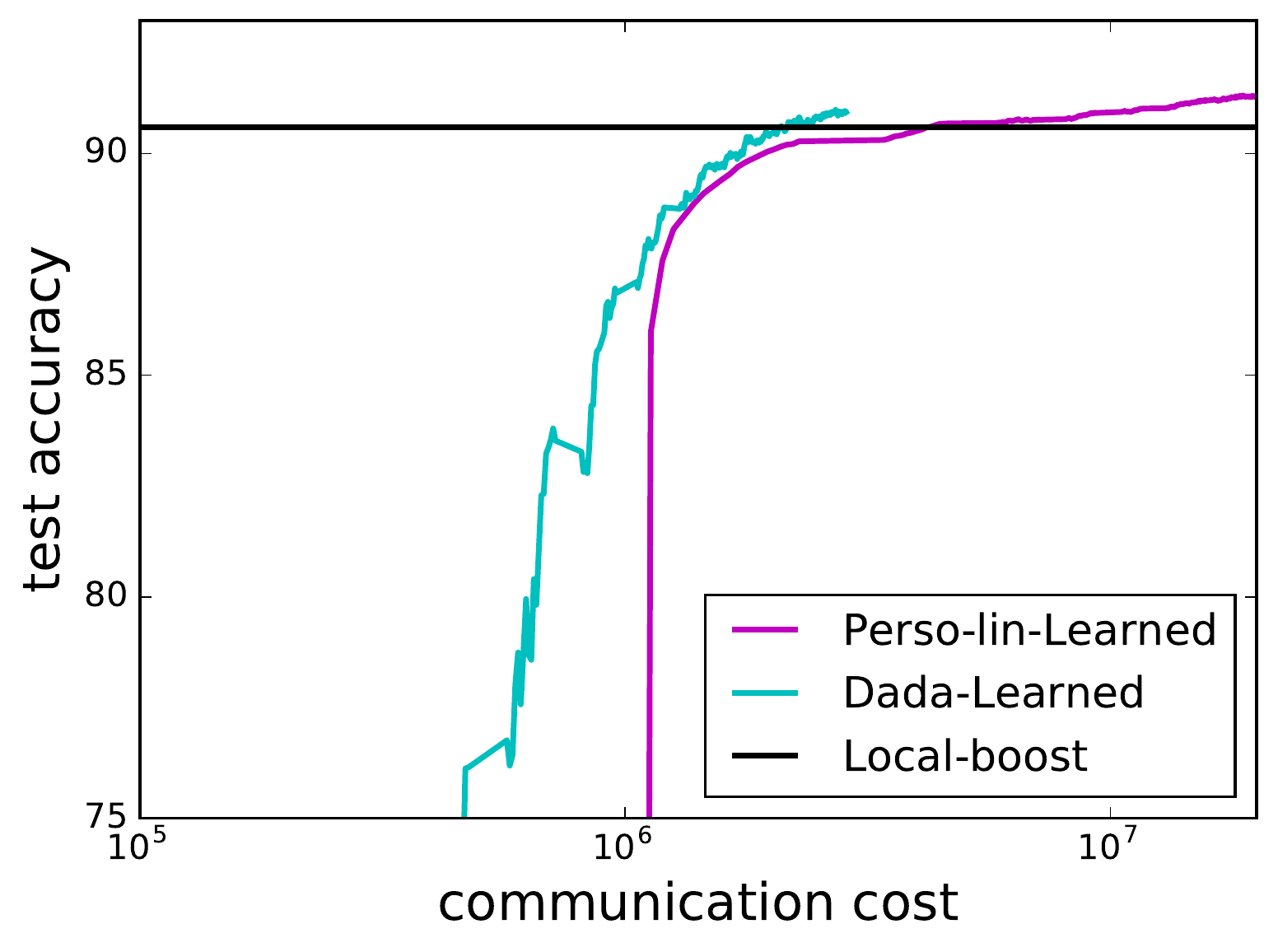}}
    \subfigure[{\sc Computer Buyers} ($D=14$).]{
    \label{fig:computer-comm}
    \hspace{0.3cm}
    \includegraphics[width=.4\textwidth]{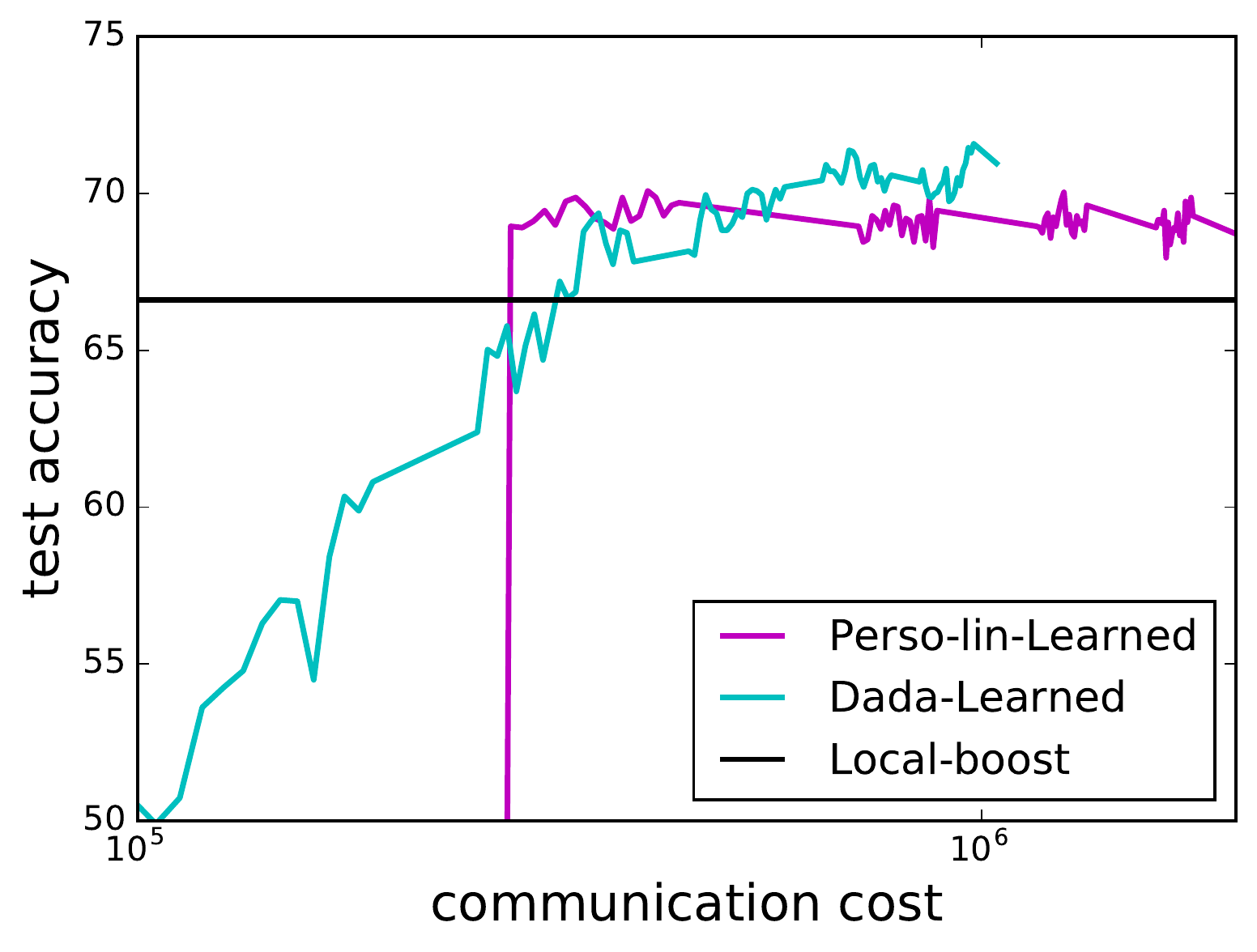}}
    \hspace{0.3cm}
    \subfigure[{\sc Schools} ($D=17$).]{
    \label{fig:school-comm}
    \includegraphics[width=.42\textwidth]{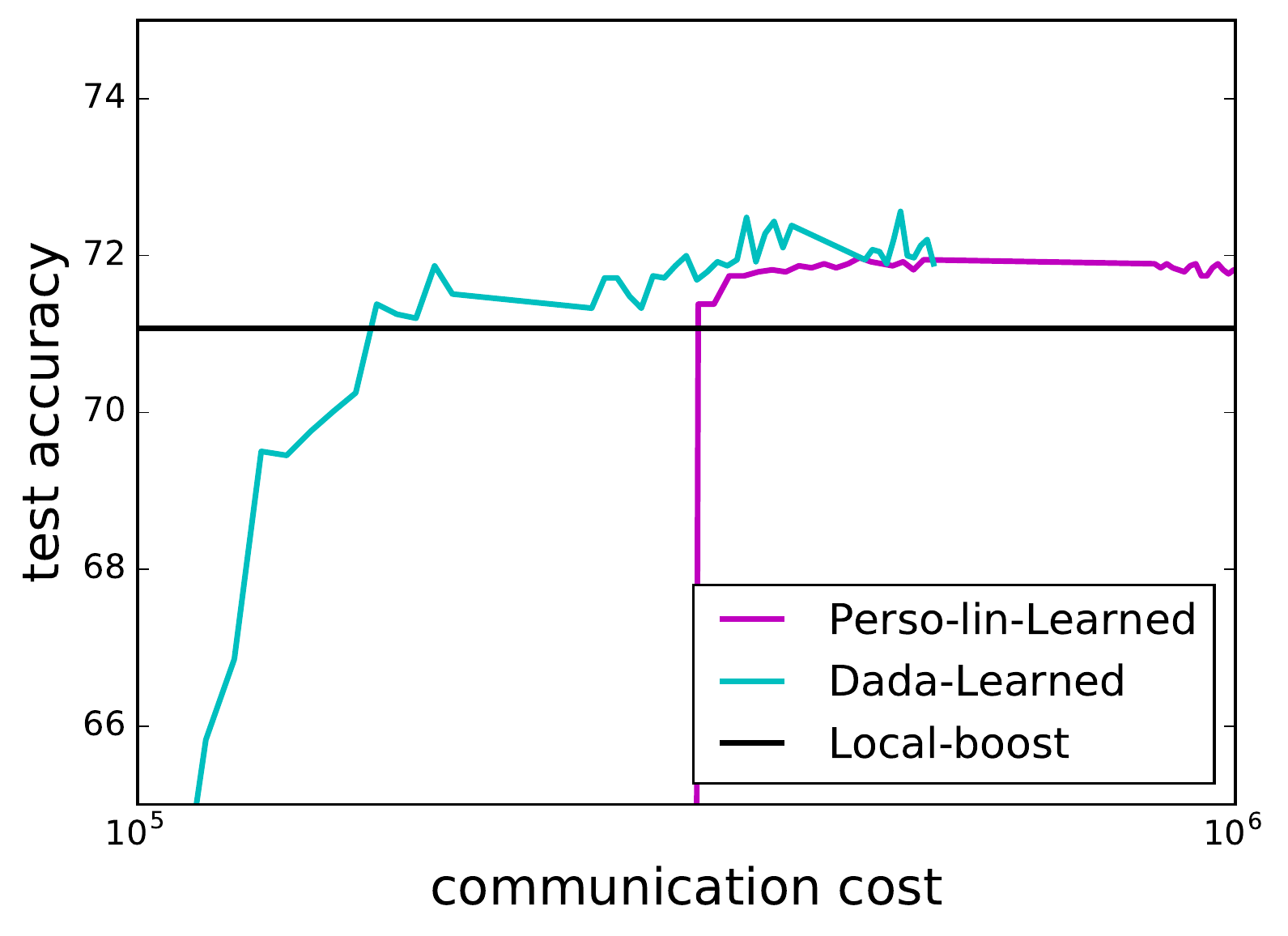}}

    \caption{Average test accuracies with respect to the communication cost (\# bits).}
    \label{fig:acc-com}
\end{figure}

\subsection{Standard Deviations due to Random Sampling}
We report the means and standard deviations of the test accuracies across 3
random runs to show how the randomness of the user selection in the
decentralized algorithms (Perso-linear-Learned and Dada-Learned) affects the results.
Notice that, because global and local methods are deterministic, their
standard deviations are zero.

\begin{table*}
  \caption{Test accuracy (\%) on real datasets. Best results in boldface and second best in italic.}
  \label{tab:acc}
  \centering
  ~\\\vspace{1mm}
  % \scriptsize
  \scalebox{0.85}{
  \begin{tabular}{lcccc}
    \bf{DATASET}      & {\sc \bf{HARWS}}  & {\sc \bf{VEHICLE}} & {\sc \bf{COMPUTER}} & {\sc \bf{SCHOOL}} \\
    \midrule
    Global-linear & 93.64 & 87.11 & 62.18 & 57.06 \\
    Local-linear & 92.69 & 90.38 & 60.68 & 70.43\\
    Perso-linear-Learned  & \first{96.87 $\pm$ 0.97} & \first{91.45 $\pm$ 0.16} & 69.10 $\pm$ 0.05 & \second{71.78 $\pm$ 0.42}\\
    Global-Adaboost & 94.34 & 88.02 & \second{69.16 }& 69.96 \\
    Local-Adaboost     & 93.16 & 90.59 & 66.61 & 70.69 \\
    Dada-Learned & \second{95.57 $\pm$ 0.21} & \second{91.04 $\pm$ 0.70} & \first{73.55 $\pm$ 0.28} & \first{72.47 $\pm$ 0.81}\\
  \end{tabular}}
\end{table*}

\subsection{Additional Synthetic Dataset: Moons100}

We report the experiments carried out on a synthetic dataset referred
to as {\sc Moons100}, which is also based on the two interleaving Moons
dataset but with a different ground-truth task similarity structure.
We consider a set of $K=100$ users, each associated with a personal rotation
axis drawn from a normal distribution.
We generate the local datasets by drawing a random number of points from
the two Moons distribution: uniformly between $3$ and $20$ for training and
$100$ for testing. We then apply the random rotation of the user to all its
points.
We further add random label noise by flipping the labels of $5\%$ of the
training data and embed all the points in $\mathbb{R}^D$ space by
adding random values for the $D-2$ empty axes.
In the experiments, the number of dimensions $D$ is fixed to $20$ and the
number of base functions $n$ to $200$. For \algonameL, the graph
is updated after every $200$ iterations of optimizing $\alpha$.
We build an oracle collaboration graph where the weights between
users are computed from the angle $\theta_{ij}$ between the users' rotation axes, using $w_{i,j} = \exp ( \frac{\cos(\theta_{ij}) - 1}{\sigma} )$ with $\sigma = 0.1$.
We drop all edges with negligible weights. 

% \begin{figure}
%     \centering \includegraphics[width=0.35\textwidth]{moons100-obj-func.pdf}
%     \caption{Evaluation of Problem~\eqref{eq:joint-obj} \wrt number of iterations.}
%   \label{fig:convergence}
% \end{figure}

\begin{figure}
    \centering \includegraphics[width=0.6\textwidth]{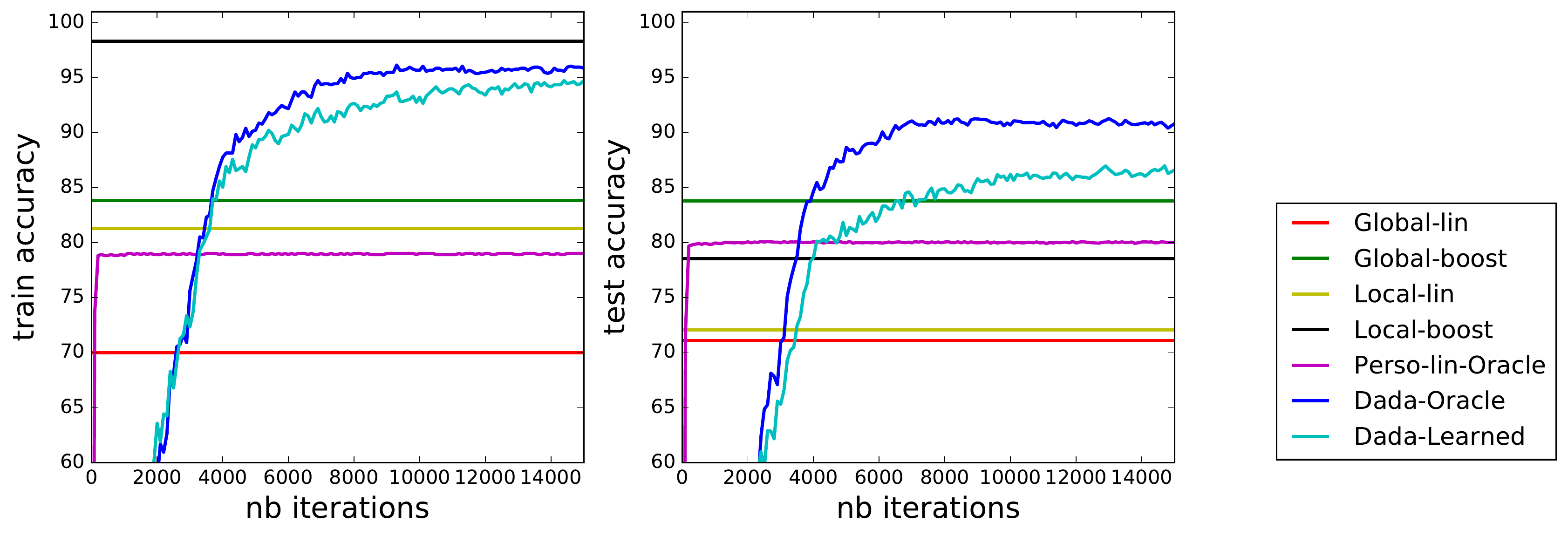}
    \caption{Training and test accuracy \wrt number of iterations on {\sc
    Moons100}.}
  \label{fig:100accuracies}
\end{figure}

\begin{figure}[t!]
    \centering
    \subfigure[Illustration of the learned and oracle graphs.]{
    \label{fig:visu}\includegraphics[width=0.44\textwidth]
    {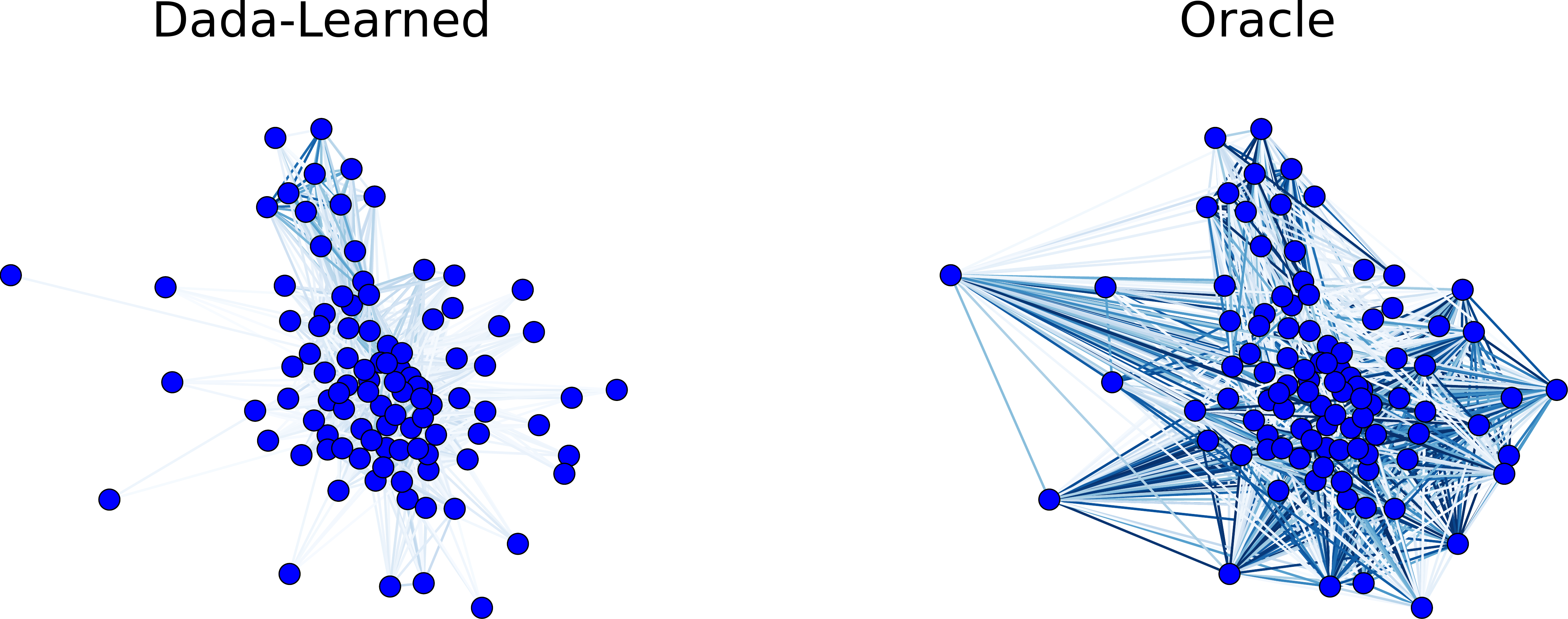}}
    \hspace{0.3cm}
    \subfigure[Impact of $\lambda$ on the sparsity of the learned graph and
    the test accuracy.]{
    \label{fig:sparsity}\includegraphics
    [width=.25\columnwidth]{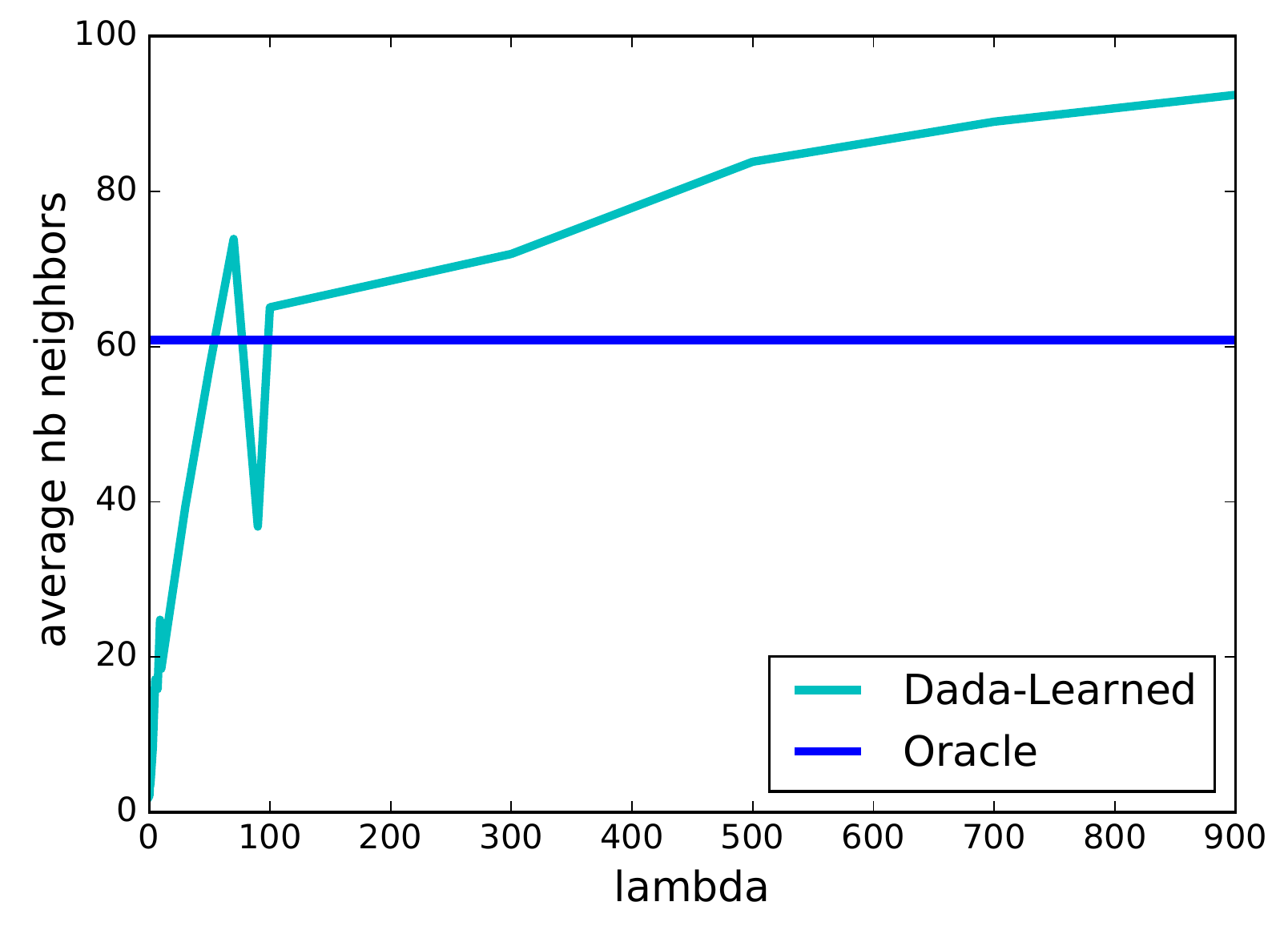}\includegraphics
    [width=.25\columnwidth]{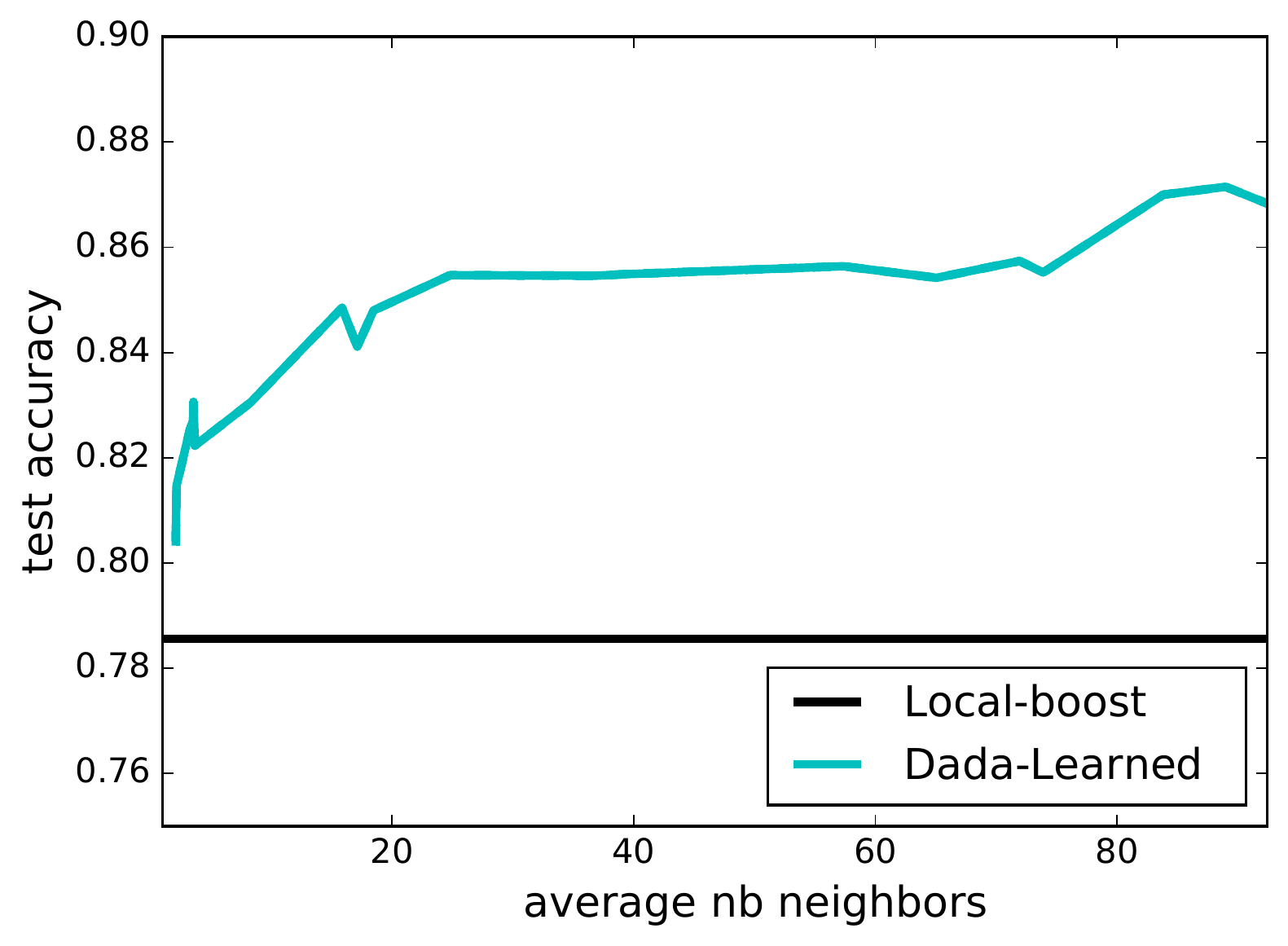}} \caption{Graph
    visualization and study of the impact of graph sparsity on {\sc
    Moons100}.}
    \label{fig:visu-sparsity}
\end{figure}

Figure~\ref{fig:100accuracies} shows the evolution of the training and test
accuracy over the iterations for the various approaches defined in the main
text.
% The curve {\bf \algoname exponential} corresponds to the results of \algoname while optimizing Problem~\eqref{eq:obj} with the fixed exponential graph, while the curve {\bf \algoname} represents the alternating optimization of Problem~\eqref{eq:joint-obj} minimized over $w$ every $200$ iterations of optimization over $\alpha$.
The results are consistent with those presented for {\sc Moons100} in
the main text. They clearly show the gain in accuracy provided by our
method:\algonameF and \algonameL are successful in reducing the overfitting of
\localboost, and allow higher test accuracy than both \globalboost and
\persolin.
Again, we see that our strategy to learn the collaboration graph can
effectively make up for the absence of knowledge about the
ground-truth similarities between users.
% Finally, Figure~\ref{fig:convergence} confirms that the alternating optimization of Problem~\eqref{eq:joint-obj} convergences even when the graph is optimized more rarely than the local models. 
% According to (Theorem~\ref{the:optimal}), the number of iterations needed to converge increases with the number of users $K$, but, in practice, the convergence is sped-up by parallel
% As expected and as predicted by the theory , the number of iterations needed to converge increases with the number of users $K$. This is compensated by the fact that more decentralized updates can be done in parallel when the network is larger.
At convergence, the learned graph has an average number of neighbors per node
$\shortexpect{|N_k|} = 42.64$, resulting in a communication complexity for updating the classifiers smaller than
the one of the ground-truth graph, which has $\shortexpect{|N_k|} = 60.86$ 
(see Figure~\ref{fig:visu-sparsity}).
We can make the graph even more sparse (hence reducing the
communication complexity of \algoname) by setting the hyper-parameter $\lambda$ to smaller values.
Of course, learning a sparser graph can also a negative impact on the accuracy
of the learned models.
In Figure~\ref{fig:sparsity}, we show this trade-off between the sparsity of
the graph and the test accuracy of the models for the {\sc Moons100} problem.
As expected, as $\lambda\rightarrow0$ the graph becomes sparser and
the test accuracy
tends to the performance of \localboost. % ($\shortexpect{|N_k|}$). 
Conversely, larger values of $\lambda$ induce denser graphs,
sometimes resulting in better accuracies but at the cost of higher
communication complexity.

\end{document}